\documentclass{article}




\usepackage[nonatbib,preprint]{neurips_2024}




\usepackage[pagebackref=true]{hyperref}  
\renewcommand*{\backrefalt}[4]{%
    \ifcase #1 \footnotesize{(Not cited.)}%
    \or        \footnotesize{(Cited on page~#2)}%
    \else      \footnotesize{(Cited on pages~#2)}%
    \fi}
\usepackage{url}            
\usepackage{booktabs}       
\usepackage{amsfonts}       
\usepackage{nicefrac}       
\usepackage{microtype}      

\usepackage{natbib}
\bibliographystyle{apalike}
\usepackage{sidecap}

\usepackage{xcolor}         

\usepackage{amssymb, amsmath, amsthm, latexsym}
\usepackage{url}
\usepackage{algorithm}
\usepackage{algorithmic}
\usepackage{tabularx}
\usepackage{paralist}
\usepackage{mathtools}

\usepackage{bbm} 
\usepackage{wrapfig}
\usepackage{makecell}
\usepackage{multirow}
\usepackage{booktabs}

\usepackage{nicefrac}       

\usepackage[flushleft]{threeparttable} 

\usepackage{caption}
\usepackage{multirow}
\usepackage{colortbl}
\definecolor{bgcolor}{rgb}{0.8,1,1}
\definecolor{bgcolor2}{rgb}{0.8,1,0.8}
\definecolor{niceblue}{rgb}{0.0,0.19,0.56}

\usepackage{hyperref}
\hypersetup{colorlinks,linkcolor={blue},citecolor={niceblue},urlcolor={blue}}

\usepackage{pifont}
\definecolor{PineGreen}{RGB}{0,110,51}
\definecolor{BrickRed}{RGB}{143,20,2}

\usepackage{tikz-cd} 

\usepackage{subfigure}

\newcommand{\R}{\mathbb{R}}
\newcommand{\eqdef}{:=}

\def\<#1,#2>{\left\langle #1,#2\right\rangle}

\newcolumntype{Y}{>{\centering\arraybackslash}X}

\usepackage{xspace}



\definecolor{mycolor}{RGB}{0,150,70}
\newcommand{\algname}[1]{{\color{mycolor}\sf  #1}\xspace}

\newcommand{\argmin}{\mathop{\arg\!\min}}

\usepackage[colorinlistoftodos,bordercolor=orange,backgroundcolor=orange!20,linecolor=orange,textsize=scriptsize]{todonotes}

\theoremstyle{plain}
\newtheorem{theorem}{Theorem}[section]

\newtheorem{lemma}[theorem]{Lemma}

\theoremstyle{definition}
\newtheorem{definition}[theorem]{Definition}
\newtheorem{assumption}[theorem]{Assumption}
\theoremstyle{remark}

\newcommand{\cA}{{\cal A}}
\newcommand{\cB}{{\cal B}}

\newcommand{\cG}{{\cal G}}

\newcommand{\cL}{{\cal L}}

\newcommand{\cN}{{\cal N}}
\newcommand{\cO}{{\cal O}}
\newcommand{\cP}{{\cal P}}
\newcommand{\cQ}{{\cal Q}}


\newcommand{\EE}{\mathbb{E}}

\newcommand{\PP}{\mathbb{P}}

\def\clip{\texttt{clip}}

\def\clip{\texttt{clip}}

\usepackage{hyperref}
\graphicspath{{plots/}}

\usepackage{makecell}

\usepackage{accents}
\newlength{\dhatheight}

\usepackage{pgfplotstable} 
\usetikzlibrary{automata, positioning, arrows, shapes, fit, calc, intersections}
\usepgfplotslibrary{statistics}
\include{figure}

\def\deltar{\delta_{\text{real}}}

\title{
Byzantine Robustness and Partial Participation Can Be Achieved at Once: Just Clip Gradient Differences\thanks{\bf The original version with all the results was prepared on September 29, 2023. The first version with minor modifications appeared on arXiv on November 23, 2023. The changes in the second version: a heuristic extension of the proposed method, new numerical results, a simpler presentation of the main results, and corrections of small typos.}}

\author{%
  Grigory Malinovsky\thanks{Part of the work was done when G.~Malinovsky was visiting MBZUAI.} \\
KAUST\thanks{King Abdullah University of Science and Technology.}, MBZUAI\thanks{Mohamed bin Zayed University of Artificial Intelligence.}
\And
Peter Richt\'arik\\
KAUST
\And
Samuel Horv\'ath\\
MBZUAI
\And
Eduard Gorbunov\thanks{Corresponding author: \texttt{eduard.gorbunov@mbzuai.ac.ae}.}\\
MBZUAI
}

\begin{document}

\maketitle

\begin{abstract}
  Distributed learning has emerged as a leading paradigm for training large machine learning models. However, in real-world scenarios, participants may be unreliable or malicious, posing a significant challenge to the integrity and accuracy of the trained models. Byzantine fault tolerance mechanisms have been proposed to address these issues, but they often assume full participation from all clients, which is not always practical due to the unavailability of some clients or communication constraints. In our work, we propose the first distributed method with client sampling and provable tolerance to Byzantine workers. The key idea behind the developed method is the use of gradient clipping to control stochastic gradient differences in recursive variance reduction. This allows us to bound the potential harm caused by Byzantine workers, even during iterations when all sampled clients are Byzantine. Furthermore, we incorporate communication compression into the method to enhance communication efficiency. Under general assumptions, we prove convergence rates for the proposed method that match the existing state-of-the-art (SOTA) theoretical results. We also propose a heuristic on adjusting any Byzantine-robust method to a partial participation scenario via clipping.
\end{abstract}

\section{Introduction}
\label{introduction}

Distributed optimization problems are a cornerstone of modern machine learning research. They naturally arise in scenarios where data is distributed across multiple clients; for instance, this is typical in Federated Learning (FL)~\citep{FEDLEARN, kairouz2021advances}. Such problems require specialized algorithms adapted to the distributed setup. Additionally, the adoption of distributed optimization methods is motivated by the sheer computational complexity involved in training modern machine learning models. Many models deal with massive datasets and intricate architectures, rendering training infeasible on a single machine \citep{gpt3costlambda}. Distributed methods, by parallelizing computations across multiple machines, offer a pragmatic solution to accelerate training and address these computational challenges, thus pushing the boundaries of machine learning capabilities.

To make distributed training accessible to the broader community, collaborative learning approaches have been actively studied in recent years \citep{volunteer_dl_async, hivemind_dmoe, atre2021distributed, dedloc}. In such applications, there is a high risk of the occurrence of so-called \emph{Byzantine workers} \citep{lamport1982byzantine, su2016fault}—participants who can violate the prescribed distributed algorithm/protocol either intentionally or simply because they are faulty. In general, such workers may even have access to some private data of certain participants and may collude to increase their impact on the training. Since the ultimate goal is to achieve robustness in the worst case, many papers in the field make no assumptions limiting the power of Byzantine workers. Clearly, in this scenario, standard distributed methods based on the averaging of received information (e.g., stochastic gradients) are not robust, even to a single Byzantine worker. Such a worker can send an arbitrarily large vector that can shift the method arbitrarily far from the solution. This aspect makes it non-trivial to design methods with provable robustness to Byzantines \citep{baruch2019little, xie2020fall}. Despite all the challenges, multiple methods are developed/analyzed in the literature \citep{alistarh2018byzantine, allen2020byzantine, wu2020federated, zhu2021broadcast, karimireddy2021learning, karimireddy2020byzantine, gorbunov2021secure, gorbunov2023variance, allouah2023fixing}.

However, literally, all existing methods with provable Byzantine robustness require \emph{the full participation of clients}. The requirement of full participation is impractical for modern distributed learning problems since they can have millions of clients \citep{bonawitz2017practical, niu2020billion}. In such scenarios, it is more natural to use sampling of clients to speed up the training. Moreover, some clients can be unavailable at certain moments, e.g., due to a poor connection, low battery, or simply because of the need to use the computing power for some other tasks. Although \emph{partial participation of clients} is a natural attribute of large-scale collaborative training, it is not studied under the presence of Byzantine workers. Moreover, this question is highly non-trivial: the existing methods can fail to converge if combined na\"ively with partial participation since Byzantine can form a majority during particular rounds and, thus, destroy the whole training with just one round of communications. \emph{Therefore, the field requires the development of new distributed methods that are provably robust to Byzantine attacks and can work with partial participation even when Byzantine workers form a majority during some rounds.}

\vspace{-0.8em}
\paragraph{Our Contributions}\label{section:contribution}

We develop Byzantine-tolerant Variance-Reduced \algname{MARINA} with Partial Participation (\algname{Byz-VR-MARINA-PP}, Algorithm~\ref{alg:byz_vr_marina}) -- the first distributed method having Byzantine robustness and allowing partial participation of clients. Our method uses variance reduction to handle Byzantine workers and clipping of stochastic gradient differences to bound the potential harm of Byzantine workers even when they form a majority during particular rounds of communication. To make the method even more communication efficient, we add communication compression. We prove the convergence of \algname{By-VR-MARINA-PP} for general smooth non-convex functions and Polyak-{\L}ojasiewicz functions. In the special case of full participation, our complexity bounds recover the ones for \algname{Byz-VR-MARINA} \citep{gorbunov2023variance} that are the current SOTA convergence results. Moreover, we prove that in some cases, partial participation is theoretically beneficial for \algname{By-VR-MARINA-PP}. We also propose a heuristic on how to use clipping to adapt any Byzantine-robust method to the partial participation setup and illustrate its performance in experiments.
\vspace{-0.5em}
\subsection{Related Work}
\vspace{-0.5em}

Below we overview closely related works. Additional discussion is deferred to Appendix~\ref{appendix:extra_related_work}.

\textbf{Byzantine robustness.} The primary vulnerability of standard distributed methods to Byzantine attacks lies in the aggregation rule: even one worker can arbitrarily distort the average. Therefore, many papers on Byzantine robustness focus on the application of robust aggregation rules, such as the geometric median \citep{pillutla2022robust}, coordinate-wise median, trimmed median \citep{yin2018byzantine}, Krum \citep{blanchard2017machine}, and Multi-Krum \citep{damaskinos2019aggregathor}. However, simply robustifying the aggregation rule is insufficient to achieve provable Byzantine robustness, as illustrated by \citet{baruch2019little} and \citet{xie2020fall}, who design special Byzantine attacks that can bypass standard defenses. This implies that more significant algorithmic changes are required to achieve Byzantine robustness, a point also formally proven by \citet{karimireddy2021learning}, who demonstrate that permutation-invariant algorithms -- i.e., algorithms independent of the order of stochastic gradients at each step -- cannot provably converge to any predefined accuracy in the presence of Byzantines.

\citet{wu2020federated} are the first who exploit variance reduction to tolerate Byzantine attacks. They propose and analyze the method called \algname{Byrd-SAGA}, which uses \algname{SAGA}-type \citep{defazio2014saga} gradient estimators on the good workers and geometric median for the aggregation. \citet{gorbunov2023variance} develop another variance-reduced method called \algname{Byz-VR-MARINA}, which is based on (conditionally biased) \algname{GeomSARAH}/\algname{PAGE}-type \citep{horvath2019stochastic, li2021page} gradient estimator and any robust aggregation in the sense of the definition from \citep{karimireddy2021learning, karimireddy2020byzantine}, and derive the improved convergence guarantees that are the current SOTA in the literature. There are also many other approaches and we discuss some of them in Appendix~\ref{appendix:extra_related_work}.

\textbf{Partial participation and client sampling.} In the context of Byzantine-robust learning, there exists one work that develops and analyzes the method with partial participation \citep{data2021byzantine}. However, this work relies on the restrictive assumption that the number of participating clients at each round is at least three times larger than the number of Byzantine workers. In this case, Byzantines cannot form a majority, and standard methods can be applied without any changes. In contrast, our method converges in more challenging scenarios, e.g., \algname{Byz-VR-MARINA-PP} provably converges even when the server samples one client, which can be Byzantine. The results from \citet{data2021byzantine} have some other noticeable limitations that we discuss in Appendix~\ref{appendix:extra_related_work}.

\section{Preliminaries}\label{section:prelim}

In this section, we formally introduce the problem, main definition, and assumptions used in the analysis. That is, we consider finite-sum distributed optimization problem\footnote{For simplicity, we assume that all regular workers have the same size of local datasets. Our analysis can be easily generalized to the case of different sizes of local datasets: this will affect only the value of $\cL_{\pm}$ from Assumption~\ref{assm:local} for some sampling strategies.}
\begin{align}
\label{eq:main_problem}
\textstyle
    \min_{x\in \R^d}\left\{f(x) \eqdef \frac{1}{G}\sum_{i\in \cG}f_i(x)\right\},\quad    f_i(x) \eqdef \frac{1}{m}\sum_{j=1}^m f_{i,j}(x) \quad \forall i\in \cG, 
\end{align}
where $\cG$ is a set of regular clients of size $G \eqdef |\cG|$. In the context of distributed learning, $f_i:\R^d \to \R$ corresponds to the loss function on the data of client $i$, and $f_{i,j}:\R^d \to \R$ is the loss computed on the $j$-th sample from the dataset of client $i$. Next, we assume that the set of all clients taking part in the training is $[n] = \{1,2,\ldots, n\}$ and $\cG \subseteq [n]$. The remaining clients $\cB \eqdef [n]\setminus \cG$ are Byzantine ones. We assume that $B \eqdef |\cB| \eqdef \deltar n \leq \delta n$, where $\deltar$ is an exact ratio of Byzantine workers and $\delta$ is a known upper bound for $\deltar$. We also assume that $0 \leq \deltar \leq \delta <\nicefrac{1}{2}$ since otherwise Byzantine workers form a majority and problem \eqref{eq:main_problem} becomes impossible to solve in general.

\textbf{Notation.} We use a standard notation for the literature on distributed stochastic optimization. Everywhere in the text $\|x\|$ denotes a standard $\ell_2$-norm of $x\in\R^d$, $\langle a, b \rangle$ refers to the standard inner product of vectors $a,b \in \R^d$. The clipping operator is defined as follows: $\clip_\lambda(x) \eqdef \min\{1, \nicefrac{\lambda}{\|x\|}\}x$ for $x\neq 0$ and $\clip_\lambda(0) \eqdef 0$. Finally, $\PP\{A\}$ denotes the probability of event $A$, $\EE[\xi]$ is the full expectation of random variable $\xi$, $\EE[\xi \mid A]$ is the expectation of $\xi$ conditioned on the event $A$. We also sometimes use $\EE_{k}[\xi]$ to denote an expectation of $\xi$ w.r.t.\ the randomness coming from step $k$.

\textbf{Robust aggregator.} We follow the definition from \citep{gorbunov2023variance} of $(\delta,c)$-robust aggregation, which is a generalization of the definitions proposed by \citet{karimireddy2021learning, karimireddy2020byzantine}.

\begin{definition}[$(\delta, c)$-Robust Aggregator] 
\label{def:aragg}
Assume that $\left\{x_1, x_2, \ldots, x_n\right\}$ is such that there exists a subset $\mathcal{G} \subseteq[n]$ of size $|\mathcal{G}|=G \geq(1-\delta) n$ for $\delta\leq \delta_{\max} < 0.5$ and there exists $\sigma \geq 0$ such that $\frac{1}{G(G-1)} \sum_{i, l \in \mathcal{G}} \mathbb{E}\left[\left\|x_i-x_l\right\|^2\right] \leq \sigma^2$ where the expectation is taken w.r.t.\ the randomness of $\left\{x_i\right\}_{i \in \mathcal{G}}$. We say that the quantity $\widehat{x}$ is $(\delta, c)$-Robust Aggregator $\left(\delta, c)\right.$-\texttt{RAgg}) and write $\widehat{x} = \texttt{RAgg}\left(x_1, \ldots, x_n\right)$ for some $c>0$, if the following inequality holds:
\begin{equation}
\textstyle
    \mathbb{E}\left[\|\widehat{x}-\bar{x}\|^2\right] \leq c \delta \sigma^2, \label{eq:robust_aggr_definition}
\end{equation}
where $\bar{x}\eqdef\frac{1}{|\mathcal{G}|} \sum_{i \in \mathcal{G}} x_i$. If additionally $\widehat{x}$ is computed without the knowledge of $\sigma^2$, we say that $\widehat{x}$ is $(\delta, c)$-Agnostic Robust Aggregator $\left(\delta, c)\right.$-\texttt{ARAgg} and write $\widehat{x} = \texttt{ARAgg}\left(x_1, \ldots, x_n\right)$.
\end{definition}

One can interpret the definition as follows. Ideally, we would like to filter out all Byzantine workers and compute just an average $\bar x$ over the set of good clients. However, this is impossible in general since we do not know apriori who are Byzantine workers. Instead of this, it is natural to expect that the aggregation rule approximates the ideal average up in a certain sense, e.g., in terms of the expected squared distance to $\bar x$. As \citet{karimireddy2021learning} formally show, in terms of such criterion ($\EE[\|\widehat x - \bar x\|^2]$), the definition of $\left(\delta, c)\right.$-\texttt{RAgg} cannot be improved (up to the numerical constant). Moreover, standard aggregators such as Krum \citep{blanchard2017machine}, geometric median, and coordinate-wise median do not satisfy Definition~\ref{def:aragg} \citep{karimireddy2021learning}, though another popular standard aggregation rule called coordinate-wise trimmed mean \citep{yin2018byzantine} satisfies Definition~\ref{def:aragg} as shown by \citet{allouah2023fixing} through the more general definition of robust aggregation. To address this issue, \citet{karimireddy2021learning} develop the aggregator called \algname{CenteredClip} and prove that it fits the definition of $\left(\delta, c)\right.$-\texttt{RAgg}. \citet{karimireddy2020byzantine} propose a procedure called \algname{Bucketing} that fixes Krum, geometric median, and coordinate-wise median, i.e., with \algname{Bucketing} Krum, geometric, and coordinate-wise median become $\left(\delta, c)\right.$-\texttt{ARAgg}, which is important for our algorithm since the variance of the vectors received from regular workers changes over time in our method. We notice here that $\delta$ is a part of the input that should satisfy $\deltar \leq \delta \leq \delta_{\max}$.

\textbf{Compression operators.} In our work, we use standard unbiased compression operators with relatively bounded variance \citep{khirirat2018distributed, horvath2019stochastic}.

\begin{definition}[Unbiased compression]
\label{def:Q}
Stochastic mapping $\mathcal{Q}: \mathbb{R}^d \rightarrow \mathbb{R}^d$ is called unbiased compressor/compression operator if there exists $\omega \geq 0$ such that for any $x \in \mathbb{R}^d$
$\mathbb{E}[\mathcal{Q}(x)]=x, \quad \mathbb{E}\left[\|\mathcal{Q}(x)-x\|^2\right] \leq \omega\|x\|^2 .$
For the given unbiased compressor $\mathcal{Q}(x)$, one can define the expected density\footnote{This quantity is well-suited for sparsification-type compression operators like random sparsification \citep{stich2018sparsified} and $1$-level $\ell_2$-quantization \citep{alistarh2017qsgd}. For other compressors, such as quantization with more than one level \citep{goodall1951television, roberts1962picture},  $\zeta_{\cQ}$ is not the main characteristic describing their properties.} as $\zeta_{\mathcal{Q}}\eqdef$ $\sup _{x \in \mathbb{R}^d} \mathbb{E}\left[\|\mathcal{Q}(x)\|_0\right]$, where $\|y\|_0$ is the number of non-zero components of $y \in \mathbb{R}^d$.
\end{definition}

In this definition, parameter $\omega$ reflects how lossy the compression operator is: the larger $\omega$ the more lossy the compression. For example, this class of compression operators includes random sparsification (RandK) \citep{stich2018sparsified} and quantization \citep{goodall1951television, roberts1962picture, alistarh2017qsgd}. For RandK compression $\omega = \frac{d}{K} - 1, \zeta_{\cQ} = K$ and for $\ell_2$-quantization $\omega = \sqrt{d}-1, \zeta_{\cQ} = \sqrt{d}$, see the proofs in \citep{beznosikov2020biased}.

\textbf{Assumptions.} Up to a couple of assumptions that are specific to our work, we use the same assumptions as in \citep{gorbunov2023variance}. We start with two new assumptions.

\begin{assumption}[Bounded \texttt{ARAgg}]
\label{assm:bounded-aggr}
    We assume that the server applies aggregation rule $\cA$ such that $\cA$ is $(\delta,c)$-\texttt{ARAgg} and there exists constant $F_{\cA} > 0$ such that for any inputs $x_1,\ldots, x_n \in \R^d$ the norm of the aggregator is not greater than the maximal norm of the inputs:
        $\left\| \cA\left(x_1, \ldots, x_n\right)  \right\| \leq F_{\cA} \max_{i\in [n]} \|x_i\|.$
\end{assumption}

The above assumption is satisfied for popular $(\delta,c)$-robust aggregation rules presented in the literature \citep{karimireddy2021learning, karimireddy2020byzantine}. Therefore, this assumption is more a formality than a real limitation: it is needed to exclude some pathological examples of $(\delta,c)$-robust aggregation rules, e.g., for any $\cA$ that is $(\delta,c)$-\texttt{RAgg} one can construct unbounded $(\delta,2c)$-\texttt{RAgg} as $\overline\cA = \cA + X$, where $X$ is a random sample from the Gaussian distribution $\cN(0, c\delta\sigma^2)$.

Next, for part of our results, we also make the following assumption.

\begin{assumption}[Bounded compressor (optional)]
\label{assm:bounded-compressor}
    We assume that workers use compression operator $\cQ$ satisfying Definition~\ref{def:Q} and bounded as follows:
        $\left\| \cQ(x)\right\| \leq D_{Q} \|x\| \quad \forall x \in \R^d.$
\end{assumption}

For example, RandK and $\ell_2$-quantization meet this assumption with $D_\cQ = \frac{d}{K}$ and $D_{\cQ} = \sqrt{d}$ respectively. In general, constant $D_{\cQ}$ can be large (proportional to $d$). However, in practice, one can use RandK with $K = \frac{d}{100}$ and, thus, have moderate $D_{\cQ} = 100$. We also have the results without Assumption~\ref{assm:bounded-compressor}, but with worse dependence on some other parameters, see  Section~\ref{section:convergence_results}.

Next, we assume that good workers have $\zeta^2$-heterogeneous local loss functions.

\begin{assumption}[$\zeta^2$-heterogeneity] 
\label{assm:het_simplified}
We assume that good clients have $\zeta^2$-heterogeneous local loss functions for some $\zeta \geq 0$, i.e.,
$
\frac{1}{G} \sum_{i \in \mathcal{G}}\left\|\nabla f_i(x)-\nabla f(x)\right\|^2 \leq \zeta^2 \quad \forall x \in \mathbb{R}^d.
$
\end{assumption}

The above assumption is quite standard for the literature on Byzantine robustness \citep{wu2020federated, karimireddy2020byzantine, gorbunov2023variance, allouah2023fixing}. Moreover, some kind of a bound on the heterogeneity of good clients is necessary since otherwise Byzantine robustness cannot be achieved in general. In the appendix, all proofs are given under a more general version of Assumption~\ref{assm:het_simplified}, see Assumption~\ref{assm:het}. Finally, the case of homogeneous data ($\zeta = 0$) is also quite popular for collaborative learning \citep{diskin2021distributed, kijsipongse2018hybrid}.

The following assumption is classical for the literature on non-convex optimization.
\begin{assumption}[Smoothness (simplified)]
\label{assm:smoothness_simplified}
We assume that for all $i\in \cG$ and $j\in [m]$ there exists $\cL \geq 0$ such that $f_{i,j}$ is $\cL$-smooth, i.e., for all $x, y \in \mathbb{R}^d$
\begin{equation}
\textstyle
    \|\nabla f_{i,j}(x) - \nabla f_{i,j}(y)\| \leq \cL \|x - y\|. \label{eq:f_i_j_smooth_simple}
\end{equation}
Moreover, we assume that $f$ is uniformly lower bounded by $f^* \in \mathbb{R}$, i.e., $f^*\eqdef\inf _{x \in \mathbb{R}^d} f(x)$.
\end{assumption}
For the sake of simplicity, we do not differentiate between various notions of smoothness in the main text. However, our analysis takes into account the differences between smoothness constants, similarity of local functions, and sampling strategy (see Appendix~\ref{appendix:refined_assumptions}).

\begin{algorithm*}[t]
   \caption{\algname{Byz-VR-MARINA-PP}: Byzantine-tolerant \algname{VR-MARINA} with Partial Participation}\label{alg:byz_vr_marina}
\begin{algorithmic}[1]
   \STATE {\bfseries Input:} vectors $x^0, g^0 \in \R^d$, stepsize $\gamma$, minibatch size $b$, probability $p\in(0,1]$, number of iterations $K$, $(\delta,c)$-\texttt{ARAgg}, clients' sample size $1 \leq C \leq \widehat{C} \leq n$, clipping coefficients $\{\alpha_{k}\}_{k\geq 1}$ 
   \FOR{$k=0,1,\ldots,K-1$}
   \STATE Get a sample from Bernoulli distribution with parameter $p$: $c_k \sim \text{Be}(p)$
   \STATE Sample the set of clients $S_k \subseteq [n]$, $|S_k| = C$ if $c_k = 0$; otherwise $|S_k| = \widehat{C}$
   \STATE Broadcast $g^k$, $c_k$ to all workers
   \FOR{$i \in \cG \cap S_k$ in parallel} 
   \STATE $x^{k+1} = x^k - \gamma g^k$ and $\lambda_{k+1} = \alpha_{k+1}\|x^{k+1} - x^k\|$
   \STATE  Set $g_i^{k+1} = \begin{cases} \nabla f_i(x^{k+1}),& \text{if } c_k = 1,\\ g^k + \clip_{\lambda_{k+1}}\left(\cQ\left(\widehat{\Delta}_i(x^{k+1}, x^k)\right)\right),& \text{otherwise,}\end{cases}$\\ where $\widehat{\Delta}_i(x^{k+1}, x^k)$ is a minibatched estimator of $\nabla f_i(x^{k+1}) - \nabla f_i(x^k)$, $\cQ(\cdot)$ for $i\in\cG \cap S_k$ are computed independently
   \ENDFOR
   \STATE $g^{k+1} = \begin{cases} \texttt{ARAgg}\left(\{g_i^{k+1}\}_{i\in S_k}\right),& \text{if } c_k = 1, \\
       g^k + \texttt{ARAgg}\left(\left\{\clip_{\lambda_{k+1}}\left(\cQ\left(\widehat{\Delta}_i(x^{k+1}, x^k)\right)\right)\right\}_{i\in S_k}\right),&\text{otherwise}\end{cases}$
   \ENDFOR
\end{algorithmic}
\end{algorithm*}

Finally, we also consider functions satisfying Polyak-Łojasiewicz (PŁ) condition \citep{polyak1963gradient, lojasiewicz1963topological}. This assumption belongs to the class of assumptions on the structured non-convexity that allows achieving linear convergence \citep{necoara2019linear}.

\begin{assumption}[PŁ condition (optional)]
\label{assm:PL}
We assume that function $f$ satisfies Polyak-Łojasiewicz (PŁ) condition with parameter $\mu > 0$, i.e., for all $x \in \mathbb{R}^d$ there exists  $f^*\eqdef\inf _{x \in \mathbb{R}^d} f(x)$ such that
  $ \|\nabla f(x)\|^2 \geq 2 \mu\left(f(x)-f^*\right) .$

\end{assumption}
\section{New Method: \algname{Byz-VR-MARINA-PP}}\label{section:method}

We propose a new method called Byzantine-tolerant Variance-Reduced \algname{MARINA} with Partial Participation (\algname{Byz-VR-MARINA-PP}, Algorithm~\ref{alg:byz_vr_marina}). Our method extends \algname{Byz-VR-MARINA} \citep{gorbunov2023variance} to the partial participation case via the proper usage of the clipping operator. To illustrate how \algname{Byz-VR-MARINA-PP} works, we first consider a special case of full participation.

\textbf{Special case: \algname{Byz-VR-MARINA}.} If all clients participate at each round ($S_k \equiv [n]$) and clipping is turned off ($\lambda_{k} \equiv +\infty$), then \algname{Byz-VR-MARINA-PP} reduces to \algname{Byz-VR-MARINA} that works as follows. Consider the case when no compression is applied ($\cQ(x) = x$) and $\widehat{\Delta}_i(x^{k+1}, x^k) = \nabla f_{i,j_k}(x^{k+1}) - \nabla f_{i,j_k}(x^{k})$, where $j_k$ is sampled uniformly at random from $[m]$, $i\in \cG$. Then, regular workers compute \algname{GeomSARAH}/\algname{PAGE} gradient estimator at each step: for $i\in \cG$
\begin{equation*}
\textstyle
    g_i^{k+1} = \begin{cases}
        \nabla f_i(x^{k+1}), \quad \text{with probability } p,\\ g^k + \nabla f_{i,j_k}(x^{k+1}) - \nabla f_{i,j_k}(x^{k}),\quad\text{otherwise} 
    \end{cases}
\end{equation*}
With small probability $p$, good workers compute full gradients, and with larger probability $1-p$ they update their estimator via adding stochastic gradient difference. To balance the oracle cost of these two cases, one can choose $p \sim \nicefrac{1}{m}$ (for $b$-size minibatched estimator -- $p\sim \nicefrac{b}{m}$). Such estimators are known to be optimal for finding stationary points in the stochastic first-order optimization \citep{fang2018spider, arjevani2023lower}. Next, good workers send $g_i^{k+1}$ or $\nabla f_{i,j_k}(x^{k+1}) - \nabla f_{i,j_k}(x^{k})$ to the server who robustly aggregate the received vectors. Since estimators are conditionally biased, i.e., $\EE[g_i^{k+1}\mid x^{k+1}, x^k] \neq \nabla f_i(x^{k+1})$, the additional bias coming from the aggregation does not cause significant issues in the analysis or practice. Moreover, the variance of $\{g_i^{k+1}\}_{i\in \cG}$ w.r.t.\ the sampling of the stochastic gradients is proportional to $\|x^{k+1} - x^k\|^2 \to 0$ with probability $1-p$ (due to Assumption~\ref{assm:local}) that progressively limits the effect of Byzantine attacks. For a more detailed explanation of why recursive variance reduction works better than \algname{SAGA}/\algname{SVRG}-type variance reduction, we refer to \citep{gorbunov2023variance}. Arbitrary sampling allows the improvement of the dependence on the smoothness constants. Unbiased communication compression also naturally fits the framework since it is applied to the stochastic gradient difference, meaning that the variance of $\{g_i^{k+1}\}_{i\in \cG}$ w.r.t.\ the sampling of the stochastic gradients and compression remains proportional to $\|x^{k+1} - x^k\|^2$ with probability $1-p$.

\textbf{New ingredients: client sampling and clipping.} The algorithmic novelty of \algname{Byz-VR-MARINA-PP} in comparison to \algname{Byz-VR-MARINA} is twofold: with (typically large) probability $1-p$ only $C$ clients sampled uniformly at random from the set of all clients participate at each round, and clipping is applied to the compressed stochastic gradient differences. With a small probability $p$, a larger number\footnote{As we show next, it is sufficient to take $\widehat C \geq \max\{1, \nicefrac{\deltar n}{\delta}\}$ similarly to \citep{data2021byzantine}. However, in contrast to the approach from \citet{data2021byzantine}, \algname{Byz-VR-MARINA-PP} requires such communications only with small probability $p$.} of clients $\widehat C \leq n$ takes part in the communication. The main role of clipping is to ensure that the method can withstand the attacks of Byzantines when they form a majority or, more precisely when there are more than $\delta C$ Byzantine workers among the sampled ones. \emph{Indeed, without clipping (or some other algorithmic changes) such situations are critical for convergence: Byzantine workers can shift the method arbitrarily far from the solution, e.g., they can collectively send some vector with the arbitrarily large norm}. In contrast, \algname{Byz-VR-MARINA-PP} tolerates any attacks even when all sampled clients are Byzantine workers since the update remains bounded due to the clipping. Via choosing $\lambda_{k+1}\sim \|x^{k+1} - x^k\|$ we ensure that the norm of transmitted vectors decreases with the same rate as it does in \algname{Byz-VR-MARINA} with full client participation. Finally, with probability $1-p$ regular workers can transmit just compressed vectors and leave the clipping operation to the server since Byzantines can ignore clipping operation.

\section{Convergence Results}\label{section:convergence_results}

We define $\cG_C^k = \cG\cap S_k$ and $G_{C}^k = |\cG_C^k|$ and $\binom{n}{k} = \frac{n !}{k !(n-k) !}$ represents the binomial coefficient. We also use the following probabilities:
\begin{align*}
\textstyle
p_G &\eqdef  \PP\left\lbrace G_C^k \geq\left(1-\delta\right) C\right\rbrace = \textstyle\sum_{\lceil(1-\delta)C\rceil\leq t \leq C} \nicefrac{\binom{G}{t}\binom{n-G}{C-t}}{\binom{n}{C}},\\
\mathcal{P}_{\mathcal{G}^k_C} &\eqdef  \PP\left\lbrace i \in \mathcal{G}_C^k \mid G_C^k \geq\left(1-\delta\right) C\right\rbrace = \nicefrac{C}{np_G} \cdot \textstyle\sum_{\lceil(1-\delta)C\rceil\leq t \leq C} \nicefrac{\binom{G-1}{t-1}\binom{n-G}{C-t}}{\binom{n-1}{C-1}}.
\end{align*}
These probabilities naturally appear in the analysis and statements of the theorems. When $c_k = 0$, then server samples $C$ clients, and two situations can appear: either $G_C^k$ is at least $\left(1-\delta\right) C$ meaning that the aggregator can ensure robustness according to Definition~\ref{def:aragg} or $G_C^k < \left(1-\delta\right) C$. Probability $p_G$ is the probability of the first event, and the second event implies that the aggregation can be spoiled by Byzantine workers (but clipping bounds the ``harm''). Finally, we use $\mathcal{P}_{\mathcal{G}^k_C}$ in the computation of some conditional expectations when the first event occurs. The mentioned probabilities can be easily computed for some special cases. For example, if $C = 1$, then $p_G = \nicefrac{G}{n}$ and $\mathcal{P}_{\mathcal{G}^k_C} = \nicefrac{1}{G}$; if $C = 2$, then $p_G = \nicefrac{G(G-1)}{n(n-1)}$ and $\mathcal{P}_{\mathcal{G}^k_C} = \nicefrac{2}{G}$; finally, if $C = n$, then  $p_G = 1$ and $\mathcal{P}_{\mathcal{G}^k_C} = 1$.

The next theorem is our main convergence result for general unbiased compression operators.
\begin{theorem}\label{thm:main_result_1}
 Let Assumptions \ref{assm:bounded-aggr}, \ref{assm:het_simplified}, \ref{assm:smoothness_simplified} hold and $\lambda_{k+1} = 2\cL \left\|x^{k+1} - x^k\right\|$. Assume that $0<\gamma \leq \nicefrac{1}{\cL(1+\sqrt{A})},$ 
where constant $A$ is defined as 
\begin{eqnarray}
\textstyle
    A &=& \frac{32p_G G\mathcal{P}_{\mathcal{G}^k_C}}{p^2(1-\delta)C} \left( 30\omega + 11\right) (1 + 2c\delta) + \frac{16(1-p_G)(1+4F_{\cA}^2)}{p^2}. \label{eq:A_unbounded_compr}
\end{eqnarray}
Then for all $K \geq 0$ the iterates produced by \algname{Byz-VR-MARINA-PP} (Algorithm \ref{alg:byz_vr_marina}) satisfy
\begin{equation}
\textstyle
    \mathbb{E}\left[\left\|\nabla f\left(\widehat{x}^K\right)\right\|^2\right] \leq \frac{2 \Phi^0}{\gamma(K+1)}+\frac{4 \widehat{D} \zeta^2}{p}, \label{eq:main_result_non_convex}
\end{equation}
where $\widehat{D} = \frac{ 2\delta\mathcal{P}_{\mathcal{G}^k_{\widehat{C}}} }{1-\delta} \left(\frac{6cG}{\widehat C} + p \right)$ and $\widehat{x}^K$ is chosen uniformly at random from $x^0, x^1, \ldots, x^K$, and $\Phi^0=$ $f\left(x^0\right)-f^*+\frac{2\gamma}{p}\left\|g^0-\nabla f\left(x^0\right)\right\|^2$. If, in addition, Assumption~\ref{assm:PL} holds and $0<\gamma \leq \nicefrac{1}{\cL(1+\sqrt{2 A})},$ then for all $K \geq 0$ the iterates produced by \algname{Byz-VR-MARINA-PP} (Algorithm \ref{alg:byz_vr_marina}) with $\rho = \min\left\{\gamma\mu, \frac{p}{8}\right\}$ satisfy
\begin{equation}
\textstyle
    \mathbb{E}\left[f\left(x^K\right)-f\left(x^*\right)\right] \leq\left(1-\rho\right)^K \Phi^0+\frac{4\widehat{D}\zeta^2\gamma}{p\rho}, \label{eq:main_result_PL}
\end{equation}
where $\Phi^0=$ $f\left(x^0\right)-f^*+\frac{4\gamma}{p}\left\|g^0-\nabla f\left(x^0\right)\right\|^2$.
\end{theorem}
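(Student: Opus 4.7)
The plan is to introduce the Lyapunov function $\Phi^k \eqdef f(x^k)-f^* + \frac{c\gamma}{p}\|g^k-\nabla f(x^k)\|^2$ (with $c=2$ for the non-convex case and $c=4$ for the PŁ case) and derive a one-step recursion $\mathbb{E}_k[\Phi^{k+1}] \leq (1-\rho)\Phi^k - \tfrac{\gamma}{2}\|\nabla f(x^k)\|^2 + \frac{2\widehat{D}\zeta^2\gamma}{p}$ from which both \eqref{eq:main_result_non_convex} and \eqref{eq:main_result_PL} follow by telescoping/iterating. The first building block is the standard descent lemma: $\cL$-smoothness of $f$ combined with $x^{k+1}=x^k-\gamma g^k$ gives $f(x^{k+1}) \leq f(x^k) - \tfrac{\gamma}{2}\|\nabla f(x^k)\|^2 + \tfrac{\gamma}{2}\|g^k-\nabla f(x^k)\|^2 - \tfrac{\gamma(1-\gamma\cL)}{2}\|g^k\|^2$, which isolates the gradient-estimator error as the key quantity to control.

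The second (and main) building block is a recursive bound on $\mathbb{E}_k\bigl[\|g^{k+1}-\nabla f(x^{k+1})\|^2\bigr]$. I would split according to the Bernoulli switch $c_k$. On the event $\{c_k=1\}$ (probability $p$) workers send full local gradients and the server applies \texttt{ARAgg} to a random sample of $\widehat{C}$ clients; here the $(\delta,c)$-\texttt{ARAgg} property together with Assumption~\ref{assm:het_simplified} bounds the aggregation error by $c\delta\sigma_{\widehat{C}}^2$ with $\sigma_{\widehat{C}}^2 \lesssim \mathcal{P}_{\mathcal{G}^k_{\widehat{C}}}\zeta^2$, accounting for the $\widehat{D}\zeta^2/p$ term. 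On the event $\{c_k=0\}$ (probability $1-p$) a further split is needed: when $G_C^k \geq (1-\delta)C$ (probability $p_G$), the \texttt{ARAgg} guarantee on the clipped compressed differences yields a bound proportional to $\|x^{k+1}-x^k\|^2=\gamma^2\|g^k\|^2$, where the constants $30\omega+11$ and $1+2c\delta$ inside $A$ come from (i) bounding compression variance via Definition~\ref{def:Q}, (ii) bounding minibatch variance via Assumption~\ref{assm:local}, and (iii) controlling the bias introduced by clipping (the choice $\lambda_{k+1}=2\cL\|x^{k+1}-x^k\|$ is tailored so that, in expectation, clipping is essentially inactive on regular workers). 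When $G_C^k<(1-\delta)C$ (probability $1-p_G$) the aggregator provides no guarantee, and this is precisely where Assumption~\ref{assm:bounded-aggr} and clipping become essential: the norm of \texttt{ARAgg} output is at most $F_{\cA}\lambda_{k+1} = 2F_{\cA}\cL\|x^{k+1}-x^k\|$, which still yields an $\|x^{k+1}-x^k\|^2$-proportional bound and produces the $(1-p_G)(1+4F_{\cA}^2)/p^2$ summand in $A$.

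Combining the two building blocks and substituting $\|x^{k+1}-x^k\|^2=\gamma^2\|g^k\|^2$ gives a recursion of the form $\mathbb{E}_k\|g^{k+1}-\nabla f(x^{k+1})\|^2 \leq (1-p)\|g^k-\nabla f(x^k)\|^2 + A\gamma^2\cL^2\|g^k\|^2 + 2\widehat{D}\zeta^2$ (for the non-convex bound; a factor $(1-p/2)$ in front of the first term is used for the PŁ bound, yielding the $p/8$ in $\rho$). Multiplying by $c\gamma/p$ and adding to the descent lemma, the $\|g^k\|^2$ terms cancel exactly when $\gamma \leq 1/(\cL(1+\sqrt{A}))$ (respectively $1/(\cL(1+\sqrt{2A}))$), producing $\mathbb{E}_k[\Phi^{k+1}] \leq \Phi^k - \tfrac{\gamma}{2}\|\nabla f(x^k)\|^2 + \tfrac{2\widehat{D}\zeta^2\gamma}{p}$. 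Telescoping over $k=0,\dots,K$ and invoking the uniform choice of $\widehat{x}^K$ yields \eqref{eq:main_result_non_convex}; for \eqref{eq:main_result_PL} I would invoke Assumption~\ref{assm:PL} to convert $-\tfrac{\gamma}{2}\|\nabla f(x^k)\|^2$ into $-\gamma\mu(f(x^k)-f^*)$, combine with the $1-p/8$ contraction on the variance term, and iterate the resulting contraction $\mathbb{E}[\Phi^{k+1}]\leq(1-\rho)\mathbb{E}[\Phi^k]+\tfrac{4\widehat{D}\zeta^2\gamma}{p}$.

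The main obstacle I anticipate is the careful case analysis on the event $G_C^k<(1-\delta)C$: one must simultaneously show that the clipping threshold $\lambda_{k+1}$ is large enough that the \emph{conditional} expectation of the clipped compressed difference from a regular worker agrees with the uncompressed difference up to a manageable bias (so that the variance-reduction mechanism still functions), while being small enough that the Byzantine-dominated \texttt{ARAgg} output contributes only a $\gamma^2\|g^k\|^2$-proportional term. Tracking the conditional probabilities $p_G$ and $\mathcal{P}_{\mathcal{G}^k_C}$ through this split, and verifying that the constants assemble into exactly the $A$ given in \eqref{eq:A_unbounded_compr}, is the most delicate bookkeeping in the proof.
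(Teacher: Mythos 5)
Your proposal follows the same route as the paper: the Lyapunov function $\Phi^k = f(x^k)-f^*+\frac{c\gamma}{p}\|g^k-\nabla f(x^k)\|^2$, the descent lemma for $x^{k+1}=x^k-\gamma g^k$, a three-way case split on $\{c_k=1\}$, $\{c_k=0,\,G_C^k\geq(1-\delta)C\}$, $\{c_k=0,\,G_C^k<(1-\delta)C\}$, the use of Assumption~\ref{assm:bounded-aggr} together with $\lambda_{k+1}\sim\|x^{k+1}-x^k\|$ to tame the Byzantine-majority event, and the final cancellation of the $\|g^k\|^2$ terms via the stepsize restriction $\gamma\leq\nicefrac{1}{\cL(1+\sqrt{A})}$ followed by telescoping (resp.\ iterating the contraction under P{\L}). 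The identification of where each summand of $A$ and the $\widehat D\zeta^2$ term originates is accurate, and the paper's intermediate device of an ``ideal estimator'' $\overline{g}^{k+1}$ is only a bookkeeping refinement of the decomposition you describe.

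The one place where your plan diverges from what actually works is the treatment of the clipped compressed differences on regular workers in the good-majority branch. You assert that $\lambda_{k+1}=2\cL\|x^{k+1}-x^k\|$ makes clipping ``essentially inactive'' on regular workers; that is only true under the additional bounded-compressor Assumption~\ref{assm:bounded-compressor} (this is precisely Theorem~\ref{thm:main_result_2}, where the level is $D_Q\cL\|x^{k+1}-x^k\|$). For a general unbiased compressor, $\|\cQ(\widehat\Delta_i)\|$ can far exceed $2\cL\|x^{k+1}-x^k\|$, so clipping fires with non-negligible probability, and your fallback plan --- showing the conditional expectation of the clipped vector agrees with $\Delta_i$ up to a controlled bias --- is not how the paper proceeds. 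Instead, the paper proves a dedicated second-moment lemma (Lemma~\ref{lemma:clipping}): whenever the clipping level is at least twice the norm of the conditional mean (guaranteed here since $\|\Delta_i(x^{k+1},x^k)\|\leq\cL\|x^{k+1}-x^k\|=\lambda_{k+1}/2$ by smoothness), one has $\mathbb{E}\|\clip_\lambda(X)-\mathbb{E}[X]\|^2\leq 10\,\mathbb{E}\|X-\mathbb{E}[X]\|^2$. No bias control is ever needed because the \algname{PAGE}-type estimator is already conditionally biased and the whole analysis runs on second moments; the factor $10$ is exactly what produces the constants $30\omega+11$ in \eqref{eq:A_unbounded_compr}. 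Supplying this lemma (or an equivalent) is the missing ingredient that turns your outline into a complete proof of Theorem~\ref{thm:main_result_1}.
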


The above theorem establishes similar guarantees to the current SOTA ones obtained for \algname{Byz-VR-MARINA}. That is, in the general non-convex case, we prove $\cO(\nicefrac{1}{K})$ rate, which is optimal \citep{arjevani2023lower}, and for P\L-functions we derive linear convergence result to the neighborhood depending on the heterogeneity. The size of this neighborhood matches the one derived for \algname{Byz-VR-MARINA} by \citet{gorbunov2023variance}. However, since our result is obtained considering the challenging scenario of partial participation of clients, the maximal theoretically allowed stepsize in our analysis of \algname{Byz-VR-MARINA-PP} is smaller than the one from \citep{gorbunov2023variance}.


In particular, the second term in the constant $A$ appears due to the partial participation, and the whole expression for $A$ is proportional to $\nicefrac{1}{p^2}$. In contrast, a similar constant $A$ from the result for \algname{Byz-VR-MARINA} is proportional to $\nicefrac{1}{p}$, which can be noticeably smaller than $\nicefrac{1}{p^2}$. Indeed, to make the expected number of clients participating in the communication round equal to $\cO(C)$, to make the expected number of stochastic oracle calls equal to $\cO(b)$, and to make the expected number of transmitted components for each worker taking part in the communication round equal $\cO(\zeta_{\cQ})$, parameter $p$ should be chosen as $p = \min\{\nicefrac{C}{n}, \nicefrac{b}{m}, \nicefrac{\zeta_{\cQ}}{d}\}$, where the latter term in the minimum often equals to $\Theta(\nicefrac{1}{(\omega+1)})$ \citep{gorbunov2021marina}. Therefore, in some scenarios, $p$ can be small.

Next, in the special case of full participation, we have $C=\widehat C=n$, $p_G = \cP_{\cG_{C}^k} = 1$, meaning that $A = \Theta(\nicefrac{(1+\omega)(1+c\delta)}{p^2})$ for \algname{Byz-VR-MARINA-PP}. In contrast, the corresponding constant for \algname{Byz-VR-MARINA} is of the order $\Theta(\nicefrac{(1+\omega)}{pn} + \nicefrac{(1+\omega)c\delta}{p^2})$, which is strictly better than our bound. In this special case, we do not recover the result for \algname{Byz-VR-MARINA}.

Such a complexity deterioration can be explained as follows: the presence of clipping introduces additional technical difficulties in the analysis, resulting in a reduced step size compared to \algname{Byz-VR-MARINA}, even when $C = \widehat{C} = n$. To achieve a more favorable convergence rate, particularly in scenarios of complete participation, we also establish the results under Assumption~\ref{assm:bounded-compressor}.
\begin{theorem}\label{thm:main_result_2}
 Let Assumptions \ref{assm:bounded-aggr}, \ref{assm:bounded-compressor}, \ref{assm:het_simplified}, \ref{assm:smoothness_simplified} hold and $\lambda_{k+1} = D_Q\cL \left\|x^{k+1} - x^k\right\|$. Assume that $0<\gamma \leq \nicefrac{1}{\cL(1+\sqrt{A})},$ 
where constant $A$ equals
\begin{eqnarray}
\textstyle
    A &=& \frac{4p_G  G\mathcal{P}_{\mathcal{G}^k_C}}{p(1-\delta)C}\left( \frac{3\omega + 2}{(1-\delta)C} + \frac{8(5\omega + 4)c\delta}{p}\right) + \frac{8(1-p_G) (2 + F_{\cA}^2 D_Q^2)}{p^2}.\label{eq:A_bounded_compr}
\end{eqnarray}
Then for all $K \geq 0$ the iterates produced by \algname{Byz-VR-MARINA-PP} (Algorithm \ref{alg:byz_vr_marina}) satisfy
\begin{equation}
\textstyle
    \mathbb{E}\left[\left\|\nabla f\left(\widehat{x}^K\right)\right\|^2\right] \leq \frac{2 \Phi^0}{\gamma(K+1)}+\frac{2 \widehat{D} \zeta^2}{p}, \label{eq:main_result_non_convex-q}
\end{equation}
where $\widehat{D} = \frac{ 2\delta\mathcal{P}_{\mathcal{G}^k_{\widehat{C}}} }{1-\delta} \left(\frac{6cG}{\widehat C} + p \right)$ and $\widehat{x}^K$ is chosen uniformly at random from $x^0, x^1, \ldots, x^K$, and $\Phi^0=$ $f\left(x^0\right)-f^*+\frac{\gamma}{p}\left\|g^0-\nabla f\left(x^0\right)\right\|^2$ . If, in addition, Assumption~\ref{assm:PL} holds and $0<\gamma \leq \nicefrac{1}{\cL(1+\sqrt{2 A})},$ then for all $K \geq 0$ the iterates produced by \algname{Byz-VR-MARINA-PP} (Algorithm \ref{alg:byz_vr_marina}) satisfy with $\rho = \min\left\{\gamma\mu, \frac{p}{4}\right\}$
\begin{equation}
\textstyle
    \mathbb{E}\left[f\left(x^K\right)-f\left(x^*\right)\right] \leq\left(1-\rho\right)^K \Phi^0+\frac{2\widehat{D}\zeta^2\gamma}{p\rho}, \label{eq:main_result_PL-q}
\end{equation}
where $\Phi^0=$ $f\left(x^0\right)-f^*+\frac{2\gamma}{p}\left\|g^0-\nabla f\left(x^0\right)\right\|^2$.
\end{theorem}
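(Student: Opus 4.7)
The plan is to mirror the Lyapunov argument used for Theorem~\ref{thm:main_result_1}, while exploiting Assumption~\ref{assm:bounded-compressor} to keep the clipping operator inactive on every honest worker. Indeed, combining Assumption~\ref{assm:bounded-compressor} with the per-sample smoothness in Assumption~\ref{assm:smoothness_simplified} gives, for every $i \in \cG$,
\[
\|\cQ(\widehat{\Delta}_i(x^{k+1},x^k))\| \leq D_Q\|\widehat{\Delta}_i(x^{k+1},x^k)\| \leq D_Q \cL \|x^{k+1}-x^k\| = \lambda_{k+1},
\]
so $\clip_{\lambda_{k+1}}$ acts as the identity on what each regular client transmits. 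This removes the clipping bias that was present in the proof of Theorem~\ref{thm:main_result_1} and permits a tighter one-step variance recursion, with only one power of $1/p$ appearing in the first bracket of $A$.

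I would set up the Lyapunov function $\Phi^k = f(x^k) - f^* + \tfrac{c_1 \gamma}{p}\|g^k - \nabla f(x^k)\|^2$ (with $c_1$ to be determined) and invoke $\cL$-smoothness to obtain a descent inequality $\mathbb{E}_k[f(x^{k+1})] \leq f(x^k) - \tfrac{\gamma}{2}\|\nabla f(x^k)\|^2 + \tfrac{\gamma}{2}\|g^k - \nabla f(x^k)\|^2 - (\tfrac{1}{2\gamma}-\tfrac{\cL}{2})\|x^{k+1}-x^k\|^2$. The heart of the proof is a one-step bound on $\mathbb{E}_k\|g^{k+1}-\nabla f(x^{k+1})\|^2$, which I split according to the Bernoulli flag $c_k$. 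In the branch $c_k=1$ (probability $p$), the server aggregates full local gradients of $\widehat{C}$ sampled clients and Definition~\ref{def:aragg} applied with Assumption~\ref{assm:het_simplified} yields an error of order $c\delta\,\zeta^2$ plus the sampling variance, producing the heterogeneity floor $\widehat{D}\zeta^2/p$. In the branch $c_k=0$, I further condition on the good-majority event $\{G_C^k \geq (1-\delta)C\}$. On this event (probability $p_G$) the ARAgg guarantee applies to the un-clipped compressed differences of honest workers, the inclusion probability of a given honest client is $\mathcal{P}_{\mathcal{G}^k_C}$, the sampling-plus-compression variance contributes the $\tfrac{3\omega+2}{(1-\delta)C}$ factor, while the robust-aggregation error contributes the $(5\omega+4)c\delta$ factor, together giving the first bracket of $A$. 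On the complementary event (probability $1-p_G$) the ARAgg guarantee is void, but Assumption~\ref{assm:bounded-aggr} combined with the clipping bound yields $\|\texttt{ARAgg}(\cdot)\| \leq F_{\cA}\lambda_{k+1} = F_{\cA} D_Q \cL \|x^{k+1}-x^k\|$ deterministically, producing the second bracket $(2 + F_{\cA}^2 D_Q^2)$ of $A$ in \eqref{eq:A_bounded_compr}.

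Combining the two branches via Young's inequality gives a schematic recursion $\mathbb{E}_k\|g^{k+1}-\nabla f(x^{k+1})\|^2 \leq (1-\tfrac{p}{2})\|g^k - \nabla f(x^k)\|^2 + A\,p\,\cL^2 \|x^{k+1}-x^k\|^2 + \widehat{D}\zeta^2$. Plugging this into the Lyapunov update and choosing $c_1$ so that the $\|x^{k+1}-x^k\|^2$ coefficients cancel exactly when $\gamma \leq 1/(\cL(1+\sqrt{A}))$ delivers $\mathbb{E}_k[\Phi^{k+1}] \leq \Phi^k - \tfrac{\gamma}{2}\|\nabla f(x^k)\|^2 + \tfrac{\widehat{D}\zeta^2 \gamma}{p}$. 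Telescoping over $k = 0,\ldots,K$ and averaging yields \eqref{eq:main_result_non_convex-q}. For the P\L{} statement I repeat the same one-step bound but replace $-\tfrac{\gamma}{2}\|\nabla f(x^k)\|^2$ by $-\gamma\mu(f(x^k)-f^*)$ via Assumption~\ref{assm:PL}, balance the geometric decay of the variance term against $\gamma\mu$ to obtain $\rho = \min\{\gamma\mu, p/4\}$, and unroll to get \eqref{eq:main_result_PL-q}.

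The main obstacle I expect is the bookkeeping when splitting on $\{G_C^k \geq (1-\delta)C\}$: tracking how $p_G$ and $\mathcal{P}_{\mathcal{G}^k_C}$ appear in the conditional expectations of the honest-client average, so that the constants coalesce into precisely the expression for $A$ in \eqref{eq:A_bounded_compr}. In particular, isolating the $\tfrac{1}{(1-\delta)C}$ sampling factor requires using $\mathcal{P}_{\mathcal{G}^k_C}$ as the conditional inclusion probability of a fixed honest client, combined with repeated Young-inequality splits to cleanly separate the $c\delta$-aggregation error, the compression error, and the pure sampling variance without losing additional factors of $1/p$ in the first bracket of $A$.
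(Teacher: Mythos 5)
Your proposal is correct and follows essentially the same route as the paper's proof: using Assumption~\ref{assm:bounded-compressor} together with per-sample smoothness to make the clipping operator the identity on honest workers (which is exactly how the paper's Lemmas~\ref{lemma:premainA1_Q} and \ref{lemma:good_aggr_new} avoid the extra $\nicefrac{1}{p}$ factor), splitting the one-step variance bound on $c_k$ and the good-majority event $\{G_C^k \geq (1-\delta)C\}$ with the bounded-aggregator bound $F_{\cA}\lambda_{k+1}$ handling the complementary event, and closing the argument with the same Lyapunov function and descent lemma. The bookkeeping with $p_G$ and $\mathcal{P}_{\mathcal{G}^k_C}$ that you flag as the main obstacle is indeed where the paper spends most of its effort, but your plan handles it in the same way the paper does.
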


With Assumption~\ref{assm:bounded-compressor}, vectors $\{\cQ(\widehat{\Delta}_i(x^{k+1}, x^k))\}_{i\in \cG_C^k}$ can be upper bounded by $D_Q\cL \left\|x^{k+1} - x^k\right\|$. Using this fact, one can take the clipping level sufficiently large such that it is turned off for the regular workers. This allows us to simplify the proof and remove $\nicefrac{1}{p}$ factor in front of the terms not proportional to $\delta$ or to $1-p_G$ in the expression for $A$ that can make the stepsize larger. However, the second term in \eqref{eq:A_bounded_compr} can be larger than \eqref{eq:A_unbounded_compr}, since it depends on potentially large constant $D_Q$. Therefore, the rates of convergence from Theorems \ref{thm:main_result_1} and \ref{thm:main_result_2} cannot be compared directly. We also highlight that the clipping level from Theorem~\ref{thm:main_result_2} is in general larger than the clipping level from Theorem~\ref{thm:main_result_1} and, thus, it is expected that with full participation Theorem~\ref{thm:main_result_2} gives better results than Theorem~\ref{thm:main_result_1}: the bias introduced due to the clipping becomes smaller with the increase of the clipping level. However, in the partial participation regime, the price for this is a decrease of the stepsize to compensate for the increased harm from Byzantine clients in situations when they form a majority. Further discussion of the technical challenges we overcame is deferred to Appendix~\ref{appendix:technical_challenges}.

Nevertheless, in the case of full participation, we have $C=\widehat C=n$, $p_G = \cP_{\cG_{C}^k} = \cP_{\cG_{\widehat C}^k} = 1$, meaning that $A = \Theta(\nicefrac{(1+\omega)}{pn} + \nicefrac{(1+\omega)c\delta}{p^2})$ in Theorem~\ref{thm:main_result_2}. That is, in this case, we recover the result of \algname{Byz-VR-MARINA}. More generally, if $p_G = 1$, which is equivalent to $C \geq \max\{1, \nicefrac{\deltar n}{\delta}\}$, then $\cP_{\cG_C^k} = \PP\{i \in \cG_C^k\} = \min\{1, \nicefrac{C}{G}\}$, $\cP_{\cG_C^k} = \PP\{i \in \cG_{\widehat C}^k\} = \min\{1, \nicefrac{\widehat C}{G}\}$ and we have $A = \Theta(\nicefrac{(1+\omega)}{p C} + \nicefrac{(1+\omega)c\delta}{p^2})$ and $\widehat D = \Theta(c\delta)$. Here, the first term in $A$ is $\nicefrac{n}{C}$ worse than the corresponding term for \algname{Byz-VR-MARINA}. However, the second term in $A$ matches the corresponding term for \algname{Byz-VR-MARINA}. Moreover, this term is the main one if $c\delta \geq \nicefrac{p}{C}$, which is typically the case since parameter $p$ is often small ($p = \min\{\nicefrac{C}{\widehat C}, \nicefrac{b}{m}, \nicefrac{\zeta_{\cQ}}{d}\}$). In such cases, \algname{Byz-VR-MARINA-PP} has the same rate of convergence as \algname{Byz-VR-MARINA} while utilizing, on average, just $\cO(C)$ workers at each step in contrast to \algname{Byz-VR-MARINA} that uses $n$ workers at each step. \emph{That is, in some cases, partial participation is provably beneficial for \algname{Byz-VR-MARINA-PP}.} 

\paragraph{Heuristic extension of \algname{Byz-VR-MARINA-PP}.} In this short remark, we illustrate how the proposed clipping technique can be applied to a general class of Byzantine-robust methods to adapt them to the case of partial participation. Consider the methods having the following update rule: $x^{k+1} = x^k - \gamma \cdot \texttt{Agg}(\{g_i^k\}_{i\in[n]})$, where $\{g_i^k\}_{i\in[n]}$ are the vectors received from workers at iteration $k$ and $\texttt{Agg}$ is some aggregation rule. A vast majority of existing Byzantine-robust methods fit this scheme. In the case of partial participation of clients, we propose to modify the scheme as follows:
\begin{equation}
    x^{k+1} = x^k - \gamma g^k,\quad \text{where } g^k \eqdef g^{k-1} + \texttt{Agg}\left(\left\{\clip_{\lambda_k}(g_i^k - g^{k-1})\right\}_{i\in S_k}\right), \label{eq:heuristic}
\end{equation}
where $S_k \subseteq [n]$ is a subset of clients participating in round $k$ and $\{\lambda_k\}_{k\geq 0}$ is sequence of clipping parameters specified by the server. In particular, \algname{Byz-VR-MARINA-PP} can be seen as an application of scheme \eqref{eq:heuristic} to \algname{Byz-VR-MARINA} (up to a minor modification when $c_k = 1$ in \algname{Byz-VR-MARINA}) with $\lambda_{k+1} = \lambda\|x^{k+1} - x^{k}\|$. We suggest to use $\lambda_{k+1} = \lambda\|x^{k+1} - x^{k}\|$ with tunable parameter $\lambda > 0$ for other methods as well.

\begin{figure*}[t]
\centering
\includegraphics[width=0.32\textwidth]{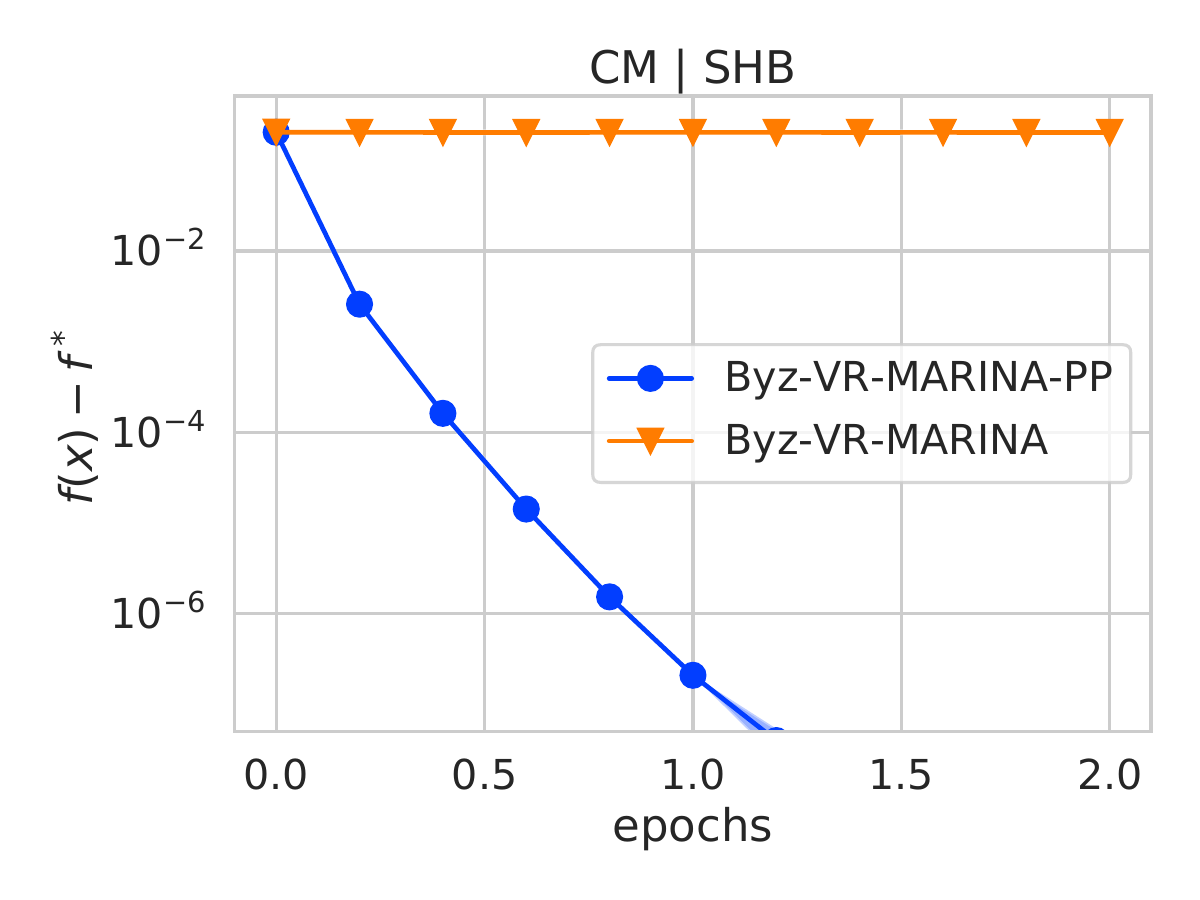}
\includegraphics[width=0.32\textwidth]{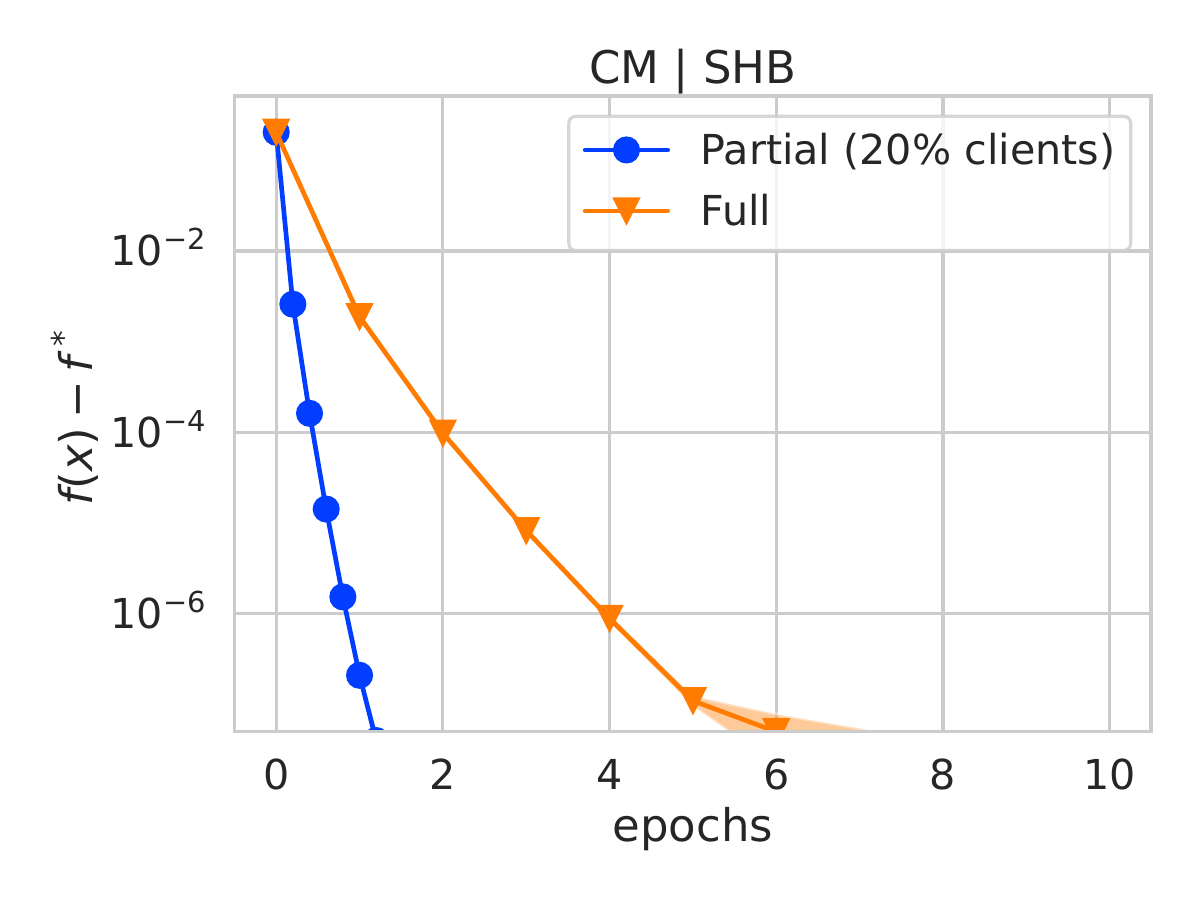}
\includegraphics[width=0.32\textwidth]{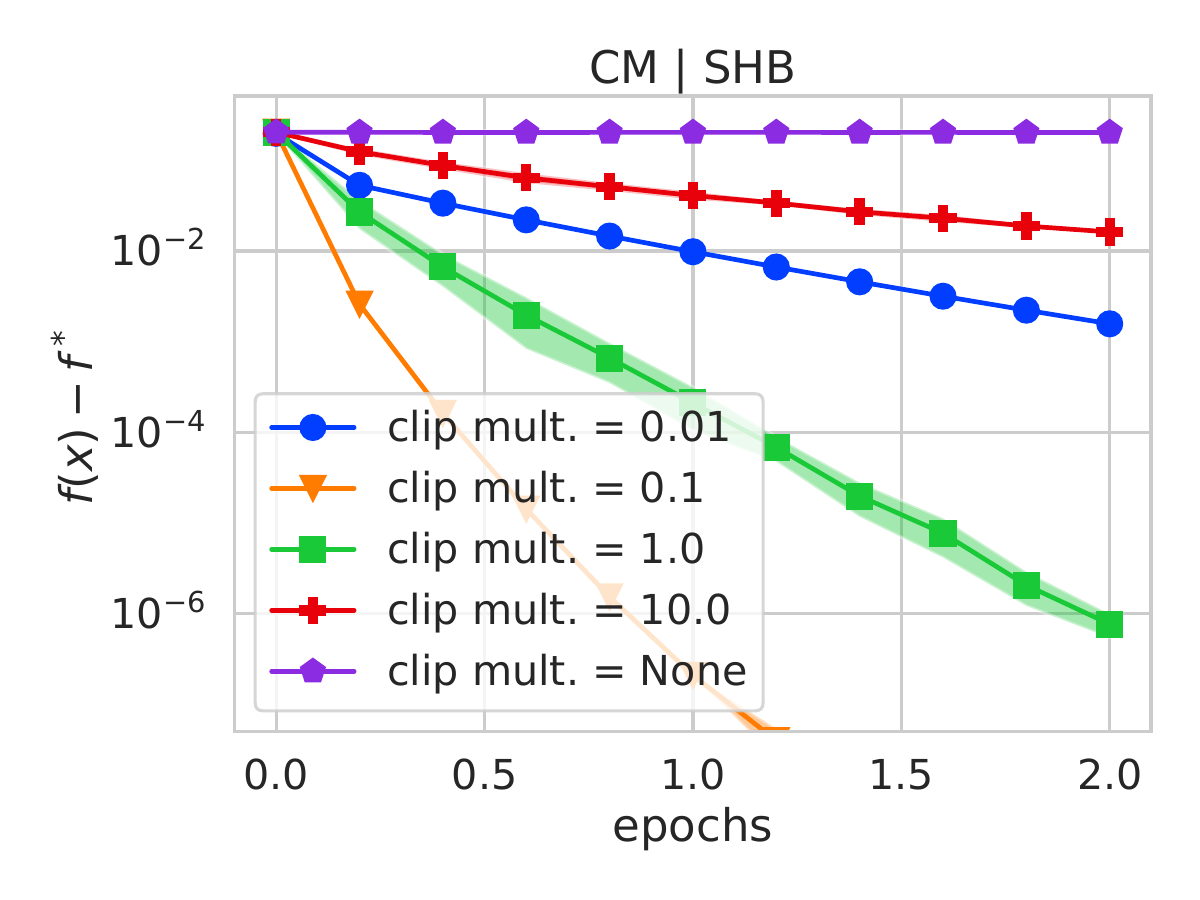}
\caption{
The optimality gap $f(x^k) - f(x^*)$ for 3 different scenarios. We use coordinate-wise mean with bucketing equal to 2 as an aggregation and shift-back as an attack.  We use the a9a dataset, where each worker accesses the full dataset with 15 good and 5 Byzantine workers. We do not use any compression. In each step, we sample 20\% of clients uniformly at random to participate in the given round unless we specifically mention that we use full participation. Left: Linear convergence of \algname{Byz-VR-MARINA-PP} with clipping versus non-convergence without clipping. Middle: Full versus partial participation, showing faster convergence with clipping. Right: Clipping multiplier $\lambda$ sensitivity, demonstrating consistent linear convergence across varying $\lambda$ values.} 
\label{fig:a9a_full}
\end{figure*}

\vspace{-0.5em}
\section{Numerical Experiments}\label{section:experiments}
\vspace{-0.5em}

Firstly, we showcase the benefits of employing clipping to remedy the presence of Byzantine workers and partial participation. For this task, we consider the standard logistic regression model with $\ell_2$-regularization, i.e., $f_{i,j} (x) =  - y_{i,j}\log(h(x, a_{i,j})) - (1 - y_{i,j}) \log(1 - h(x, a_{i,j})) + \eta \|x\|^2,$ 
where $y_{i,j} \in \{0, 1\}$ is the label, $a_{i,j} \in \R^d$ represents the feature vector, $\eta$ is the regularization parameter, and $h(x, a) = \nicefrac{1}{(1 + e^{-a^\top x})}$. This objective is smooth, and for $\eta > 0$, it is also strongly convex, satisfying the P\L-condition. We consider the \textit{a9a} LIBSVM dataset~\citep{chang2011libsvm} and set $\eta = 0.01$. In the experiments, we focus on an important feature of \algname{Byz-VR-MARINA-PP}: it has linear convergence for homogeneous datasets across clients even in the presence of Byzantine workers and partial participation, as shown in Theorems~\ref{thm:main_result_1} and \ref{thm:main_result_2}. 

To demonstrate this experimentally, we consider the setup with 15 good workers and 5 Byzantines, \textit{each worker can access the entire dataset}, and the server uses coordinate-wise median with bucketing as the aggregator (see also Appendix~\ref{appendix:justification_of_Assumption_1}). For the attack, we propose a new attack that we refer to as the \textit{shift-back} attack, which acts in the following way. If Byzantine workers are in the majority in the current round $k$, then each Byzantine worker sends $x^0 - x^k$. Otherwise, they follow protocol and act as benign workers. Further experimental details are deferred to Appendix~\ref{app:experiments}.


We compare our \algname{Byz-VR-MARINA-PP} with its version without clipping. We note that the setup that we consider is the most favorable in terms of minimized variance in terms of data and gradient heterogeneity. We show that even in this simplest setup, the method without clipping does not converge since there is no method that can withstand the Byzantine majority. Therefore, any more complex scenario would also fall short using our simple attack. On the other hand, we show that once clipping is applied, \algname{Byz-VR-MARINA-PP} is able to converge linearly to the exact solution, complementing our theoretical results.

Figure~\ref{fig:a9a_full} showcases these observations. On the left, we can see \algname{Byz-VR-MARINA-PP} converges linearly to the optimal solution, while the version without clipping remains stuck at the starting point since Byzantines are always able to push the solution back to the origin since they can create the majority in some rounds. In the middle plot, we compare the full participation scenario in which all the clients participate in each round, which does not require clipping since, in each step, we are guaranteed that Byzantines are not in the majority, to partial participation with clipping. We can see, when we compare the total number of computations (measured in epochs), \algname{Byz-VR-MARINA-PP} leads to faster convergence even though we need to employ clipping. Finally, in the right plot, we measure the sensitivity of clipping multiplier $\lambda$. We can see that \algname{Byz-VR-MARINA-PP} is not very sensitive to $\lambda$ in terms of convergence, i.e., for all the values of $\lambda$, we still converge linearly. However, the suboptimal choice of $\lambda$ leads to slower convergence.

\begin{figure*}[t]
\centering
\includegraphics[width=0.245\textwidth]{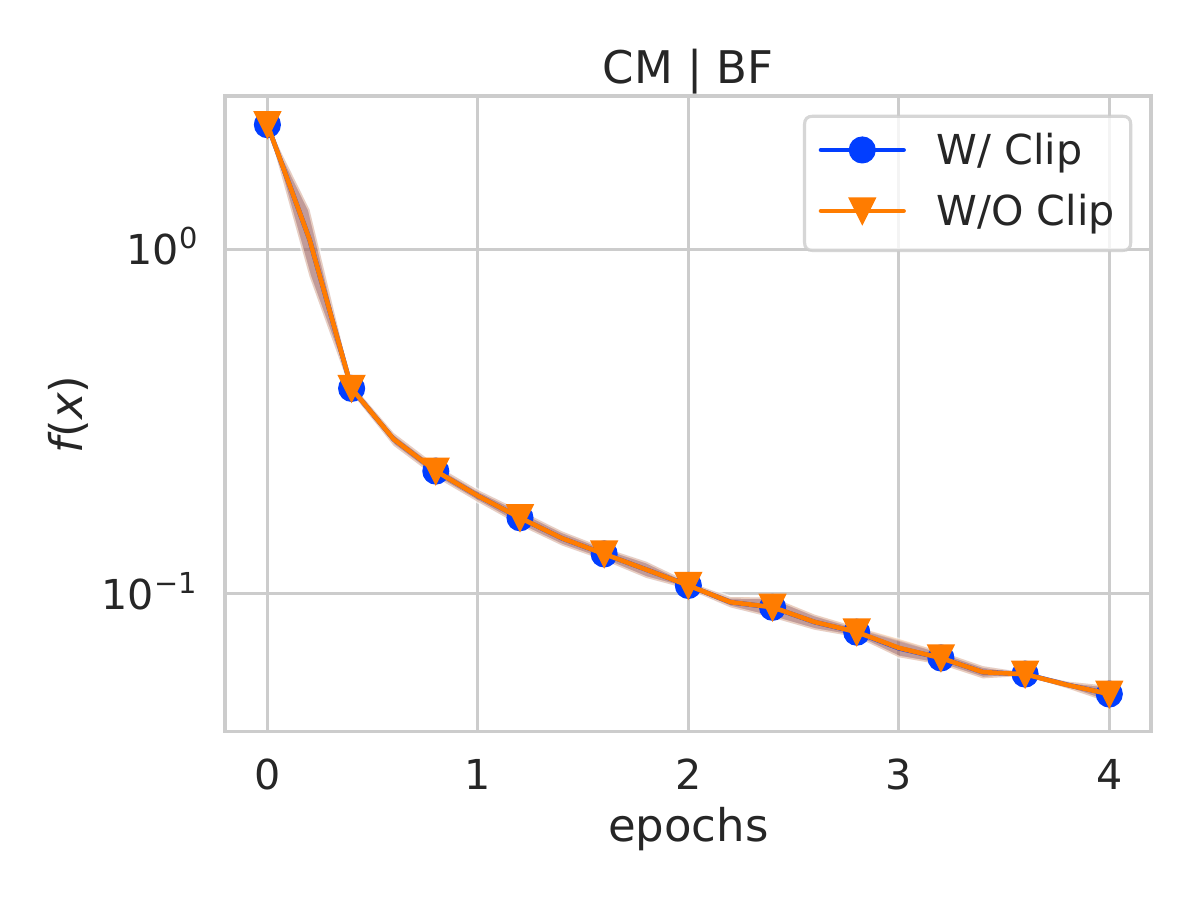}
\includegraphics[width=0.245\textwidth]{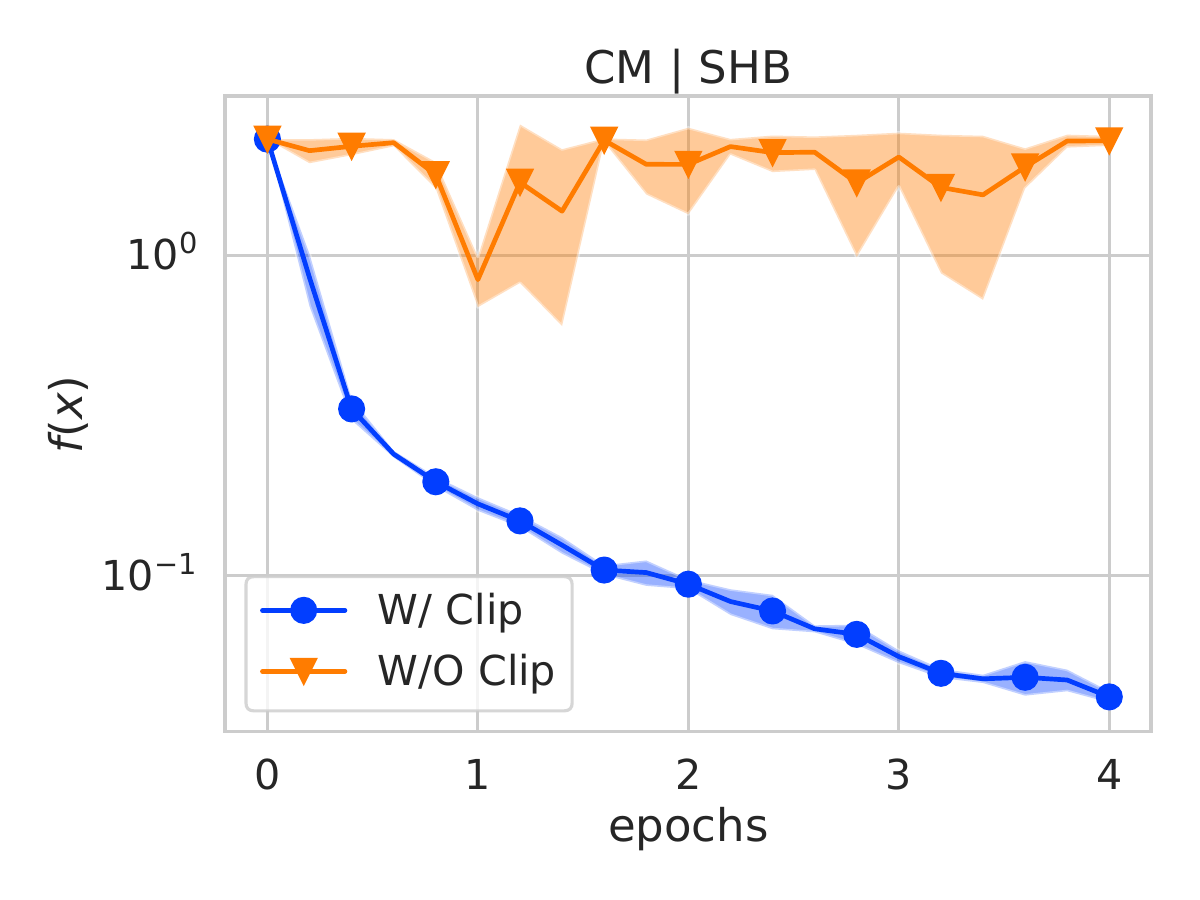}
\includegraphics[width=0.245\textwidth]{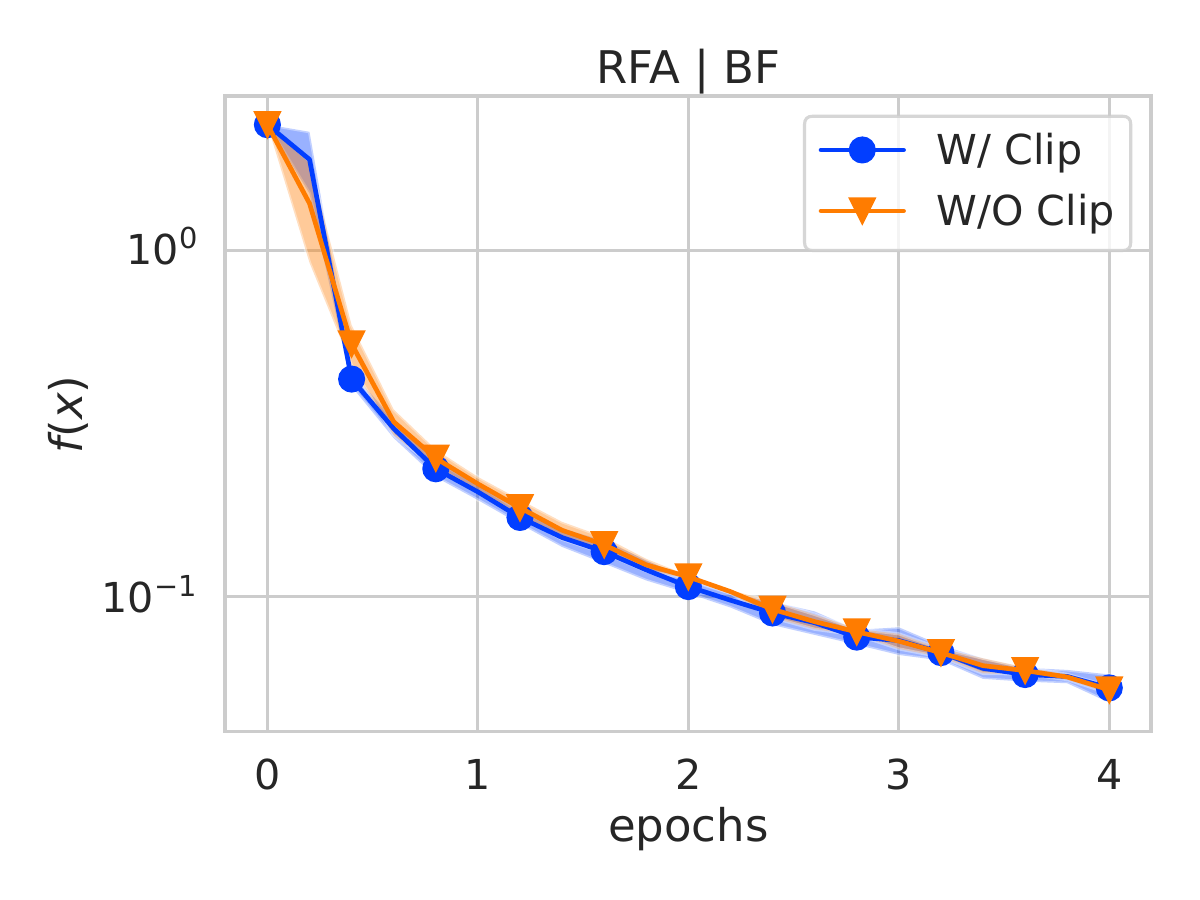}
\includegraphics[width=0.245\textwidth]{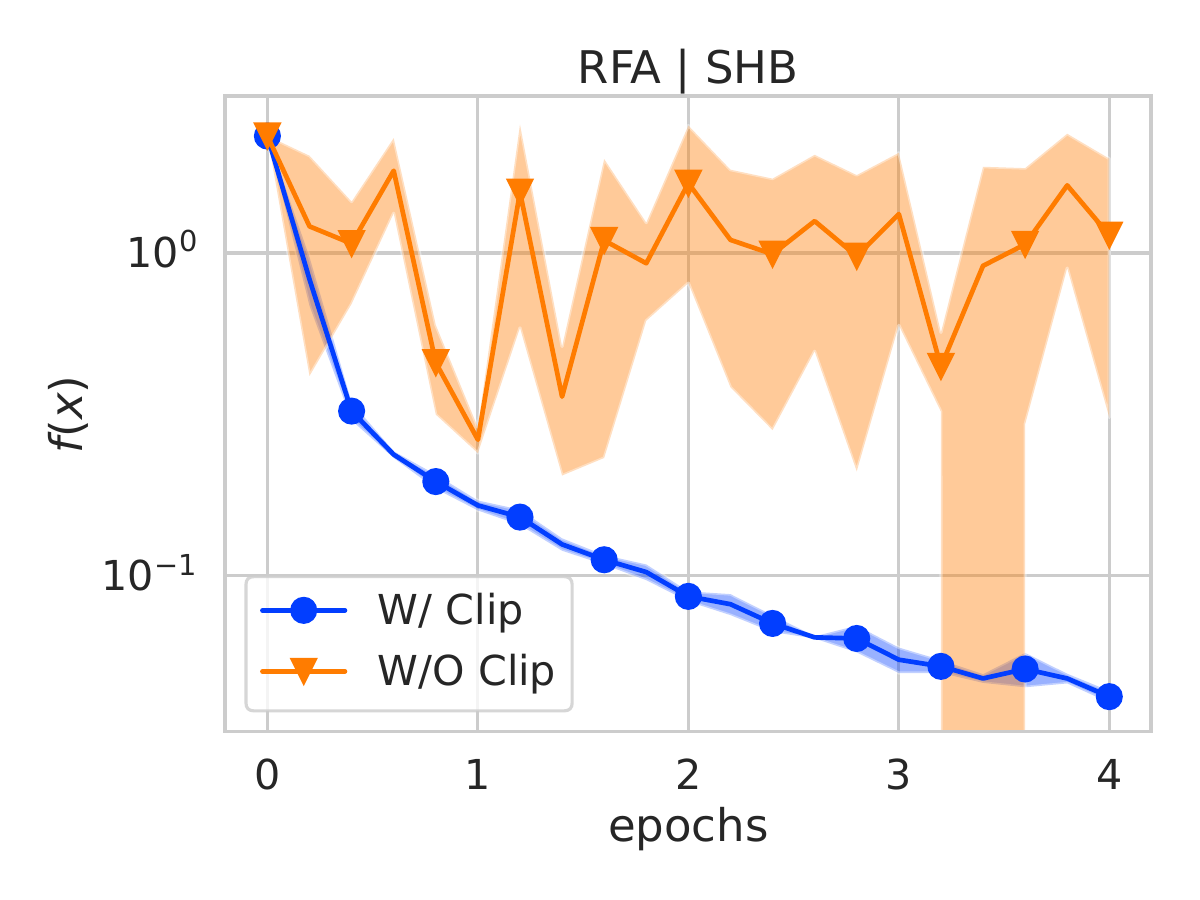}
\caption{
Training loss of 2 aggregation rules (CM, RFA) under 2 attacks (BF, SHB) on the MNIST dataset under heterogeneous data split with 20 clients, 
5 of which are malicious. Complete experiments with 4 attacks (BF, LF, ALIE, SHB) and test accuracy are provided in Appendix~\ref{app:experiments}.} 
\label{fig:nn}
\vspace{-1.5em}
\end{figure*}

Furthermore, we also realize that other attacks and more complicated experiments could potentially damage clipping more than methods not using clipping. Therefore, we provide additional experiments with neural networks and different attacks in heterogeneous settings. For our experimental setup, we follow \citep{karimireddy2021learning}. However, when working with neural networks, the choice of standard variance reduction is not effective~\citep{defazio2019ineffectiveness}. Therefore, we use Byzantine Robust Momentum SGD~\citep{karimireddy2021learning} as an underlying optimization method; see \eqref{eq:heuristic}. 

We consider the MNIST dataset~\citep{mnist} with heterogeneous splits with 20 clients, 5 of which are malicious. For the attacks, we consider A Little is Enough (ALIE)~\citep{baruch2019little}, Bit Flipping (BF), Label Flipping (LF), and aforementioned Shift-Back (SHB). For the aggregations, we consider coordinate median (CM)~\citep{chen2017distributed} and robust federated averaging (RFA)~\citep{pillutla2022robust} with bucketing. 

From Figure~\ref{fig:nn}, we can see that clipping does not lead to performance degradation. On the contrary, clipping performs on par or better than its variant without clipping. Furthermore, we can see that no robust aggregator is able to withstand the shift-back attack without clipping.

\vspace{-0.5em}
\section{Conclusion and Future Work}
\vspace{-0.5em}

This work makes an important first step in the direction of achieving Byzantine robustness under the partial participation of clients. 
However, some important questions remain open. 
First of all, it will be interesting to understand whether the derived bounds can be further improved in terms of the dependence on $\omega, m$, and $C$. Next, it would be interesting to rigorously prove that our heuristic works for \algname{SGD} with client momentum \citep{karimireddy2021learning, karimireddy2020byzantine} and other Byzantine-robust methods. Finally, studying other participation patterns (non-uniform sampling/arbitrary client participation) is also a very prominent direction for future research.

\bibliography{refs}

\newpage
\appendix

\tableofcontents

\clearpage

\section{Extra Related Work}\label{appendix:extra_related_work}

\paragraph{Further Comparison with \citet{data2021byzantine}.} As we mention in the main text, \citet{data2021byzantine} assume that $3B$ is smaller than $C$. More precisely, \citet{data2021byzantine} assume that $B \leq \epsilon C$, where $\epsilon \leq \frac{1}{3} - \epsilon’$ for some parameter $\epsilon’ > 0$ that will be explained later. That is, the results from \citet{data2021byzantine} do not hold when $C$ is smaller than $3B$, and, in particular, their algorithm cannot tolerate the situation when the server samples only Byzantine workers at some particular communication round. We also notice that when $C \geq 4B$, then existing methods such as \algname{Byz-VR-MARINA} \citep{gorbunov2023variance} or Client Momentum \citep{karimireddy2021learning, karimireddy2020byzantine} can be applied without any changes to get a provable convergence.

Next, \citet{data2021byzantine} derive the upper bounds for the expected squared distance to the solution (in the strongly convex case) and the averaged expected squared norm of the gradient (in the non-convex case), where the expectation is taken w.r.t.\ the sampling of stochastic gradients only and the bounds itself hold with probability at least $1 - \frac{K}{H}\exp\left( - \frac{\epsilon’^2(1 - \epsilon)C}{16}\right)$, where $H$ is the number of local steps. For simplicity consider the best-case scenario: $H = 1$ (local steps deteriorate the results from \citet{data2021byzantine}). Then, the lower bound for this probability becomes negative when either $C$ is not large enough or when $K$ is large or when $\epsilon$ is close to $\frac{1}{3}$, e.g., for $K = 10^6, \epsilon = \epsilon’ = \frac{1}{6}, C = 5000$ this lower bound is smaller than $-720$, meaning that in this case, the result does not guarantee convergence. In contrast, our results have classical convergence criteria, where the expectations are taken w.r.t.\ the all randomness.

Finally, the bounds from \citet{data2021byzantine} have non-reduceable terms even for homogeneous data case: these terms are proportional to $\frac{\sigma^2}{b}$, where $\sigma^2$ is the upper bound for the variance of the stochastic estimator on regular clients and $b$ is the batchsize. In contrast, our results have only decreasing terms in the upper bounds when the data is homogeneous.

\paragraph{Byzantine robustness.} There exist various approaches to achieving Byzantine robustness \citep{lyu2020privacy}. \citet{alistarh2018byzantine, allen2020byzantine} rely on the concentration inequalities for the stochastic gradients with bounded noise to iteratively remove them from the training. \citet{karimireddy2021learning} formalize the definition of robust aggregation and propose the first provably robust aggregation rule called \algname{CenteredClip} and the first provably Byzantine robust method under bounded variance assumption for homogeneous problems, i.e., when all good workers share one dataset. In particular, the method from \citep{karimireddy2021learning} uses client momentum on the clients that helps to memorize previous steps for good workers and withstand time-coupled attacks. This approach is extended by \citet{he2022byzantine} to the setup of decentralized learning. \citet{allouah2023fixing} develop an alternative definition for robust aggregation and propose a new aggregation rule satisfying their definition. \citet{karimireddy2020byzantine} generalize these results to the heterogeneous data case and derive lower bounds for the optimization error that one can achieve in the heterogeneous case. Based on the formalism from \citet{karimireddy2021learning}, \citet{gorbunov2021secure} propose a server-free approach that uses random checks of computations and bans of peers. This trick allows the elimination of all Byzantine workers after a finite number of steps on average. There are also many other approaches, e.g., one can use redundant computations of the stochastic gradients \citep{chen2018draco, rajput2019detox} or introduce reputation metrics \citep{rodriguez2020dynamic, regatti2020bygars, xu2020towards} to achieve some robustness, see also a recent survey by \citet{lyu2020privacy}.

\paragraph{Variance reduction.} The literature on variance-reduced methods is very rich \citep{gower2020variance}. The first variance-reduced methods are designed to fix the convergence of standard Stochastic Gradient Descent (\algname{SGD}) and make it convergent to any predefined accuracy even with constant stepsizes. Such methods as \algname{SAG} \citep{schmidt2017minimizing}, \algname{SVRG} \citep{johnson2013accelerating}, \algname{SAGA} \citep{defazio2014saga} are developed mainly for (strongly) convex smooth optimization problems, while methods like \algname{SARAH} \citep{nguyen2017sarah}, \algname{STORM} \citep{cutkosky2019momentum}, \algname{GeomSARAH} \citep{horvath2019stochastic}, \algname{PAGE} \citep{li2021page} are designed for general smooth non-convex problems. In this paper, we use \algname{GeomSARAH}/\algname{PAGE}-type variance reduction as the main building block of the method that makes the method robust to Byzantine attacks.

\paragraph{Partial participation and client sampling.} In the context of Byzantine-robust learning, there exists one work that develops and analyzes the method with partial participation \citep{data2021byzantine}. However, this work relies on the restrictive assumption that the number of participating clients at each round is at least three times larger than the number of Byzantine workers. In this case, Byzantines cannot form a majority, and standard methods can be applied without any changes. In contrast, our method converges in more challenging scenarios, e.g., \algname{Byz-VR-MARINA-PP} provably converges even when the server samples one client, which can be Byzantine. The results from \citet{data2021byzantine} have some other noticeable limitations that we discuss in Appendix~\ref{appendix:extra_related_work}.

\paragraph{Communication compression.} The literature on communication compression can be roughly divided into two huge groups. The first group studies the methods with unbiased communication compression. Different compression operators in the application to Distributed \algname{SGD}/\algname{GD} are studied in \citep{alistarh2017qsgd, wen2017terngrad, khirirat2018distributed}. To improve the convergence rate by fixing the error coming from the compression \citet{mishchenko2019distributed} propose to apply compression to the special gradient differences. Multiple extensions and generalizations of mentioned techniques are proposed and analyzed in the literature, e.g., see \citep{horvath2019stochastic, gorbunov2021marina, li2020acceleration, qian2021error, basu2019qsparse, haddadpour2021federated, sadiev2022federated, islamov2021distributed, safaryan2021fednl}. 

Another large part of the literature on compressed communication is devoted to biased compression operators \citep{ajalloeian2020convergence, demidovich2023guide}. Typically, such compression operators require more algorithmic changes than unbiased compressors since na\"ive combinations of biased compression with standard methods (e.g., Distributed \algname{GD}) can diverge \citep{beznosikov2020biased}. Error feedback is one of the most popular ways of utilizing biased compression operators in practice \citep{seide20141, stich2018sparsified, vogels2019powersgd}, see also \citep{richtarik2021ef21, fatkhullin2021ef21} for the modern version of error feedback with better theoretical guarantees for non-convex problems.

In the context of Byzantine robustness, methods with communication compression are also studied. The existing approaches are based on aggregation rules based on the norms of the updates \citep{ghosh2020distributed, ghosh2021communication}, \algname{SignSGD} and majority vote \citep{bernstein2018signsgd}, \algname{SAGA}-type variance reduction coupled with unbiased compression \citep{zhu2021broadcast}, and \algname{GeomSARAH}/\algname{PAGE}-type variance reduction combined with unbiased compression \citep{gorbunov2023variance}.

\paragraph{Gradient clipping.} Gradient clipping has multiple useful properties and applications. Originally it was used by \citet{pascanu2013difficulty} to reduce the effect of exploding gradients during the training of RNNs. Gradient clipping is also a popular tool for achieving provable differential privacy \citep{abadi2016deep, chen2020understanding}, convergence under generalized notions of smoothness \citep{zhang2019gradient, mai2021stability} and better (high-probability) convergence under heavy-tailed noise assumption \citep{zhang2020adaptive, nazin2019algorithms, gorbunov2020stochastic, sadiev2023high, nguyen2023improved}. In the context of Byzantine-robust learning, gradient clipping is also utilized to design provably robust aggregation \citep{karimireddy2021learning}. Our work proposes a novel useful application of clipping, i.e., we utilize clipping to achieve Byzantine robustness with partial participation of clients.

\clearpage

\section{Useful Facts}

For all $a, b \in \mathbb{R}^d$ and $\alpha>0, p \in(0,1]$ the following relations hold:

\begin{align}
\label{eq:quadratic} 2\langle a, b\rangle & =\|a\|^2+\|b\|^2-\|a-b\|^2 \\
\label{eq:yung-1}\|a+b\|^2 & \leq(1+\alpha)\|a\|^2+\left(1+\alpha^{-1}\right)\|b\|^2 \\
\label{eq:yung-2}-\|a-b\|^2 & \leq-\frac{1}{1+\alpha}\|a\|^2+\frac{1}{\alpha}\|b\|^2, \\
\label{eq:contract}(1-p)\left(1+\frac{p}{2}\right) & \leq 1-\frac{p}{2}, \quad p\geq 0 \\
\label{eq:contract-2}(1-p)\left(1+\frac{p}{2}\right)\left(1+\frac{p}{4}\right) & \leq 1-\frac{p}{4}\quad p\geq 0 .
\end{align}

\begin{lemma} (Lemma 5 from \citep{richtarik2021ef21}).
\label{lemma:peter}
Let $a, b>0$. If $0 \leq \gamma \leq \frac{1}{\sqrt{a}+b}$, then $a \gamma^2+b \gamma \leq 1$. The bound is tight up to the factor of 2 since $\frac{1}{\sqrt{a}+b} \leq \min \left\{\frac{1}{\sqrt{a}}, \frac{1}{b}\right\} \leq \frac{2}{\sqrt{a}+b}$.
\end{lemma}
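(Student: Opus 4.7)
The plan is to reduce the inequality to a single evaluation at the right endpoint via monotonicity. First I would observe that because $a, b > 0$, the polynomial $g(\gamma) := a\gamma^2 + b\gamma$ is strictly increasing on $[0, \infty)$: its derivative $g'(\gamma) = 2a\gamma + b$ is positive there. Hence, to prove $g(\gamma) \leq 1$ for every $\gamma \in [0, \tfrac{1}{\sqrt{a}+b}]$, it suffices to verify the bound at the maximal admissible value $\gamma^\star := \tfrac{1}{\sqrt{a}+b}$; every smaller $\gamma$ then satisfies the inequality automatically, and the case $\gamma = 0$ is trivial ($g(0) = 0 \leq 1$).

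Next I would plug in $\gamma^\star$ and simplify over a common denominator:
\[
g(\gamma^\star) = \frac{a}{(\sqrt{a}+b)^2} + \frac{b}{\sqrt{a}+b} = \frac{a + b\sqrt{a} + b^2}{(\sqrt{a}+b)^2}.
\]
Expanding the denominator as $(\sqrt{a}+b)^2 = a + 2b\sqrt{a} + b^2$, I see that the numerator differs from the denominator by exactly $-b\sqrt{a} \leq 0$. Therefore the fraction is at most $1$, which gives the desired $g(\gamma^\star) \leq 1$ and completes the main claim.

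For the tightness remark I would verify the two sandwich inequalities independently. The lower sandwich $\tfrac{1}{\sqrt{a}+b} \leq \min\{\tfrac{1}{\sqrt{a}}, \tfrac{1}{b}\}$ is immediate from $\sqrt{a}+b \geq \max\{\sqrt{a}, b\}$ by monotonicity of $t \mapsto 1/t$. The upper sandwich $\min\{\tfrac{1}{\sqrt{a}}, \tfrac{1}{b}\} \leq \tfrac{2}{\sqrt{a}+b}$ follows from the elementary fact that the maximum of two positive numbers is at least their arithmetic mean, i.e., $\max\{\sqrt{a}, b\} \geq \tfrac{1}{2}(\sqrt{a}+b)$, and then inverting.

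I do not anticipate any genuine obstacle: the lemma is an elementary estimate and the entire argument is algebraic. The only mildly non-obvious ingredient is recognizing that one should exploit monotonicity of $g$ rather than attempt a direct factorization such as $a\gamma^2 + b\gamma = \gamma(\sqrt{a}\gamma \cdot \sqrt{a} + b)$, which would require an extra step to bound $\sqrt{a}\gamma$. The strict positivity hypothesis $a, b > 0$ is used only to ensure strict monotonicity; the bound extends to $a, b \geq 0$ by continuity, but that extension is not needed for the stated version.
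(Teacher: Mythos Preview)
Your argument is correct. The paper does not supply its own proof of this lemma; it merely quotes the statement from \citep{richtarik2021ef21} and invokes it later when bounding the stepsize in the proof of Theorem~\ref{them:1}. Your monotonicity-plus-endpoint check, together with the one-line verification of the sandwich inequalities, is a clean and complete justification of both the main bound and the tightness remark.
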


\clearpage

\section{Justification of Assumption~\ref{assm:bounded-aggr}}\label{appendix:justification_of_Assumption_1}

\begin{algorithm}[h]
   \caption{Bucketing Algorithm \citep{karimireddy2020byzantine}}\label{alg:bucketing}
\begin{algorithmic}[1]
   \STATE {\bfseries Input:} $\{x_1,\ldots,x_n\}$, $s \in \mathbb{N}$ -- bucket size, \texttt{Aggr} -- aggregation rule 
   \STATE Sample random permutation $\pi = (\pi(1),\ldots, \pi(n))$ of $[n]$
   \STATE Compute $y_i = \frac{1}{s}\sum_{k = s(i-1)+1}^{\min\{si, n\}} x_{\pi(k)}$ for $i = 1, \ldots, \lceil \nicefrac{n}{s} \rceil$
   \STATE {\bfseries Return:} $\widehat x = \texttt{Aggr}(y_1, \ldots, y_{\lceil \nicefrac{n}{s} \rceil})$
\end{algorithmic}
\end{algorithm}

\paragraph{Krum and Krum $\circ$ Bucketing.} Krum aggregation rule is defined as
\begin{equation}
    \text{Krum}(x_1,\ldots, x_n) = \argmin\limits_{x_i \in \{x_1,\ldots, x_n\}} \sum\limits_{j \in S_i}\|x_j - x_i\|^2, \notag
\end{equation}
where $S_i \subset \{x_1,\ldots, x_n\}$ is the subset of $n-B-2$ closest vectors to $x_i$. By definition, $\text{Krum}(x_1,\ldots, x_n) \in \{x_1,\ldots, x_n\}$ and, thus $\|\text{Krum}(x_1,\ldots, x_n)\| \leq \max_{i\in [n]} \|x_i\|$, i.e., Assumption~\ref{assm:bounded-aggr} holds with $F_{\cA} = 1$. Since Krum $\circ$ Bucketing applies Krum aggregation to averages $y_i$ over the buckets and $\|y_i\| \leq \frac{1}{s}\sum_{k = s(i-1)+1}^{\min\{si, n\}} \|x_{\pi(k)}\| \leq \max_{i\in [n]} \|x_i\|$, we have that $\|\text{Krum}\circ\text{Bucketing}(x_1,\ldots, x_n)\| \leq \max_{i\in [n]} \|x_i\|$.

\paragraph{Geometric median (GM) and GM $\circ$ Bucketing.} Geometric median is defined as follows:
\begin{equation}
    \text{GM}(x_1,\ldots, x_n) = \argmin\limits_{x \in \R^d} \sum\limits_{i=1}^n \|x - x_i\|. \label{eq:GM}
\end{equation}
One can show that $\text{GM}(x_1,\ldots, x_n) \in \text{Conv}(x_1, \ldots, x_n) \eqdef \{x \in \R^d \mid x = \sum_{i=1}^n \alpha_i x_i \text{ for some } \alpha_1, \ldots, \alpha_n \geq 1 \text{ such that } \sum_{i=1}^n\alpha_i = 1\}$, i.e., geometric median belongs to the convex hull of the inputs. Indeed, let $\text{GM}(x_1,\ldots, x_n) = x = \hat x + \tilde x$, where $\hat x$ is the projection of $x$ on $\text{Conv}(x_1, \ldots, x_n)$ and $\tilde x = x - 
\hat x$. Then, the optimality condition implies that $\langle \hat x - x, y - \hat x \rangle \geq 0$ for all $y \in \text{Conv}(x_1, \ldots, x_n)$. In particular, for all $i \in [n]$ we have $\langle \hat x - x, x_i - \hat x \rangle \geq 0$. Since
\begin{eqnarray*}
\langle \hat x - x, x_i - \hat x \rangle &=& \langle \tilde x, \hat x - x_i \rangle = \frac{1}{2}\|\tilde x + \hat x - x_i\|^2 - \frac{1}{2}\|\tilde x\|^2 - \frac{1}{2}\|\hat x - x_i\|^2\\
&=& \frac{1}{2}\|x - x_i\|^2 - \frac{1}{2}\|\tilde x\|^2 - \frac{1}{2}\|\hat x - x_i\|^2\\
&\leq& \frac{1}{2}\|x - x_i\|^2 - \frac{1}{2}\|\hat x - x_i\|^2,
\end{eqnarray*}
we get that $\|x - x_i\| \geq \|\hat x - x_i\|$ for all $i\in [n]$ and the equality holds if and only if $\tilde x = 0$. Therefore, $\argmin$ from \eqref{eq:GM} is achieved for $x$ such that $x = \hat x$, meaning that $\text{GM}(x_1,\ldots, x_n) \in \text{Conv}(x_1, \ldots, x_n)$. Therefore, there exist some coefficients $\alpha_1, \ldots, \alpha_n \geq 0$ such that $\sum_{i=1}^n \alpha_i = 1$ and $\text{GM}(x_1,\ldots, x_n) = \sum_{i=1}^n \alpha_i x_i$, implying that
\begin{equation}
\|\text{GM}(x_1,\ldots, x_n)\| \leq \sum\limits_{i=1}^n \alpha_i \|x_i\| \leq \max\limits_{i\in [n]} \|x_i\|. \notag
\end{equation}
That is, GM satisfies Assumption~\ref{assm:bounded-aggr} with $F_{\cA} = 1$. Similarly to the case of Krum $\circ$ Bucketing, we also have $\|\text{GM}\circ\text{Bucketing}(x_1,\ldots, x_n)\| \leq \max_{i\in [n]} \|x_i\|$.

\paragraph{Coordinate-wise median (CM) and CM $\circ$ Bucketing.} Coordinate-wise median (CM) is formally defined as
\begin{equation}
    \text{CM}(x_1,\ldots, x_n) = \argmin\limits_{x \in \R^d} \sum\limits_{i=1}^n \|x - x_i\|_1, \label{eq:CM}
\end{equation}
where $\|\cdot\|_1$ denotes $\ell_1$-norm. This is equivalent to geometric median/median applied to vectors $x_1,\ldots, x_n$ component-wise. Therefore, from the above derivations for GM we have
\begin{eqnarray*}
    \|\text{CM}(x_1,\ldots, x_n)\|_{\infty} &\leq& \max_{i\in [n]}\|x_i\|_{\infty},\\
    \|\text{CM}\circ\text{Bucketing}(x_1,\ldots, x_n)\|_{\infty} &\leq& \max_{i\in [n]}\|x_i\|_{\infty},
\end{eqnarray*}
where $\|\cdot\|_{\infty}$ denotes $\ell_{\infty}$-norm. Therefore, due to the standard relations between $\ell_2$- and $\ell_\infty$-norms, i.e., $\|a\|_\infty \leq \|a\| \leq \sqrt{d}\|a\|_{\infty}$ for any $a \in \R^d$, we have
\begin{eqnarray*}
    \|\text{CM}(x_1,\ldots, x_n)\| &\leq& \sqrt{d}\max_{i\in [n]}\|x_i\|,\\
    \|\text{CM}\circ\text{Bucketing}(x_1,\ldots, x_n)\| &\leq& \sqrt{d}\max_{i\in [n]}\|x_i\|,
\end{eqnarray*}
i.e., Assumption~\ref{assm:bounded-aggr} is satisfied with $F_{\cA} = \sqrt{d}$.

\clearpage

\section{General Analysis}

\subsection{Refined Assumptions}\label{appendix:refined_assumptions}

For simplicity, in the main part of our paper, we present simplified versions of our main results. However, our analysis works under more general assumptions presented in this section.

\paragraph{Refined smoothness.} The following assumption is classical for the literature on non-convex optimization.
\begin{assumption}[$L$-smoothness]
\label{assm:L-smoothness}
 We assume that function $f: \mathbb{R}^d \rightarrow \mathbb{R}$ is L-smooth, i.e., for all $x, y \in \mathbb{R}^d$ we have $\|\nabla f(x)-\nabla f(y)\| \leq L\|x-y\|$. Moreover, we assume that $f$ is uniformly lower bounded by $f^* \in \mathbb{R}$, i.e., $f^*\eqdef\inf _{x \in \mathbb{R}^d} f(x)$. In addition, we assume that $f_i$ is $L_i$-smooth for all $i\in\cG$, i.e., for all $x, y \in \mathbb{R}^d$
\begin{equation}
    \|\nabla f_i(x) - \nabla f_i(y)\| \leq L_i \|x - y\|. \label{eq:f_i_smooth}
\end{equation}
\end{assumption}
We notice here that \eqref{eq:f_i_smooth} implies $L$-smoothness of $f$ with $L \leq \frac{1}{G}\sum_{i \in \cG} L_i$, i.e., smoothness constant of $f$ can be better than the averaged smoothness constant of the local loss functions on the regular clients.

Following \citet{gorbunov2023variance}, we consider refined assumptions on the smoothness.
\begin{assumption}[Global Hessian variance assumption \citep{szlendak2021permutation}] 
\label{assm:global} We assume that there exists $L_{ \pm} \geq 0$ such that for all $x, y \in \mathbb{R}^d$
\begin{align}
   \frac{1}{G} \sum_{i \in \mathcal{G}}\left\|\nabla f_i(x)-\nabla f_i(y)\right\|^2-\|\nabla f(x)-\nabla f(y)\|^2\leq L_{ \pm}^2\|x-y\|^2. \label{eq:global_hessian_variance}
\end{align}
\end{assumption}

We notice that \eqref{eq:f_i_smooth} implies \eqref{eq:global_hessian_variance} with $L_\pm \leq \max_{i\in \cG} L_i$. \citet{szlendak2021permutation} prove that $L_\pm$ satisfies the following relation: $L_{\mathrm{avg}}^2 - L^2 \leq L_\pm^2 \leq L_{\mathrm{avg}}^2$, where $L_{\mathrm{avg}}^2 \eqdef \frac{1}{G}\sum_{i\in \cG}L_i^2$. In particular, it is possible that $L_\pm = 0$ even if the data on the good workers is heterogeneous.

\begin{assumption}[Local Hessian variance assumption \citep{gorbunov2023variance}]
\label{assm:local}
We assume that there exists $\mathcal{L}_{ \pm} \geq 0$ such that for all $x, y \in \mathbb{R}^d$
$$
\frac{1}{G} \sum_{i \in \mathcal{G}} \mathbb{E}\left\|\widehat{\Delta}_i(x, y)-\Delta_i(x, y)\right\|^2 \leq \frac{\mathcal{L}_{ \pm}^2}{b}\|x-y\|^2,
$$
where $\Delta_i(x, y)\eqdef\nabla f_i(x)-\nabla f_i(y)$ and $\widehat{\Delta}_i(x, y)$ is an unbiased mini-batched estimator of $\Delta_i(x, y)$ with batch size $b$.
\end{assumption}

This assumption incorporates considerations for the smoothness characteristics inherent in all functions $\{f_{i,j}\}_{i\in \cG, j\in [m]}$, the sampling policy, and the similarity among the functions $\{f_{i,j}\}_{i\in \cG, j\in [m]}$. \citet{gorbunov2023variance} have demonstrated that, assuming smoothness of $\{f_{i,j}\}_{i\in \cG, j\in [m]}$, Assumption~\ref{assm:local} holds for various standard sampling strategies, including uniform and importance samplings.

For part of our results, we also need to assume smoothness of all $\{f_{i,j}\}_{i\in \cG, j\in [m]}$ explicitly.

\begin{assumption}[Smoothness of $f_{i,j}$ (optional)]
\label{assm:local_smooth_all}
We assume that for all $i\in \cG$ and $j\in [m]$ there exists $L_{i,j} \geq 0$ such that $f_{i,j}$ is $L_{i,j}$-smooth, i.e., for all $x, y \in \mathbb{R}^d$
\begin{equation}
    \|\nabla f_{i,j}(x) - \nabla f_{i,j}(y)\| \leq L_{i,j} \|x - y\|. \label{eq:f_i_j_smooth}
\end{equation}
\end{assumption}

\paragraph{Refined heterogeneity.} Instead of Assumption~\ref{assm:het_simplified}, we consider a more generalized one. 

\begin{assumption}[$(B, \zeta^2)$-heterogeneity] 
\label{assm:het}
We assume that good clients have $\left(B, \zeta^2\right)$-heterogeneous local loss functions for some $B \geq 0, \zeta \geq 0$, i.e.,
\begin{align*}
\frac{1}{G} \sum_{i \in \mathcal{G}}\left\|\nabla f_i(x)-\nabla f(x)\right\|^2 \leq B\|\nabla f(x)\|^2+\zeta^2 \quad \forall x \in \mathbb{R}^d
\end{align*}
\end{assumption}

When $B = 0$, the above assumption recovers Assumption~\ref{assm:het_simplified}. However, it also covers some situations when the model is over-parameterized \citep{vaswani2019fast} and can hold with smaller values of $\zeta^2$. This assumption is also used in \citep{karimireddy2020byzantine, gorbunov2023variance}.

\subsection{Technical Lemmas}

\begin{lemma} \label{lemma:clipping} Let  $X \text { be a random vector in } \mathbb{R}^d \text { and } \widetilde{X}=\clip_{\lambda}(X).$ Assume that $\mathbb{E}[X]=x \in \mathbb{R}^d$ and $\|x\| \leq \lambda / 2,$ then 

\begin{equation*}
    \mathbb{E}\left[\|\widetilde{X}-x\|^2\right] \leq 10\mathbb{E}\left\| X - x\right\|^2.
\end{equation*}
\end{lemma}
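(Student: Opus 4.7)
The strategy is a short case analysis on whether the clipping operator is active, together with the observation that the hypothesis $\|x\| \leq \lambda/2$ geometrically separates $x$ from the clipping sphere $\{\|y\| = \lambda\}$ by at least $\lambda/2$. On the event $\{\|X\| \leq \lambda\}$, clipping is inactive, so $\widetilde{X} = X$ pointwise and $\|\widetilde{X} - x\|^2 = \|X - x\|^2$ trivially. The substantive case is $\{\|X\| > \lambda\}$, on which $\widetilde{X} = (\lambda/\|X\|)X$ has $\|\widetilde{X}\| = \lambda$.

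On this clipping-active event, I would combine the triangle inequality with $\|x\| \leq \lambda/2$ to obtain the uniform upper bound
\begin{align*}
\|\widetilde{X} - x\| \leq \|\widetilde{X}\| + \|x\| \leq \lambda + \tfrac{\lambda}{2} = \tfrac{3\lambda}{2},
\end{align*}
hence $\|\widetilde{X} - x\|^2 \leq 9\lambda^2/4$. In parallel, the reverse triangle inequality yields $\|X - x\| \geq \|X\| - \|x\| > \lambda - \lambda/2 = \lambda/2$, so $\|X - x\|^2 > \lambda^2/4$ pointwise on this event. Dividing the two bounds gives the pointwise estimate $\|\widetilde{X} - x\|^2 \leq 9\|X - x\|^2$ on $\{\|X\| > \lambda\}$.

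To conclude, I would combine the two cases using the indicators of $\{\|X\| \leq \lambda\}$ and $\{\|X\| > \lambda\}$ and then enlarge each indicator-weighted expectation to the full expectation:
\begin{align*}
\mathbb{E}\|\widetilde{X} - x\|^2 &\leq \mathbb{E}\bigl[\|X - x\|^2 \mathbf{1}\{\|X\| \leq \lambda\}\bigr] + 9\,\mathbb{E}\bigl[\|X - x\|^2 \mathbf{1}\{\|X\| > \lambda\}\bigr]\\
&\leq \mathbb{E}\|X - x\|^2 + 9\,\mathbb{E}\|X - x\|^2 = 10\,\mathbb{E}\|X - x\|^2.
\end{align*}

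There is no real obstacle here; the argument is essentially elementary geometry. The one point requiring attention is that the assumption $\|x\| \leq \lambda/2$ is exactly what is needed to \emph{simultaneously} cap $\|\widetilde{X} - x\|$ from above by $\tfrac{3\lambda}{2}$ and to force $\|X - x\|$ above $\tfrac{\lambda}{2}$ on the clipping event, so that their ratio collapses to a universal numerical constant; a weaker bound on $\|x\|$ would break the lower bound on $\|X - x\|$. It is worth noting that the unbiasedness assumption $\mathbb{E}[X] = x$ actually plays no role in the proof — only the geometric condition $\|x\| \leq \lambda/2$ is used, and the bound would still hold for any deterministic $x$ satisfying it.
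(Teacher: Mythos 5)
Your proof is correct and follows essentially the same route as the paper's: split on the clipping event $\{\|X\|>\lambda\}$, bound $\|\widetilde{X}-x\|\leq \tfrac{3\lambda}{2}$ there while using $\|x\|\leq\lambda/2$ to force $\|X-x\|>\lambda/2$, and add the contributions to get $9+1=10$. The only cosmetic difference is that you apply the lower bound on $\|X-x\|$ pointwise, whereas the paper packages the same fact as a Markov-inequality bound on the probability of the clipping event; your closing remark that unbiasedness is never used is also accurate.
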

\begin{proof}
The proof follows a similar procedure to that presented in Lemma F.5 from \citep{gorbunov2020stochastic}. To commence the proof, we introduce two indicator random variables:
$$
\chi=\mathbb{I}_{\{X:\|X\|>\lambda\}}=\left\{\begin{array}{ll}
1, & \text { if }\|X\|>\lambda, \\
0, & \text { otherwise }
\end{array}, \eta=\mathbb{I}_{\left\{X:\|X-x\|>\frac{\lambda}{2}\right\}}= \begin{cases}1, & \text { if }\|X-x\|>\frac{\lambda}{2} \\
0, & \text { otherwise }\end{cases}\right..
$$
Moreover, since $\|X\| \leq\|x\|+\|X-x\| \stackrel{\|x\| \leq \lambda / 2}{\leq} \frac{\lambda}{2}+\|X-x\|$, we have $\chi \leq \eta$. Using that we get
$$
\widetilde{X}=\min \left\{1, \frac{\lambda}{\|X\|}\right\} X=\chi \frac{\lambda}{\|X\|} X+(1-\chi) X.
$$
By Markov’s inequality,
\begin{align}
\label{eq:exp_eta}
\mathbb{E}[\eta]&=\mathbb{P}\left\{\|X-x\|>\frac{\lambda}{2}\right\}=\mathbb{P}\left\{\|X-x\|^2>\frac{\lambda^2}{4}\right\} \leq \frac{4}{\lambda^2} \mathbb{E}\left[\|X-x\|^2\right].
\end{align}
$\text { Using }\|\widetilde{X}-x\| \leq\|\widetilde{X}\|+\|x\| \leq \lambda+\frac{\lambda}{2}=\frac{3 \lambda}{2} \text {, we obtain }$
\begin{align*}
\mathbb{E}\left[\|\widetilde{X}-x\|^2\right] &= \mathbb{E}\left[\|\widetilde{X}-x\|^2 \chi+\|\widetilde{X}-x\|^2(1-\chi)\right] \\
& =\mathbb{E}\left[\chi\left\|\frac{\lambda}{\|X\|} X-x\right\|^2+\|X-x\|^2(1-\chi)\right]\\
&\leq\mathbb{E}\left[\chi\left(\left\|\frac{\lambda}{\|X\|} X\right\|+\|x\|\right)^2+\|X-x\|^2(1-\chi)\right]\\
&\stackrel{\|x\| \leq \frac{\lambda}{2}}{\leq}\left(\mathbb{E}\left[\chi\left(\frac{3 \lambda}{2}\right)^2+\|X-x\|^2\right]\right),
\end{align*}
where in the last inequality we applied $1 - \chi \leq 1.$ Using (\ref{eq:exp_eta}) and $\chi \leq \eta$ we get 
\begin{align*}
\mathbb{E}\left[\|\widetilde{X}-x\|^2\right] & \leq \frac{9 \lambda^2}{4}\left(\frac{2 }{\lambda}\right)^2\mathbb{E}\left[\|X-x\|^2\right]+\mathbb{E}\left[\|X-x\|^2\right] \\
& \leq 10\mathbb{E}\left[\|X-x\|^2\right].
\end{align*}
\end{proof}

\begin{lemma}[Lemma 2 from \citet{li2021page}]
\label{lemma:page}
Assume that function $f$ is L-smooth (Assumption~\ref{assm:L-smoothness}) and $x^{k+1}=$ $x^k-\gamma g^k$. Then
$$
f\left(x^{k+1}\right) \leq f\left(x^k\right)-\frac{\gamma}{2}\left\|\nabla f\left(x^k\right)\right\|^2-\left(\frac{1}{2 \gamma}-\frac{L}{2}\right)\left\|x^{k+1}-x^k\right\|^2+\frac{\gamma}{2}\left\|g^k-\nabla f\left(x^k\right)\right\|^2 .
$$
\end{lemma}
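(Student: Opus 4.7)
The plan is to prove this as a routine descent-type inequality, relying only on $L$-smoothness of $f$ and the polarization identity \eqref{eq:quadratic} already listed among the useful facts. There is no genuine obstacle here; the statement is a standard decomposition used to isolate the effect of using an inexact direction $g^k$ in place of the true gradient.

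First I would write the standard quadratic upper bound coming from $L$-smoothness of $f$ (Assumption~\ref{assm:L-smoothness}):
\begin{equation*}
f(x^{k+1}) \leq f(x^k) + \langle \nabla f(x^k), x^{k+1} - x^k\rangle + \tfrac{L}{2}\|x^{k+1}-x^k\|^2.
\end{equation*}
Next, I would substitute the update rule $x^{k+1}-x^k = -\gamma g^k$ into the inner product to get $\langle \nabla f(x^k), x^{k+1}-x^k\rangle = -\gamma\langle \nabla f(x^k), g^k\rangle$.

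Then I would apply the polarization identity \eqref{eq:quadratic} in the form
\begin{equation*}
-\gamma\langle \nabla f(x^k), g^k\rangle = -\tfrac{\gamma}{2}\|\nabla f(x^k)\|^2 - \tfrac{\gamma}{2}\|g^k\|^2 + \tfrac{\gamma}{2}\|\nabla f(x^k) - g^k\|^2,
\end{equation*}
and use the identity $\|g^k\|^2 = \gamma^{-2}\|x^{k+1}-x^k\|^2$ (again from the update rule) to rewrite the middle term as $-\tfrac{1}{2\gamma}\|x^{k+1}-x^k\|^2$.

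Finally, I would combine the $-\tfrac{1}{2\gamma}\|x^{k+1}-x^k\|^2$ term with the smoothness remainder $\tfrac{L}{2}\|x^{k+1}-x^k\|^2$ to form $-\left(\tfrac{1}{2\gamma}-\tfrac{L}{2}\right)\|x^{k+1}-x^k\|^2$, and collect everything to obtain the claimed inequality. The whole argument is a few lines; the only point worth emphasising is that no assumption on $g^k$ (unbiasedness, boundedness, independence, etc.) is used, which is precisely why this lemma serves as a convenient starting point for analyses of biased/variance-reduced estimators like the one in Algorithm~\ref{alg:byz_vr_marina}.
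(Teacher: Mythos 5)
Your proof is correct and is exactly the standard argument: the paper itself does not reprove this lemma (it is imported verbatim as Lemma~2 of \citet{li2021page}), and your derivation via the $L$-smoothness quadratic upper bound, the polarization identity \eqref{eq:quadratic}, and the substitution $\|g^k\|^2 = \gamma^{-2}\|x^{k+1}-x^k\|^2$ is precisely the proof given in that reference. Your closing remark that no assumption on $g^k$ is needed is also the right observation about why the lemma is the natural entry point for the descent analysis in Theorems~\ref{them:1} and~\ref{them:2}.
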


\begin{lemma}
\label{lemma:premainA1}
Let Assumptions~\ref{assm:L-smoothness}, \ref{assm:global}, \ref{assm:local} hold and the Compression Operator satisfy Definition~\ref{def:Q}. Let us define "ideal" estimator:
\begin{equation*}
    \overline{g}^{k+1}= \begin{cases}
        \frac{1}{G^k_C}\sum \limits_{i \in \mathcal{G}^k_C} \nabla f_i(x^{k+1}),& c_n=1,\quad\quad\quad\quad\quad\quad\quad\quad\quad~\hspace{0.01cm} [1]\\
        g^k+\nabla f\left(x^{k+1}\right)-\nabla f\left(x^k\right),& c_n=0 \text{ and } G^k_C < (1-\delta)C, \hspace{0.325cm}[2]\\
        g^k+\frac{1}{G^k_C} \sum \limits_{i \in \mathcal{G}^k_C}\clip_{\lambda}\left(\mathcal{Q}\left(\widehat{\Delta}_i\left(x^{k+1}, x^k\right)\right)\right),& c_n=0 \text{ and } G^k_C \geq (1-\delta)C. \hspace{0.375cm}   [3]
    \end{cases}
\end{equation*}
Then for all $k\geq 0$ the iterates produced by \algname{Byz-VR-MARINA-PP} (Algorithm~\ref{alg:byz_vr_marina}) satisfy
\begin{align*}
 A_1 &=    \mathbb{E}\left[\left\|\overline{g}^{k+1}-\nabla f\left(x^{k+1}\right)\right\|^2\right]\\& \leq (1-p) \left(1+ \frac{p}{4}\right)\mathbb{E}\left[\left\|g^{k}-\nabla f(x^{k})\right\|^2\right]+p\frac{ \delta\cdot\mathcal{P }_{\mathcal{G}^k_{\widehat{C}}} }{(1-\delta)}  \mathbb{E}\left[ B\|\nabla f(x)\|^2+\zeta^2 \right]\\
    &+ \text{\scriptsize $(1-p)p_G\left(1+ \frac{4}{p}\right) \frac{2\cdot\mathcal{P}_{\mathcal{G}^k_C}n}{C} \left( 10 \omega L^2+(10\omega+1) L_{ \pm}^2+\frac{10(\omega+1) \mathcal{L}_{ \pm}^2}{b} \right)\mathbb{E}\left[\|x^{k+1} - x^k\|^2\right]$},
\end{align*}
where $p_G = \operatorname{Prob}\left\lbrace G^k_C \geq (1-\delta)C \right\rbrace $ and $\mathcal{P}_{\mathcal{G}^k_C} =  \operatorname{Prob}\left\lbrace i \in \mathcal{G}^k_C \mid G^k_C \geq\left(1-\delta\right) C\right\rbrace$.
\end{lemma}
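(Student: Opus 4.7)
My approach is to split the randomness at step $k$ via the law of total expectation according to the three disjoint events defining $\overline{g}^{k+1}$: $[1]$ the full-gradient event $\{c_k=1\}$ (probability $p$), $[2]$ the small-sample event $\{c_k=0,\ G^k_C<(1-\delta)C\}$ (probability $(1-p)(1-p_G)$), and $[3]$ the partial-participation event $\{c_k=0,\ G^k_C\geq(1-\delta)C\}$ (probability $(1-p)p_G$). Each branch is analyzed separately and then summed; the coefficient $(1-p)(1+p/4)$ on $\|g^k-\nabla f(x^k)\|^2$ will come from merging branches $[2]$ and $[3]$, the heterogeneity term will come from branch $[1]$, and the $\|x^{k+1}-x^k\|^2$ term from branch $[3]$.

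\textbf{Easy branches.} In branch $[2]$ the identity $\overline{g}^{k+1}-\nabla f(x^{k+1})=g^k-\nabla f(x^k)$ holds literally, contributing $(1-p)(1-p_G)\|g^k-\nabla f(x^k)\|^2$. In branch $[1]$, the quantity $\overline{g}^{k+1}-\nabla f(x^{k+1})$ is the deviation of the unbiased sample mean $\tfrac{1}{G^k_{\widehat C}}\sum_{i\in\cG^k_{\widehat C}}\nabla f_i(x^{k+1})$ from its population mean $\nabla f(x^{k+1})$. I would condition first on $G^k_{\widehat C}$, apply the standard finite-population variance identity for sampling without replacement, then invoke Assumption~\ref{assm:het}; a $\delta/(1-\delta)$ factor appears from the ratio $(G-G^k_{\widehat C})/(G^k_{\widehat C}(G-1))$ and the definition of $\cP_{\cG^k_{\widehat C}}$, giving exactly the second term in the statement.

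\textbf{Hard branch.} In branch $[3]$ write $\overline{g}^{k+1}-\nabla f(x^{k+1})=(g^k-\nabla f(x^k))+R$ with
\[
R\;\eqdef\;\tfrac{1}{G^k_C}\!\!\sum_{i\in\cG^k_C}\!\!\clip_\lambda\!\left(\cQ(\widehat\Delta_i(x^{k+1},x^k))\right)-\left(\nabla f(x^{k+1})-\nabla f(x^k)\right),
\]
and apply inequality~\eqref{eq:yung-1} with $\alpha=p/4$ to produce $(1+p/4)\|g^k-\nabla f(x^k)\|^2+(1+4/p)\|R\|^2$. Merging with branch $[2]$ the coefficient on $\|g^k-\nabla f(x^k)\|^2$ becomes $(1-p)[(1-p_G)+p_G(1+p/4)]=(1-p)(1+p_G\,p/4)\leq(1-p)(1+p/4)$, matching the statement. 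To bound $\EE\|R\|^2$, I would first verify the precondition of Lemma~\ref{lemma:clipping}: with $\lambda=2\cL\|x^{k+1}-x^k\|$ and $L_i\leq\cL$, we have $\|\EE[\cQ(\widehat\Delta_i)]\|=\|\nabla f_i(x^{k+1})-\nabla f_i(x^k)\|\leq L_i\|x^{k+1}-x^k\|\leq\lambda/2$, so Lemma~\ref{lemma:clipping} transfers the clipping-induced bias into variance at the cost of a factor $10$. I would then tower-expand conditionings in the order compression~$\to$~minibatch~$\to$~sample, invoking Definition~\ref{def:Q} to peel off the compression variance (factor $\omega$), Assumption~\ref{assm:local} to peel off the stochastic-gradient variance (factor $\cL_\pm^2/b$), and Assumption~\ref{assm:global} together with $L$-smoothness of $f$ to convert averages of $\|\nabla f_i(x^{k+1})-\nabla f_i(x^k)\|^2$ into $(L^2+L_\pm^2)\|x^{k+1}-x^k\|^2$. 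The conditional independence of $\{\cQ,\widehat\Delta_i\}$ across $i\in\cG^k_C$ turns $\|\tfrac{1}{G^k_C}\sum_i(\cdot)\|^2$ into a mean-of-squared-norms plus a ``deterministic'' residual, and the sampling identity $\EE\bigl[\tfrac{1}{G^k_C}\sum_{i\in\cG^k_C}a_i \mid G^k_C\geq(1-\delta)C\bigr]=\tfrac{\cP_{\cG^k_C}n}{(1-\delta)C}\cdot\tfrac{1}{G}\sum_{i\in\cG}a_i$ yields the $\tfrac{2\cP_{\cG^k_C}n}{C}$ prefactor after absorbing the $(1-\delta)^{-1}$.

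\textbf{Main obstacle.} The main technical difficulty is the bound on $\EE\|R\|^2$, because three intertwined sources of randomness (sample $S_k$, minibatch $\widehat\Delta_i$, compression $\cQ$) interact with the clipping-induced bias which breaks unbiasedness of the individual summands. The plan is to tower-condition from the innermost randomness outward so that Lemma~\ref{lemma:clipping} is applied once per summand (at the compression-plus-minibatch level) to upper bound the \emph{entire} second moment about $\EE[X_i]$ by ten times the unclipped second moment, and only then use conditional independence across $i$ and the sampling identity. Careful bookkeeping of $p_G$, $\cP_{\cG^k_C}$, and $\cP_{\cG^k_{\widehat C}}$ is needed throughout to recover the exact constants $10\omega L^2$, $(10\omega+1)L_\pm^2$, and $10(\omega+1)\cL_\pm^2/b$ appearing in the statement.
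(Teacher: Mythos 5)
Your proposal matches the paper's proof in all essentials: the same three-way conditioning on events $[1]$, $[2]$, $[3]$ with weights $p$, $(1-p)(1-p_G)$, $(1-p)p_G$; Young's inequality \eqref{eq:yung-1} with $\alpha=p/4$ on branch $[3]$ and the merge of branches $[2]$ and $[3]$ into the $(1-p)(1+\nicefrac{p}{4})$ coefficient; the indicator/$\mathcal{P}_{\mathcal{G}^k_C}$ device for converting sums over the random subset into sums over $\mathcal{G}$; Lemma~\ref{lemma:clipping} (with the correct verification of its precondition $\|\Delta_i(x^{k+1},x^k)\|\leq\nicefrac{\lambda}{2}$); and the variance decompositions via Definition~\ref{def:Q} and Assumptions~\ref{assm:L-smoothness}, \ref{assm:global}, \ref{assm:local}. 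The only sub-steps where you deviate are that the paper bounds both $\|\frac{1}{G^k_{\widehat C}}\sum_i\nabla f_i-\nabla f\|^2$ and $\|\frac{1}{G^k_C}\sum_i\clip_\lambda(\cdot)-\Delta\|^2$ by plain convexity of $\|\cdot\|^2$ rather than by a finite-population variance identity or conditional independence across $i$; be aware that your independence route needs extra care because the clipped summands are biased, so the cross terms do not vanish and must themselves be controlled (e.g., via Cauchy--Schwarz and Lemma~\ref{lemma:clipping}), and it buys nothing here since the target prefactor $\frac{2\mathcal{P}_{\mathcal{G}^k_C}n}{C}$ carries no additional $\nicefrac{1}{G^k_C}$ reduction.
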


\begin{proof}

Let us examine the expected value of the squared difference between ideal estimator and full gradient:
\begin{align*}
    A_1 &= \mathbb{E}\left[\left\|\overline{g}^{k+1}-\nabla f\left(x^{k+1}\right)\right\|^2\right]\\
&=\mathbb{E}\left[\mathbb{E}_k\left[\left\|\overline{g}^{k+1}-\nabla f\left(x^{k+1}\right)\right\|^2\right]\right]\\
    & = \left(1-p\right)p_{G}\mathbb{E}\left[\mathbb{E}_k\left[\left\|g^k+\frac{1}{G^k_C} \sum \limits_{i \in \mathcal{G}^k_C}\clip_{\lambda}\left(\mathcal{Q}\left(\widehat{\Delta}_i\left(x^{k+1}, x^k\right)\right)\right) -\nabla f\left(x^{k+1}\right)\right\|^2\right]\mid [3]\right]\\
    & + (1-p)(1-p_{G})\mathbb{E}\left[\mathbb{E}_k\left[\left\|g^k -\nabla f(x^{k})\right\|^2\right]\mid [2]\right] + p\mathbb{E}\left[ \left\|\frac{1}{G^k_{\widehat{C}}}\sum \limits_{i \in \cG_{\widehat{C}}^k} \nabla f_i(x^{k+1}) - \nabla f(x^{k+1})\right\|^2 \right].
\end{align*}
Using (\ref{eq:yung-1}) and $\nabla f\left(x^{k}\right) - \nabla f\left(x^{k}\right) = 0$ we obtain
\begin{align*}
B_1 &= \mathbb{E}\left[\mathbb{E}_k\left[\left\|g^k+\frac{1}{G^k_C} \sum \limits_{i \in \mathcal{G}^k_C}\clip_{\lambda}\left(\mathcal{Q}\left(\widehat{\Delta}_i\left(x^{k+1}, x^k\right)\right)\right) -\nabla f\left(x^{k+1}\right)\right\|^2\right]\mid [3]\right]\\
&\text{\scriptsize$ =\mathbb{E}\left[\mathbb{E}_k\left[\left\|g^k+\frac{1}{G^k_C} \sum \limits_{i \in \mathcal{G}^k_C}\clip_{\lambda}\left(\mathcal{Q}\left(\widehat{\Delta}_i\left(x^{k+1}, x^k\right)\right)\right) -\nabla f\left(x^{k+1}\right)+\nabla f\left(x^{k}\right) - \nabla f\left(x^{k}\right)\right\|^2\right]\mid [3]\right]$}\\
     &\stackrel{(\ref{eq:yung-1})}{\leq} \left(1+ \frac{p}{4}\right)\mathbb{E}\left[\left\|g^{k} -\nabla f\left(x^{k}\right)\right\|^2\right]\\
    &+ \text{\scriptsize$ \left(1+ \frac{4}{p}\right) \mathbb{E}\left[\mathbb{E}_k\left[\left\|\frac{1}{G^k_C} \sum \limits_{i \in \mathcal{G}^k_C}\clip_{\lambda}\left(\mathcal{Q}\left(\widehat{\Delta}_i\left(x^{k+1}, x^k\right)\right)\right)  - \left( \nabla f(x^{k+1}) - \nabla f(x^{k}) \right)\right\|^2\right]\mid [3]\right]$}\\
    &= \left(1+ \frac{p}{4}\right)\mathbb{E}\left[\left\|g^{k}-\nabla f(x^{k})\right\|^2\right]\\
    &+ \left(1+ \frac{4}{p}\right)\mathbb{E}\left[ \mathbb{E}_k\left[\left\|\frac{1}{G^k_C} \sum \limits_{i \in \mathcal{G}^k_C}\clip_{\lambda}\left(\mathcal{Q}\left(\widehat{\Delta}_i\left(x^{k+1}, x^k\right)\right)\right)  - \Delta\left(x^{k+1}, x^k\right)\right\|^2\right]\mid [3]\right].
\end{align*}
Let us consider the last part of the inequality:
\begin{align}
\notag  B^\prime_1 &= \mathbb{E}\left[ \mathbb{E}_k\left[\left\|\frac{1}{G^k_C} \sum \limits_{i \in \mathcal{G}^k_C}\clip_{\lambda}\left(\mathcal{Q}\left(\widehat{\Delta}_i\left(x^{k+1}, x^k\right)\right)\right)  - \Delta\left(x^{k+1}, x^k\right)\right\|^2\right]\mid [3] \right]\\
 \notag &= \mathbb{E}\left[\mathbb{E}_{S_k}\left[\mathbb{E}_k\left[\left\|\frac{1}{G^k_C} \sum \limits_{i \in \mathcal{G}^k_C}\clip_{\lambda}\left(\mathcal{Q}\left(\widehat{\Delta}_i\left(x^{k+1}, x^k\right)\right)\right)  - \Delta\left(x^{k+1}, x^k\right)\right\|^2\right]\mid [3]\right]\right].
 \end{align}
 Note that $G^k_C \geq (1-\delta)C$ in this case:
 \begin{align}
\notag B^\prime_1  &\leq \text{\small $\frac{1}{C(1-\delta)}\mathbb{E}\left[\mathbb{E}_{S_k}\left[ \sum \limits_{i \in \mathcal{G}^k_C}\mathbb{E}_k\left[\left\|\clip_{\lambda}\left(\mathcal{Q}\left(\widehat{\Delta}_i\left(x^{k+1}, x^k\right)\right)\right)  - \Delta\left(x^{k+1}, x^k\right)\right\|^2\right]\mid [3]\right]\right]$}\\
 \notag   &\leq \text{\small $\frac{1}{C(1-\delta)}\mathbb{E}\left[\sum \limits_{i \in \mathcal{G}}\mathbb{E}_{S_k}\left[\mathcal{I}_{\mathcal{G}^k_C}\right]\mathbb{E}_k\left[\left\|\clip_{\lambda}\left(\mathcal{Q}\left(\widehat{\Delta}_i\left(x^{k+1}, x^k\right)\right)\right)  - \Delta\left(x^{k+1}, x^k\right)\right\|^2\right]\mid [3] \right]$}\\
        &\text{\small$= \frac{1}{C(1-\delta)}\mathbb{E}\left[\sum \limits_{i \in \mathcal{G}}\mathcal{P}_{\mathcal{G}^k_C}\cdot\mathbb{E}_k\left[\left\|\clip_{\lambda}\left(\mathcal{Q}\left(\widehat{\Delta}_i\left(x^{k+1}, x^k\right)\right)\right)  - \Delta\left(x^{k+1}, x^k\right)\right\|^2\right]\mid [3]\right]$},\label{eq:bvdjbjdfbvjdf}
\end{align}
where $\mathcal{I}_{\mathcal{G}^k_C}$ is an indicator function for the event $\left\lbrace i \in \mathcal{G}^k_C \mid G^k_C \geq\left(1-\delta\right) C\right\rbrace$ and $\mathcal{P}_{\mathcal{G}^k_C} =  \operatorname{Prob}\left\lbrace i \in \mathcal{G}^k_C \mid G^k_C \geq\left(1-\delta\right) C\right\rbrace$ is probability of such event. Note that 
$\mathbb{E}_{S_k}\left[\mathcal{I}_{\mathcal{G}^k_C}\right] = \mathcal{P}_{\mathcal{G}^k_C}$. In case of uniform sampling of clients we have 
\begin{align*}
    \forall i \in \mathcal{G} \quad \mathcal{P}_{\mathcal{G}^k_C} &=\operatorname{Prob}\left\lbrace i \in \mathcal{G}^k_C \mid G^k_C \geq\left(1-\delta\right) C\right\rbrace\\
    & = \frac{C}{n p_G} \cdot \sum_{(1-\delta)C\leq t \leq C} \left(\left(\begin{array}{l}
G \\
t
\end{array}\right)\left(\begin{array}{l}
n-G \\
C-t
\end{array}\right) \left(\left(\begin{array}{l}
n \\
C
\end{array}\right)\right)^{-1} \right),\\
p_G &= \sum_{(1-\delta)C\leq t \leq C} \left(\left(\begin{array}{l}
G-1 \\
t-1
\end{array}\right)\left(\begin{array}{l}
n-G \\
C-t
\end{array}\right) \left(\left(\begin{array}{l}
n-1 \\
C-1
\end{array}\right)\right)^{-1} \right)
\end{align*}
Now we can continue with inequalities:
\begin{align*}
B_1^\prime &\leq  \text{\small $       \frac{\mathcal{P}_{\mathcal{G}^k_C}}{C(1-\delta)} \mathbb{E}\left[ \sum \limits_{i \in \mathcal{G}}\mathbb{E}_k\left[\left\|\clip_{\lambda}\left(\mathcal{Q}\left(\widehat{\Delta}_i\left(x^{k+1}, x^k\right)\right)\right)  - \Delta\left(x^{k+1}, x^k\right)\right\|^2\right]\mid [3]\right]$}\\
 &\leq     \text{\small $      \frac{\mathcal{P}_{\mathcal{G}^k_C}}{C(1-\delta)}\mathbb{E}\left[\sum \limits_{i \in \mathcal{G}}\mathbb{E}_k\left[\mathbb{E}_{Q}\left[\left\|\clip_{\lambda}\left(\mathcal{Q}\left(\widehat{\Delta}_i\left(x^{k+1}, x^k\right)\right)\right)  - \Delta\left(x^{k+1}, x^k\right)\right\|^2\right] \right]\mid [3]  \right]$}\\
 &\stackrel{(\ref{eq:yung-1})}{\leq}   \text{\small $      \frac{\mathcal{P}_{\mathcal{G}^k_C}}{C(1-\delta)}\mathbb{E}\left[\sum \limits_{i \in \mathcal{G}}2\mathbb{E}_k\left[\mathbb{E}_{Q}\left[\left\|\clip_{\lambda}\left(\mathcal{Q}\left(\widehat{\Delta}_i\left(x^{k+1}, x^k\right)\right)\right)  - \Delta_i\left(x^{k+1}, x^k\right)\right\|^2\right]\right]\mid [3]\right]$}\\ 
& +   \frac{\mathcal{P}_{\mathcal{G}^k_C}}{C(1-\delta)}\mathbb{E}\left[\sum \limits_{i \in \mathcal{G}}2\mathbb{E}_k\left[\left\|  \Delta_i\left(x^{k+1}, x^k\right) - \Delta\left(x^{k+1}, x^k\right)\right\|^2\right]\mid [3]\right].
\end{align*}
Using Lemma \ref{lemma:clipping} we have 
\begin{align*}
B_1^\prime&\stackrel{\text{Lemma \ref{lemma:clipping}}}{\leq}  \text{\small $    \frac{\mathcal{P}_{\mathcal{G}^k_C}}{C(1-\delta)}\mathbb{E}\left[\sum \limits_{i \in \mathcal{G}}20\mathbb{E}_k\left[\mathbb{E}_{Q}\left[\left\|\mathcal{Q}\left(\widehat{\Delta}_i\left(x^{k+1}, x^k\right)\right)  - \Delta_i\left(x^{k+1}, x^k\right)\right\|^2\right]\right]\mid [3] \right]$}\\
&+\frac{\mathcal{P}_{\mathcal{G}^k_C}}{C(1-\delta)}\mathbb{E}\left[\sum \limits_{i \in \mathcal{G}}2\mathbb{E}_k\left[\left\|  \Delta_i\left(x^{k+1}, x^k\right) -  \Delta\left(x^{k+1}, x^k\right)\right\|^2\right]\mid [3]\right]\\
&\leq  \frac{20\cdot\mathcal{P}_{\mathcal{G}^k_C}}{C(1-\delta)} \mathbb{E}\left[ \sum \limits_{i \in \mathcal{G}}\mathbb{E}_k\left[\mathbb{E}_{Q}\left[\left\|\mathcal{Q}\left(\widehat{\Delta}_i\left(x^{k+1}, x^k\right)\right)  - \Delta_i\left(x^{k+1}, x^k\right)\right\|^2\right]\right]\mid [3]\right]\\
&+\frac{2\cdot\mathcal{P}_{\mathcal{G}^k_C}}{C(1-\delta)} \mathbb{E}\left[ \sum \limits_{i \in \mathcal{G}}\mathbb{E}_k\left[\left\|  \Delta_i\left(x^{k+1}, x^k\right) -  \Delta\left(x^{k+1}, x^k\right)\right\|^2\right]\mid [3]\right]\\
&\leq \text{\small $\frac{20\cdot\mathcal{P}_{\mathcal{G}^k_C}}{C(1-\delta)} \mathbb{E}\left[ \sum \limits_{i \in \mathcal{G}}\mathbb{E}_k\left[\mathbb{E}_{Q}\left[\left\|\mathcal{Q}\left(\widehat{\Delta}_i\left(x^{k+1}, x^k\right)\right)\right\|^2\right]\right] - \sum \limits_{i \in \mathcal{G}}\left\|\Delta_i\left(x^{k+1}, x^k\right)\right\|^2\mid [3]\right]$}\\
&+\frac{2\cdot\mathcal{P}_{\mathcal{G}^k_C}}{C(1-\delta)}\mathbb{E}\left[\sum \limits_{i \in \mathcal{G}}\mathbb{E}_k\left[\left\|  \Delta_i\left(x^{k+1}, x^k\right) -  \Delta\left(x^{k+1}, x^k\right)\right\|^2\right]\mid [3]\right].
\end{align*}
Applying Definition~\ref{def:Q} of Unbiased Compressor we have 
\begin{align*}
B_1^\prime&\leq  \frac{20\cdot\mathcal{P}_{\mathcal{G}^k_C}}{C(1-\delta)} \mathbb{E}\left[ \sum \limits_{i \in \mathcal{G}}(1+\omega)\mathbb{E}_k\left\|\widehat{\Delta}_i\left(x^{k+1}, x^k\right)\right\|^2 - \sum \limits_{i \in \mathcal{G}}\left\|\Delta_i\left(x^{k+1}, x^k\right)\right\|^2\mid [3]\right]\\
&+\frac{2\cdot\mathcal{P}_{\mathcal{G}^k_C}}{C(1-\delta)}\mathbb{E}\left[\sum \limits_{i \in \mathcal{G}}\left\|  \Delta_i\left(x^{k+1}, x^k\right) -  \Delta\left(x^{k+1}, x^k\right)\right\|^2\mid [3]\right]\\
&\leq  \frac{20\cdot\mathcal{P}_{\mathcal{G}^k_C}}{C(1-\delta)} \mathbb{E} \left[ \sum \limits_{i \in \mathcal{G}}(1+\omega)\mathbb{E}_k\left\|\widehat{\Delta}_i\left(x^{k+1}, x^k\right) - \Delta_i\left(x^{k+1}, x^k\right) \right\|^2 \right]\\
&+ \frac{20\cdot\mathcal{P}_{\mathcal{G}^k_C}}{C(1-\delta)} \mathbb{E} \left[ \sum \limits_{i \in \mathcal{G}}(1+\omega)\mathbb{E}_k\left\|\Delta_i\left(x^{k+1}, x^k\right)\right\|^2- \sum \limits_{i \in \mathcal{G}}\mathbb{E}_k\left\|\Delta_i\left(x^{k+1}, x^k\right)\right\|^2\mid [3]\right]\\
&+\frac{2\cdot\mathcal{P}_{\mathcal{G}^k_C}}{C(1-\delta)}\mathbb{E}\left[\sum \limits_{i \in \mathcal{G}}\left\|  \Delta_i\left(x^{k+1}, x^k\right) -  \Delta\left(x^{k+1}, x^k\right)\right\|^2\mid [3]\right].
\end{align*}
Now we combine terms and have 
\begin{align*}
    B_1^\prime &\leq  \frac{20\cdot\mathcal{P}_{\mathcal{G}^k_C}}{C(1-\delta)} (1+\omega) \mathbb{E} \left[ \sum \limits_{i \in \mathcal{G}}\mathbb{E}_k\left[\left\|\widehat{\Delta}_i\left(x^{k+1}, x^k\right) - \Delta_i\left(x^{k+1}, x^k\right) \right\|^2\right] \mid [3] \right]\\
&+ \frac{20\cdot\mathcal{P}_{\mathcal{G}^k_C}}{C(1-\delta)}\omega \mathbb{E} \left[ \sum \limits_{i \in \mathcal{G}}\left\|\Delta_i\left(x^{k+1}, x^k\right)\right\|^2\mid [3]\right]\\
&+\frac{2\cdot\mathcal{P}_{\mathcal{G}^k_C}}{C(1-\delta)}\mathbb{E} \left[ \sum \limits_{i \in \mathcal{G}}\left\|  \Delta_i\left(x^{k+1}, x^k\right) -  \Delta\left(x^{k+1}, x^k\right)\right\|^2\mid [3]\right]\\
 &= \frac{20\cdot\mathcal{P}_{\mathcal{G}^k_C}}{C(1-\delta)} (1+\omega) \mathbb{E} \left[ \sum \limits_{i \in \mathcal{G}}\mathbb{E}_k\left[\left\|\widehat{\Delta}_i\left(x^{k+1}, x^k\right) - \Delta_i\left(x^{k+1}, x^k\right) \right\|^2\right] \mid [3] \right] \\
&+ \frac{20\cdot\mathcal{P}_{\mathcal{G}^k_C}}{C(1-\delta)}\omega \mathbb{E} \left[  \sum \limits_{i \in \mathcal{G}}\left\|\Delta_i\left(x^{k+1}, x^k\right) - \Delta\left(x^{k+1}, x^k\right)\right\|^2 + \|\Delta\left(x^{k+1}, x^k\right)\|^2\mid [3]\right]\\
&+\frac{2\cdot\mathcal{P}_{\mathcal{G}^k_C}}{C(1-\delta)} \mathbb{E} \left[ \sum \limits_{i \in \mathcal{G}}\left\|  \Delta_i\left(x^{k+1}, x^k\right) -  \Delta\left(x^{k+1}, x^k\right)\right\|^2 \mid [3] \right].
\end{align*}
Rearranging terms leads to 
\begin{align*}
   B_1^\prime  &\leq \frac{20\cdot\mathcal{P}_{\mathcal{G}^k_C}}{C(1-\delta)} (1+\omega) \mathbb{E} \left[  \sum \limits_{i \in \mathcal{G}}\mathbb{E}_k\left[\left\|\widehat{\Delta}_i\left(x^{k+1}, x^k\right) - \Delta_i\left(x^{k+1}, x^k\right) \right\|^2\right] \mid [3] \right] \\
&+ \frac{2\cdot\mathcal{P}_{\mathcal{G}^k_C}}{C(1-\delta)}(10\omega+1) \mathbb{E}\left[ \sum \limits_{i \in \mathcal{G}}\left\|\Delta_i\left(x^{k+1}, x^k\right) - \Delta\left(x^{k+1}, x^k\right)\right\|^2\mid[3]\right] \\
&+\frac{20\cdot\mathcal{P}_{\mathcal{G}^k_C}}{C(1-\delta)} \omega \mathbb{E}\left[\sum \limits_{i \in \mathcal{G}}\left\|  \Delta\left(x^{k+1}, x^k\right)\right\|^2 \mid [3]\right].
\end{align*}
Now we apply Assumptions \ref{assm:L-smoothness}, \ref{assm:global}, \ref{assm:local}:
\begin{align*}
   B_1^\prime  &\leq \frac{20\cdot\mathcal{P}_{\mathcal{G}^k_C}}{C(1-\delta)} (1+\omega) \mathbb{E} \left[ G \frac{\mathcal{L}_{ \pm}^2}{b} \|x^{k+1} - x^k\|^2\right]\\
&+ \frac{2\cdot\mathcal{P}_{\mathcal{G}^k_C}}{C(1-\delta)}(10\omega+1) \mathbb{E} \left[ G L_{ \pm}^2 \|x^{k+1} - x^k\|^2\right]\\
&+\frac{20\cdot\mathcal{P}_{\mathcal{G}^k_C}}{C(1-\delta)} \omega \mathbb{E} \left[ G L^2\left\|  x^{k+1} - x^k\right\|^2\right].
\end{align*}
Finally, we have
\begin{align*}
B_1^\prime&\leq \frac{2\cdot\mathcal{P}_{\mathcal{G}^k_C}\cdot G}{C(1-\delta)} \left( 10 \omega L^2+(10\omega+1) L_{ \pm}^2+\frac{10(\omega+1) \mathcal{L}_{ \pm}^2}{b} \right)\mathbb{E}\left[ \|x^{k+1} - x^k\|^2\right].
\end{align*}
Let us plug obtained results:
\begin{align*}
  B_1      &\leq \left(1+ \frac{p}{4}\right) \mathbb{E}\left[\left\|g^{k}-\nabla f(x^{k})\right\|^2\right]\\
    &+ \left(1+ \frac{4}{p}\right) \frac{2\cdot\mathcal{P}_{\mathcal{G}^k_C}\cdot G}{C(1-\delta)} \left( 10 \omega L^2+(10\omega+1) L_{ \pm}^2+\frac{10(\omega+1) \mathcal{L}_{ \pm}^2}{b} \right)\mathbb{E}\left[\|x^{k+1} - x^k\|^2\right].
\end{align*}

Let us consider the term $\mathbb{E}\left[ \left\|\frac{1}{G^k_{\widehat{C}}}\sum \limits_{i \in \cG_{\widehat{C}}^k} \nabla f_i(x^{k+1}) - \nabla f(x^{k+1})\right\|^2 \right]$:

\begin{align*}
    \mathbb{E}\left[ \left\|\frac{1}{G^k_{\widehat{C}}}\sum \limits_{i \in \cG_{\widehat{C}}^k} \nabla f_i(x^{k+1}) - \nabla f(x^{k+1})\right\|^2 \right] &\leq     \mathbb{E}\left[ \frac{1}{G^k_{\widehat{C}}}\sum \limits_{i \in \cG_{\widehat{C}}^k} \left\|\nabla f_i(x^{k+1}) - \nabla f(x^{k+1})\right\|^2 \right]\\
    &\leq \frac{1}{(1-\delta)\widehat{C}}  \mathbb{E}\left[ \sum \limits_{i \in \cG_{\widehat{C}}^k} \left\|\nabla f_i(x^{k+1}) - \nabla f(x^{k+1})\right\|^2 \right]\\
    & = \frac{1}{(1-\delta)\widehat{C}}  \mathbb{E}\left[ \sum \limits_{i \in \mathcal{G}} \mathcal{I}_{\mathcal{G}^k_{\widehat{C}}} \left\|\nabla f_i(x^{k+1}) - \nabla f(x^{k+1})\right\|^2 \right]
\end{align*}
Using definition of $\mathcal{P}_{\mathcal{G}^k_C}$ we get
\begin{align*}
    \mathbb{E}\left[ \left\|\frac{1}{G^k_{\widehat{C}}}\sum \limits_{i \in \cG_{\widehat{C}}^k} \nabla f_i(x^{k+1}) - \nabla f(x^{k+1})\right\|^2 \right] &\leq  \frac{\mathcal{P}_{\mathcal{G}^k_{\widehat{C}}} }{(1-\delta)\widehat{C}}  \mathbb{E}\left[ \sum \limits_{i \in \mathcal{G}} \left\|\nabla f_i(x^{k+1}) - \nabla f(x^{k+1})\right\|^2 \right]\\
    &\leq \frac{ G\cdot\mathcal{P }_{\mathcal{G}^k_{\widehat{C}}} }{(1-\delta)\widehat{C}G}  \mathbb{E}\left[ \sum \limits_{i \in \mathcal{G}} \left\|\nabla f_i(x^{k+1}) - \nabla f(x^{k+1})\right\|^2 \right]\\ 
\end{align*}
Using Assumption \ref{assm:het} we get
\begin{align}
    \mathbb{E}\left[ \left\|\frac{1}{G^k_{\widehat{C}}}\sum \limits_{i \in \cG_{\widehat{C}}^k} \nabla f_i(x^{k+1}) - \nabla f(x^{k+1})\right\|^2 \right] &\leq  \frac{ G\cdot\mathcal{P }_{\mathcal{G}^k_{\widehat{C}}} }{(1-\delta)\widehat{C}}  \mathbb{E}\left[ B\|\nabla f(x)\|^2+\zeta^2 \right] \notag \\
    & \leq  \frac{ \deltar n \cdot\mathcal{P }_{\mathcal{G}^k_{\widehat{C}}} }{(1-\delta)\frac{\deltar n }{\delta}}  \mathbb{E}\left[ B\|\nabla f(x)\|^2+\zeta^2 \right] \notag\\
    &= \frac{ \delta\cdot\mathcal{P }_{\mathcal{G}^k_{\widehat{C}}} }{(1-\delta)}  \mathbb{E}\left[ B\|\nabla f(x)\|^2+\zeta^2 \right] \label{eq:nsjknbvsbicusd}
    \end{align}
Also, we have 
\begin{align*}
    A_1 &= \mathbb{E}\left[\left\|\overline{g}^{k+1}-\nabla f(x^{k+1})\right\|^2\right]\\
    &\leq (1-p)p_G B_1 + (1-p)(1-p_{G})\mathbb{E}\left[\left\|g^k -\nabla f(x^{k})\right\|^2\right]+p\frac{ \delta\cdot\mathcal{P }_{\mathcal{G}^k_{\widehat{C}}} }{(1-\delta)}  \mathbb{E}\left[ B\|\nabla f(x)\|^2+\zeta^2 \right]\\
    &\leq (1-p)p_G \left(1+ \frac{p}{4}\right)\mathbb{E}\left[\left\|g^{k}-\nabla f(x^{k})\right\|^2\right]\\
    &+ \text{\scriptsize $(1-p)p_G\left(1+ \frac{4}{p}\right) \frac{2\cdot\mathcal{P}_{\mathcal{G}^k_C}\cdot G}{C(1-\delta)} \left( 10 \omega L^2+(10\omega+1) L_{ \pm}^2+\frac{10(\omega+1) \mathcal{L}_{ \pm}^2}{b} \right)\mathbb{E}\left[\|x^{k+1} - x^k\|^2\right]$}\\
    &+(1-p)(1-p_{G})\mathbb{E}\left[\left\|g^k -\nabla f(x^{k})\right\|^2\right]+p\frac{ \delta\cdot\mathcal{P }_{\mathcal{G}^k_{\widehat{C}}} }{(1-\delta)}  \mathbb{E}\left[ B\|\nabla f(x)\|^2+\zeta^2 \right].
\end{align*}
To simplify the bound we use $\left(1+\frac{p}{4}>1\right)$ and obtain 
\begin{align*} 
    A_1 
    &\leq (1-p)p_G \left(1+ \frac{p}{4}\right)\mathbb{E}\left[\left\|g^{k}-\nabla f(x^{k})\right\|^2\right]+p\frac{ \delta\cdot\mathcal{P }_{\mathcal{G}^k_{\widehat{C}}} }{(1-\delta)}  \mathbb{E}\left[ B\|\nabla f(x)\|^2+\zeta^2 \right]\\
    &+ \text{\scriptsize $(1-p)p_G\left(1+ \frac{4}{p}\right) \frac{2\cdot\mathcal{P}_{\mathcal{G}^k_C}\cdot G}{C(1-\delta)} \left( 10 \omega L^2+(10\omega+1) L_{ \pm}^2+\frac{10(\omega+1) \mathcal{L}_{ \pm}^2}{b} \right)\mathbb{E}\left[\|x^{k+1} - x^k\|^2\right]$}\\
    &+(1-p)(1-p_{G})\mathbb{E}\left[\left\|g^k -\nabla f(x^{k})\right\|^2\right]\\
    &\leq (1-p)p_G \left(1+ \frac{p}{4}\right)\mathbb{E}\left[\left\|g^{k}-\nabla f(x^{k})\right\|^2\right]\\
    &+ \text{\scriptsize $(1-p)p_G\left(1+ \frac{4}{p}\right) \frac{2\cdot\mathcal{P}_{\mathcal{G}^k_C}\cdot G}{C(1-\delta)} \left( 10 \omega L^2+(10\omega+1) L_{ \pm}^2+\frac{10(\omega+1) \mathcal{L}_{ \pm}^2}{b} \right)\mathbb{E}\left[\|x^{k+1} - x^k\|^2\right]$}\\
    &+(1-p)(1-p_{G})\left(1+\frac{p}{4}\right)\mathbb{E}\left[\left\|g^k -\nabla f(x^{k})\right\|^2\right]+p\frac{ \delta\cdot\mathcal{P }_{\mathcal{G}^k_{\widehat{C}}} }{(1-\delta)}  \mathbb{E}\left[ B\|\nabla f(x)\|^2+\zeta^2 \right]\\
      &\leq (1-p) \left(1+ \frac{p}{4}\right)\mathbb{E}\left[\left\|g^{k}-\nabla f(x^{k})\right\|^2\right]+p\frac{ \delta\cdot\mathcal{P }_{\mathcal{G}^k_{\widehat{C}}} }{(1-\delta)}  \mathbb{E}\left[ B\|\nabla f(x)\|^2+\zeta^2 \right]\\
    &+ \text{\scriptsize $(1-p)p_G\left(1+ \frac{4}{p}\right) \frac{2\cdot\mathcal{P}_{\mathcal{G}^k_C}n}{C} \left( 10 \omega L^2+(10\omega+1) L_{ \pm}^2+\frac{10(\omega+1) \mathcal{L}_{ \pm}^2}{b} \right)\mathbb{E}\left[\|x^{k+1} - x^k\|^2\right]$}.
    \end{align*}
\end{proof}

\begin{lemma}\label{lemma:premainA2}
 Let us define "ideal" estimator:
 \begin{equation*}
    \overline{g}^{k+1}= \begin{cases}
        \frac{1}{G^k_C}\sum \limits_{i \in \mathcal{G}^k_C} \nabla f_i(x^{k+1}),& c_n=1,\quad\quad\quad\quad\quad\quad\quad\quad\quad~\hspace{0.01cm} [1]\\
        g^k+\nabla f\left(x^{k+1}\right)-\nabla f\left(x^k\right),& c_n=0 \text{ and } G^k_C < (1-\delta)C, \hspace{0.325cm}[2]\\
        g^k+\frac{1}{G^k_C} \sum \limits_{i \in \mathcal{G}^k_C}\clip_{\lambda}\left(\mathcal{Q}\left(\widehat{\Delta}_i\left(x^{k+1}, x^k\right)\right)\right),& c_n=0 \text{ and } G^k_C \geq (1-\delta)C. \hspace{0.375cm}   [3]
    \end{cases}
\end{equation*}
Also let us introduce the notation
$$\texttt{ARAgg}_Q^{k+1} = \texttt{ARAgg}\left(\clip_{\lambda_{k+1}}\left(\cQ\left(\widehat{\Delta}_1(x^{k+1}, x^k)\right)\right),\ldots, \clip_{\lambda_{k+1}}\left(\cQ\left(\widehat{\Delta}_C(x^{k+1}, x^k)\right)\right)\right).$$ 
Then for all $k\geq 0$ the iterates produced by \algname{Byz-VR-MARINA-PP} (Algorithm~\ref{alg:byz_vr_marina}) satisfy

    \begin{align*}
        A_2&= \mathbb{E}\left[\left\|g^{k+1}-\overline{g}^{k+1}\right\|^2\right]\\
        &\leq p\mathbb{E}\left[\mathbb{E}_k\left[\left\|\texttt{ARAgg}\left(\nabla f_1(x^{k+1}), \ldots, \nabla f_{\widehat{C}}(x^{k+1})\right) - \nabla f(x^{k+1})\right\|^2\right]\mid [1]\right]\\
   & + (1-p)p_G \mathbb{E}\left[ \mathbb{E}_k\left[\left\| \frac{1}{G^k_C} \sum \limits_{i \in \mathcal{G}^k_C}\clip_{\lambda}\left(\mathcal{Q}\left(\widehat{\Delta}_i\left(x^{k+1}, x^k\right)\right)\right)  -   \texttt{ARAgg}_Q^{k+1}\right\|^2\mid [3]\right]\right]\\
   &+ (1-p)(1-p_G)\mathbb{E}\left[\mathbb{E}_k\left[\left\| \nabla f(x^{k+1}) - \nabla f(x^{k}) - \texttt{ARAgg}_Q^{k+1}\right\|^2\mid [2]\right]\right],
    \end{align*}
    where $p_G = \operatorname{Prob}\left\lbrace G^k_C \geq (1-\delta)C \right\rbrace $. 
\end{lemma}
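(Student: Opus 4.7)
The proof is essentially bookkeeping: apply the tower rule and condition on the three mutually exclusive regimes determined by the Bernoulli variable $c_k$ and the cardinality $G_C^k$, namely regime~[1] ($c_k=1$, probability $p$), regime~[3] ($c_k=0$ and $G_C^k\geq(1-\delta)C$, probability $(1-p)p_G$), and regime~[2] ($c_k=0$ and $G_C^k<(1-\delta)C$, probability $(1-p)(1-p_G)$). Writing $A_2 = \sum_{r\in\{1,2,3\}} \operatorname{Prob}(\text{regime }r)\cdot\mathbb{E}[\mathbb{E}_k[\|g^{k+1}-\overline{g}^{k+1}\|^2\mid\text{regime }r]]$ reduces the statement to computing the three conditional residuals directly from line~10 of Algorithm~\ref{alg:byz_vr_marina} and the matching branch of $\overline{g}^{k+1}$, with no inequality invoked at this stage.

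For regime~[3], both $g^{k+1}=g^k+\texttt{ARAgg}_Q^{k+1}$ and $\overline{g}^{k+1}=g^k+\frac{1}{G_C^k}\sum_{i\in\mathcal{G}_C^k}\clip_{\lambda}(\cQ(\widehat{\Delta}_i(x^{k+1},x^k)))$ share the common summand $g^k$; cancellation produces exactly the second term of the claim. For regime~[2], the algorithm still outputs $g^k+\texttt{ARAgg}_Q^{k+1}$ (since $c_k=0$) while the ideal surrogate falls back to $g^k+\nabla f(x^{k+1})-\nabla f(x^k)$; subtracting yields the third term. For regime~[1] with $|S_k|=\widehat{C}$, the algorithmic output is $\texttt{ARAgg}(\{\nabla f_i(x^{k+1})\}_{i\in S_k\cap\mathcal{G}},\{\cdot\}_{i\in S_k\cap\mathcal{B}})$ and the relevant benchmark is $\nabla f(x^{k+1})$, yielding the first term after the standard relabeling of the $\widehat{C}$ sampled clients as $1,\ldots,\widehat{C}$.

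The only subtlety I anticipate is the careful handling of the joint randomness (Bernoulli $c_k$, client sample $S_k$, stochastic gradient indices, compressor $\cQ$, and Byzantine responses); I would first condition on $c_k$ using its independence from all other sources, then refine the $c_k=0$ branch by conditioning on $\{G_C^k\geq(1-\delta)C\}$, which is measurable with respect to $S_k$ alone and has probability $p_G$. The three conditional norms are deliberately left unevaluated at this stage, because they are controlled individually in the subsequent lemmas via Definition~\ref{def:aragg}, Assumption~\ref{assm:bounded-aggr}, and Assumption~\ref{assm:het} (together with the prescribed scaling $\lambda_{k+1}\propto\|x^{k+1}-x^k\|$ that makes $\texttt{ARAgg}_Q^{k+1}$ itself small on order $\|x^{k+1}-x^k\|$). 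Hence the hard part of the overall analysis is not in this lemma but in the per-regime estimates that follow; here one only needs to set up the decomposition cleanly so that each of the three terms is ready to be bounded separately.
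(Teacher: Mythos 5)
Your proposal is correct and follows essentially the same route as the paper: condition on the three mutually exclusive regimes with probabilities $p$, $(1-p)p_G$, and $(1-p)(1-p_G)$, substitute the corresponding branches of $g^{k+1}$ and $\overline{g}^{k+1}$ from line~10 of Algorithm~\ref{alg:byz_vr_marina}, and cancel the common $g^k$ summand in regimes [2] and [3]. The paper's proof is exactly this bookkeeping (including the same replacement of the regime-[1] benchmark by $\nabla f(x^{k+1})$), so nothing is missing.
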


\begin{proof}
    Using conditional expectations we have 
\begin{align*}
   A_2 &=  \mathbb{E}\left[\mathbb{E}_k\left[\left\|g^{k+1}-\overline{g}^{k+1}\right\|^2\right]\right]\\
   & = p\mathbb{E}\left[\mathbb{E}_k\left[\left\|\texttt{ARAgg}\left(\nabla f_1(x^{k+1}), \ldots, \nabla f_{\widehat{C}}(x^{k+1})\right) - \nabla f(x^{k+1})\right\|^2\right]\mid [1]\right]\\
   & \text{\small $+ (1-p)p_G \mathbb{E}\left[ \mathbb{E}_k\left[\left\| g^k+\frac{1}{G^k_C} \sum \limits_{i \in \mathcal{G}^k_C}\clip_{\lambda}\left(\mathcal{Q}\left(\widehat{\Delta}_i\left(x^{k+1}, x^k\right)\right)\right)  - \left(g^k + \texttt{ARAgg}_Q^{k+1}\right)\right\|^2\right]\mid [3]\right]$}\\
   &+ (1-p)(1-p_G)\mathbb{E}\left[\mathbb{E}_k\left[\left\| g^k+\nabla f(x^{k+1}) - \nabla f(x^{k}) - \left(g^k + \texttt{ARAgg}_Q^{k+1}\right) \right\|^2\right]\mid [2]\right].
   \end{align*}
After simplification we get the following bound:
   \begin{align*}
A_2   &\leq  p\mathbb{E}\left[\mathbb{E}_k\left[\left\|\texttt{ARAgg}\left(\nabla f_1(x^{k+1}), \ldots, \nabla f_{\widehat{C}}(x^{k+1})\right) - \nabla f(x^{k+1})\right\|^2\right]\mid [1]\right]\\
   & + (1-p)p_G \mathbb{E}\left[ \mathbb{E}_k\left[\left\| \frac{1}{G^k_C} \sum \limits_{i \in \mathcal{G}^k_C}\clip_{\lambda}\left(\mathcal{Q}\left(\widehat{\Delta}_i\left(x^{k+1}, x^k\right)\right)\right)  -   \texttt{ARAgg}_Q^{k+1}\right\|^2\mid [3]\right]\right]\\
   &+ (1-p)(1-p_G)\mathbb{E}\left[\mathbb{E}_k\left[\left\| \nabla f(x^{k+1}) - \nabla f(x^{k}) - \texttt{ARAgg}_Q^{k+1}\right\|^2\mid [2]\right]\right].
\end{align*} 
\end{proof}

\begin{lemma} 
\label{lemma:full_aggr}
Let Assumptions~\ref{assm:L-smoothness} and \ref{assm:het} hold and Aggregation Operator ($\texttt{ARAgg}$) satisfy Definition~\ref{def:aragg}. Then for all $k\geq 0$ the iterates produced by \algname{Byz-VR-MARINA-PP} (Algorithm~\ref{alg:byz_vr_marina}) satisfy
\begin{align*}
T_1 &=  \mathbb{E}\left[\mathbb{E}_k\left[\left\|\texttt{ARAgg}\left(\nabla f_1(x^{k+1}), \ldots, \nabla f_{\widehat{C}}(x^{k+1})\right) - \nabla f(x^{k+1})\right\|^2\right]\mid [1] \right]\\ 
&\leq \frac{8 G \mathcal{P}_{\mathcal{G}^k_{\widehat{C}}} c\delta}{(1-\delta)\widehat{C}}  B  \mathbb{E}\left[\left\|\nabla f\left(x^k\right)\right\|^2\right] + \frac{8 G \mathcal{P}_{\mathcal{G}^k_{\widehat{C}}} c\delta}{(1-\delta)\widehat{C}}  B  L^2 \mathbb{E}\left[\left\|x^{k+1}-x^k\right\|^2\right] + \frac{4 G \mathcal{P}_{\mathcal{G}^k_{\widehat{C}}} c\delta}{(1-\delta)\widehat{C}}  \zeta^2.
\end{align*}
\end{lemma}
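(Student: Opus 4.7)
\textbf{Proof plan for Lemma~\ref{lemma:full_aggr}.}

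The core idea is to decompose the error into two pieces, the aggregation bias and the sampling bias, and bound each one separately. Writing
\[
\bar x \eqdef \frac{1}{G_{\widehat C}^k} \sum_{i \in \mathcal{G}_{\widehat C}^k} \nabla f_i(x^{k+1}),
\qquad
\hat x \eqdef \texttt{ARAgg}\bigl(\nabla f_1(x^{k+1}), \ldots, \nabla f_{\widehat C}(x^{k+1})\bigr),
\]
I will use the triangle inequality with Young's inequality \eqref{eq:yung-1} to split $\|\hat x - \nabla f(x^{k+1})\|^2$ as a sum of $\|\hat x - \bar x\|^2$ and $\|\bar x - \nabla f(x^{k+1})\|^2$. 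The first term is controlled by the $(\delta,c)$-\texttt{ARAgg} property (Definition~\ref{def:aragg}), which is applicable precisely because the choice $\widehat C \geq \max\{1, \deltar n/\delta\}$ guarantees $G_{\widehat C}^k \geq (1-\delta)\widehat C$ under event $[1]$, independently of which Byzantines are sampled.

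To apply Definition~\ref{def:aragg}, I will first bound the variance proxy $\sigma^2$ using the identity $\frac{1}{|S|(|S|-1)}\sum_{i,l\in S}\|a_i - a_l\|^2 = \frac{2}{|S|-1}\sum_{i\in S}\|a_i - \bar a\|^2$ together with the fact that the sample mean minimizes the sum of squared deviations, yielding
\[
\sigma^2 \leq \frac{2}{G_{\widehat C}^k - 1} \sum_{i \in \mathcal{G}_{\widehat C}^k} \bigl\|\nabla f_i(x^{k+1}) - \nabla f(x^{k+1})\bigr\|^2
\leq \frac{4}{(1-\delta)\widehat C} \sum_{i \in \mathcal{G}_{\widehat C}^k} \bigl\|\nabla f_i(x^{k+1}) - \nabla f(x^{k+1})\bigr\|^2,
\]
where the second inequality uses $G_{\widehat C}^k \geq (1-\delta)\widehat C$. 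Taking expectation over the client sampling via the indicator trick already used in \eqref{eq:bvdjbjdfbvjdf}--\eqref{eq:nsjknbvsbicusd}, each summand picks up the factor $\mathcal{P}_{\mathcal{G}_{\widehat C}^k}$, which gives
\[
\mathbb{E}[\sigma^2] \leq \frac{4 G \mathcal{P}_{\mathcal{G}_{\widehat C}^k}}{(1-\delta)\widehat C} \cdot \frac{1}{G}\sum_{i \in \mathcal{G}} \mathbb{E}\bigl[\|\nabla f_i(x^{k+1}) - \nabla f(x^{k+1})\|^2\bigr].
\]
An identical computation (without the factor $4$) bounds $\mathbb{E}[\|\bar x - \nabla f(x^{k+1})\|^2]$ after applying Jensen's inequality to pull the norm squared inside the average.

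The remaining step is to invoke Assumption~\ref{assm:het} in the form $\frac{1}{G}\sum_{i\in\mathcal{G}}\|\nabla f_i(x) - \nabla f(x)\|^2 \leq B\|\nabla f(x)\|^2 + \zeta^2$ at $x = x^{k+1}$, and then to re-express $\|\nabla f(x^{k+1})\|^2$ in terms of $\|\nabla f(x^k)\|^2$ and $\|x^{k+1} - x^k\|^2$ via $\|\nabla f(x^{k+1})\|^2 \leq 2\|\nabla f(x^k)\|^2 + 2\|\nabla f(x^{k+1}) - \nabla f(x^k)\|^2$ and the $L$-smoothness of $f$ (Assumption~\ref{assm:L-smoothness}), yielding $\|\nabla f(x^{k+1}) - \nabla f(x^k)\|^2 \leq L^2 \|x^{k+1} - x^k\|^2$. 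Collecting the coefficients carefully then matches the three terms in the stated bound.

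The main obstacle is bookkeeping the numerical constants and absorbing the sampling-bias term $\|\bar x - \nabla f(x^{k+1})\|^2$ into the $c\delta$-scaled terms on the right-hand side. This can be done by noting that the sampling-bias term has the same dependence on $B\|\nabla f(x^{k+1})\|^2 + \zeta^2$ and on $G \mathcal{P}_{\mathcal{G}_{\widehat C}^k}/((1-\delta)\widehat C)$ as the aggregation-bias term, so it combines with $c\delta\cdot \mathbb{E}[\sigma^2]$ into a single expression whose coefficients absorb into the constants $8$ and $4$ shown in the lemma after using the Young coefficients and the constant $2$ from the smoothness-based split of $\|\nabla f(x^{k+1})\|^2$.
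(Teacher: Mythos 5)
Your route for the dominant term coincides with the paper's: bound the pairwise variance proxy by individual deviations from $\nabla f(x^{k+1})$ (the paper does this via Young's inequality on each pair $\|\nabla f_i-\nabla f_l\|^2\le 2\|\nabla f_i-\nabla f\|^2+2\|\nabla f_l-\nabla f\|^2$, you via the sample-mean minimization identity; both yield the factor $4/((1-\delta)\widehat{C})$ up to a harmless $G^k_{\widehat C}\ge 2$ caveat), pull in $\mathcal{P}_{\mathcal{G}^k_{\widehat C}}$ with the indicator trick, apply Assumption~\ref{assm:het} at $x^{k+1}$, and split $\|\nabla f(x^{k+1})\|^2$ using $L$-smoothness. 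Up to that point the two arguments are interchangeable.

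The genuine gap is your final absorption claim. The term $\mathbb{E}\left[\|\bar x-\nabla f(x^{k+1})\|^2\right]$ produced by your Young decomposition is \emph{not} proportional to $c\delta$: it is a pure client-sampling bias of order $\frac{G\mathcal{P}_{\mathcal{G}^k_{\widehat C}}}{(1-\delta)\widehat C}\left(B\|\nabla f(x^{k+1})\|^2+\zeta^2\right)$, exactly as computed in \eqref{eq:nsjknbvsbicusd}. Every term in the lemma's stated bound carries the factor $c\delta$, so the right-hand side vanishes as $\delta\to 0$ while your extra term does not (take $\delta=0$ and an aggregator returning the exact mean: the left side is the sampling variance of $\bar x$ around $\nabla f(x^{k+1})$, the right side is zero). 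Hence this term cannot be ``combined with $c\delta\cdot\mathbb{E}[\sigma^2]$'' and hidden inside the constants $8$ and $4$. The paper sidesteps the issue by applying Definition~\ref{def:aragg} with $\nabla f(x^{k+1})$ directly in place of the sampled-good average $\bar x$ — it never introduces your second term — and the sampling bias you correctly isolate is accounted for elsewhere, namely in Lemma~\ref{lemma:premainA1}, where the ideal estimator $\bar g^{k+1}$ in case $[1]$ is precisely $\bar x$ and its deviation from $\nabla f(x^{k+1})$ produces the $p\,\frac{\delta\mathcal{P}_{\mathcal{G}^k_{\widehat C}}}{1-\delta}\left(B\|\nabla f\|^2+\zeta^2\right)$ contribution. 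So either prove the bound for $\mathbb{E}\left[\|\texttt{ARAgg}(\cdot)-\bar x\|^2\right]$, which is what Definition~\ref{def:aragg} actually controls and what the surrounding lemmas consume, or keep your decomposition but report the sampling-bias piece as a separate, non-$c\delta$ summand; as written, your plan does not reach the stated inequality.
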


\begin{proof}
Using the definition of aggregation operator, we have
    \begin{align*}
T_1 &= \mathbb{E}\left[\mathbb{E}_k\left[\left\|\texttt{ARAgg}\left(\nabla f_1(x^{k+1}), \ldots, \nabla f_{\widehat{C}}(x^{k+1})\right) - \nabla f(x^{k+1})\right\|^2\right]\mid [1]\right]\\
& \stackrel{(\text{Def.~\ref{def:aragg}})}{\leq}\mathbb{E}\left[\frac{c\delta}{G_{\widehat{C}}^k(G_{\widehat{C}}^k-1)} \sum_{\substack{i, l \in \mathcal{G}^k_{\widehat{C}} \\
i \neq l}} \mathbb{E}_k\left[\left\|\nabla f_i\left(x^{k+1}\right)-\nabla f_l\left(x^{k+1}\right)\right\|^2\mid [1]\right]\right] \\
& \stackrel{(\ref{eq:yung-1})}{\leq}\mathbb{E}\left[\frac{c\delta}{G_{\widehat{C}}^k(G_{\widehat{C}}^k-1)}\sum_{\substack{i, l \in \mathcal{G}^k_{\widehat{C}} \\
i \neq l}} \mathbb{E}\left[2\left\|\nabla f_i\left(x^{k+1}\right)-\nabla f\left(x^{k+1}\right)\right\|^2+2\left\|\nabla f_l\left(x^{k+1}\right)-\nabla f\left(x^{k+1}\right)\right\|^2\mid [1]\right]\right]\\
&=\mathbb{E}\left[\frac{c\delta}{G^k_{\widehat{C}}} \sum_{i \in \mathcal{G}^k_{\widehat{C}}} 4\mathbb{E}_k\left[\left\|\nabla f_i\left(x^{k+1}\right)-\nabla f\left(x^{k+1}\right)\right\|^2\mid [1]\right]\right]\\
&\leq \frac{\mathcal{P}_{\mathcal{G}^k_{\widehat{C}}} c\delta}{(1-\delta) \widehat{C}} \sum_{i \in \mathcal{G}} 4\mathbb{E}_k\left[\left\|\nabla f_i\left(x^{k+1}\right)-\nabla f\left(x^{k+1}\right)\right\|^2 \right]\\
&\stackrel{(\text{As.\ref{assm:het}})}{\leq} \frac{4 G \mathcal{P}_{\mathcal{G}^k_{\widehat{C}}} c\delta}{(1-\delta) \widehat{C}}   \left(B\mathbb{E}\left[\left\|\nabla f\left(x^{k+1}\right)\right\|^2\right]+ \zeta^2\right)\\
&\stackrel{(\ref{eq:yung-1})}{\leq} \quad \frac{8 G \mathcal{P}_{\mathcal{G}^k_{\widehat{C}}} c\delta}{(1-\delta) \widehat{C}}  B  \mathbb{E}\left[\left\|\nabla f\left(x^k\right)\right\|^2\right] + \frac{8 G \mathcal{P}_{\mathcal{G}^k_{\widehat{C}}} c\delta}{(1-\delta) \widehat{C}}  B \mathbb{E}\left[\left\|\nabla f\left(x^{k+1}\right)-\nabla f\left(x^k\right)\right\|^2\right]\\
& + \frac{4 G \mathcal{P}_{\mathcal{G}^k_{\widehat{C}}} c\delta}{(1-\delta) \widehat{C}}  \zeta^2\\
&\leq \quad  \frac{8 G \mathcal{P}_{\mathcal{G}^k_{\widehat{C}}} c\delta}{(1-\delta) \widehat{C}}  B  \mathbb{E}\left[\left\|\nabla f\left(x^k\right)\right\|^2\right] + \frac{8 G \mathcal{P}_{\mathcal{G}^k_{\widehat{C}}} c\delta}{(1-\delta) \widehat{C}}  B  L^2 \mathbb{E}\left[\left\|x^{k+1}-x^k\right\|^2\right] + \frac{4 G \mathcal{P}_{\mathcal{G}^k_{\widehat{C}}} c\delta}{(1-\delta) \widehat{C}}  \zeta^2.
\end{align*}
\end{proof}
\begin{lemma}
\label{lemma:good_aggr}
Let Assumptions~\ref{assm:L-smoothness}, \ref{assm:global}, \ref{assm:local} hold and the Compression Operator satisfy Definition~\ref{def:Q}. Also let us introduce the notation
$$\texttt{ARAgg}_Q^{k+1} = \texttt{ARAgg}\left(\clip_{\lambda_{k+1}}\left(\cQ\left(\widehat{\Delta}_1(x^{k+1}, x^k)\right)\right),\ldots, \clip_{\lambda_{k+1}}\left(\cQ\left(\widehat{\Delta}_C(x^{k+1}, x^k)\right)\right)\right).$$ 
Then for all $k\geq 0$ the iterates produced by \algname{Byz-VR-MARINA-PP} (Algorithm~\ref{alg:byz_vr_marina}) satisfy

    \begin{align*}
       T_2 &=   \mathbb{E}\left[ \mathbb{E}_k\left[\left\| \frac{1}{G^k_C} \sum \limits_{i \in \mathcal{G}^k_C}\clip_{\lambda}\left(\mathcal{Q}\left(\widehat{\Delta}_i\left(x^{k+1}, x^k\right)\right)\right)  -   \texttt{ARAgg}_Q^{k+1}\right\|^2\mid [3]\right]\right]\\
       &\leq \frac{8G\mathcal{P}_{\mathcal{G}^k_C}}{(1-\delta)C} \left( 10(1+\omega)\frac{\mathcal{L}_{ \pm}^2}{b} + (10\omega+1) L_{ \pm}^2 + 10\omega  L^2 \right)c\delta\mathbb{E}\left[\|x^{k+1} - x^k\|^2\right],
\end{align*}
where $\mathcal{P}_{\mathcal{G}^k_C} =  \operatorname{Prob}\left\lbrace i \in \mathcal{G}^k_C \mid G^k_C \geq\left(1-\delta\right) C\right\rbrace$.
\end{lemma}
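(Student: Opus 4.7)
\textbf{Proof proposal for Lemma~\ref{lemma:good_aggr}.}

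The plan is to invoke the $(\delta,c)$-\texttt{ARAgg} property to reduce $T_2$ to controlling the pairwise variance among the good workers' clipped compressed gradient differences, and then bound this variance using essentially the same machinery that produced the bound on $B_1'$ inside the proof of Lemma~\ref{lemma:premainA1}. First I would condition on the event $[3]$, where $G_C^k \geq (1-\delta)C$, so that Definition~\ref{def:aragg} applies with good set $\mathcal{G}_C^k$ and yields
\[
\mathbb{E}_k\!\left[\,\|\texttt{ARAgg}_Q^{k+1} - \tfrac{1}{G_C^k}\!\!\sum_{i\in\mathcal{G}_C^k}\!\clip_\lambda(\mathcal{Q}(\widehat{\Delta}_i))\|^2 \,\Big|\, [3]\right] \leq c\delta \, \sigma_k^2,
\]
with $\sigma_k^2 = \tfrac{1}{G_C^k(G_C^k-1)}\sum_{i\neq l\in\mathcal{G}_C^k}\mathbb{E}_k[\|\clip_\lambda(\mathcal{Q}(\widehat{\Delta}_i))-\clip_\lambda(\mathcal{Q}(\widehat{\Delta}_l))\|^2]$.

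Next, I would symmetrize the pairwise differences around the common reference $\bar\Delta \eqdef \nabla f(x^{k+1}) - \nabla f(x^k)$ via $\|a-b\|^2 \leq 2\|a-\bar\Delta\|^2 + 2\|b-\bar\Delta\|^2$. Since each index $i$ appears as either argument in $2(G_C^k-1)$ ordered pairs, counting multiplicities gives $\sigma_k^2 \leq \tfrac{4}{G_C^k}\sum_{i\in\mathcal{G}_C^k}\mathbb{E}_k[\|\clip_\lambda(\mathcal{Q}(\widehat{\Delta}_i))-\bar\Delta\|^2]$. Another application of Young's inequality with the midpoint $\Delta_i \eqdef \nabla f_i(x^{k+1})-\nabla f_i(x^k)$, combined with Lemma~\ref{lemma:clipping} (applicable because, as in Lemma~\ref{lemma:premainA1}, $\lambda$ is chosen so that $\|\Delta_i\| \leq \lambda/2$), the bias--variance decomposition $\mathbb{E}[\|\mathcal{Q}(\widehat\Delta_i) - \Delta_i\|^2] = \mathbb{E}[\|\mathcal{Q}(\widehat\Delta_i) - \widehat\Delta_i\|^2] + \mathbb{E}[\|\widehat\Delta_i - \Delta_i\|^2]$, and the unbiased compression variance bound from Definition~\ref{def:Q}, yields that each summand is upper bounded by $20(1+\omega)\mathbb{E}_k[\|\widehat{\Delta}_i-\Delta_i\|^2] + 20\omega\|\Delta_i\|^2 + 2\|\Delta_i-\bar\Delta\|^2$.

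To pass from the sum over the random set $\mathcal{G}_C^k$ to a deterministic sum over $\mathcal{G}$, I would bound $1/G_C^k \leq 1/((1-\delta)C)$ and rewrite $\sum_{i\in\mathcal{G}_C^k}(\cdot) = \sum_{i\in\mathcal{G}}\mathcal{I}_{i\in\mathcal{G}_C^k}(\cdot)$, using that the client sampling is independent of the stochastic gradient, compression, and Byzantine randomness so that $\mathbb{E}[\mathcal{I}_{i\in\mathcal{G}_C^k}\mid[3]] = \mathcal{P}_{\mathcal{G}_C^k}$. This mirrors display~\eqref{eq:bvdjbjdfbvjdf} in the proof of Lemma~\ref{lemma:premainA1} and produces the prefactor $\tfrac{4\mathcal{P}_{\mathcal{G}_C^k}}{(1-\delta)C}$ multiplying $\sum_{i\in\mathcal{G}}\mathbb{E}[(\cdot)]$. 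Finally, Assumption~\ref{assm:local} converts $\tfrac{1}{G}\sum_{i\in\mathcal{G}}\mathbb{E}[\|\widehat{\Delta}_i-\Delta_i\|^2]$ into $\tfrac{\mathcal{L}_\pm^2}{b}\|x^{k+1}-x^k\|^2$, Assumption~\ref{assm:global} gives $\tfrac{1}{G}\sum_i\|\Delta_i-\bar\Delta\|^2 \leq L_\pm^2\|x^{k+1}-x^k\|^2$, and combining with $\|\bar\Delta\|^2 \leq L^2\|x^{k+1}-x^k\|^2$ from Assumption~\ref{assm:L-smoothness} yields $\tfrac{1}{G}\sum_i\|\Delta_i\|^2 \leq (L_\pm^2+L^2)\|x^{k+1}-x^k\|^2$.

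Assembling these pieces multiplies $c\delta$ by $\tfrac{4\mathcal{P}_{\mathcal{G}_C^k}G}{(1-\delta)C}$ and by the bracket $20(1+\omega)\tfrac{\mathcal{L}_\pm^2}{b} + 20\omega L_\pm^2 + 20\omega L^2 + 2L_\pm^2 = 2[10(1+\omega)\tfrac{\mathcal{L}_\pm^2}{b} + (10\omega+1)L_\pm^2 + 10\omega L^2]$, which pulls out as the factor $\tfrac{8G\mathcal{P}_{\mathcal{G}_C^k}}{(1-\delta)C}$ and reproduces the advertised bound exactly. The main obstacle is bookkeeping: keeping track of the factor $4$ from symmetrization, the factor $10$ from clipping, the $(1+\omega)$ from compression, and the passage from the random set $\mathcal{G}_C^k$ to $\mathcal{G}$ via $\mathcal{P}_{\mathcal{G}_C^k}$; there is no new technical difficulty beyond the symmetrization step at the top, which accounts for the $8c\delta$ in $T_2$ versus the $2$ appearing in $B_1'$.
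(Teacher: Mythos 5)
Your proposal is correct and follows essentially the same route as the paper's proof: invoke the $(\delta,c)$-\texttt{ARAgg} bound on the pairwise variance, control each term via Lemma~\ref{lemma:clipping}, the unbiased-compressor variance decomposition, and Assumptions~\ref{assm:L-smoothness}, \ref{assm:global}, \ref{assm:local}, then pass from $\mathcal{G}_C^k$ to $\mathcal{G}$ via the indicator argument of \eqref{eq:bvdjbjdfbvjdf}. The only difference is cosmetic — you symmetrize each pairwise difference around $\bar\Delta$ first and then split off $\Delta_i$, whereas the paper splits off the clipping errors first and symmetrizes $\Delta_i-\Delta_l$ afterwards — and both orderings produce the identical constants.
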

\begin{proof}
By the definition of robust aggregation, we have 
\begin{align*}
   T_2 &=   \mathbb{E}\left[ \mathbb{E}_k\left[\left\| \frac{1}{G^k_C} \sum \limits_{i \in \mathcal{G}^k_C}\clip_{\lambda}\left(\mathcal{Q}\left(\widehat{\Delta}_i\left(x^{k+1}, x^k\right)\right)\right)  -   \texttt{ARAgg}_Q^{k+1}\right\|^2\mid [3]\right]\right]\\
   &\text{ \scriptsize $\leq  \mathbb{E}\left[  \frac{c \delta}{D_2} \sum_{\substack{i, l \in \mathcal{G}_C^k \\
i \neq l}}    
\mathbb{E}_k\left[\left\| \clip_{\lambda}\left(\mathcal{Q}\left(\widehat{\Delta}_i\left(x^{k+1}, x^k\right)\right)\right) - \clip_{\lambda}\left(\mathcal{Q}\left(\widehat{\Delta}_l\left(x^{k+1}, x^k\right)\right)\right) \right\|^2\mid [3]\right]\right]$},
\end{align*}
where $D_2 = G^k_C(G^k_C-1)$. Next, we consider pair-wise differences:
\begin{align*}
T_{2}^\prime(i,l) &= \mathbb{E}_k\left[\left\| \clip_{\lambda}\left(\mathcal{Q}\left(\widehat{\Delta}_i\left(x^{k+1}, x^k\right)\right)\right) - \clip_{\lambda}\left(\mathcal{Q}\left(\widehat{\Delta}_l\left(x^{k+1}, x^k\right)\right)\right) \right\|^2\mid [3]\right]\\
& \stackrel{(\ref{eq:yung-1})}{\leq}   \text{\tiny $2\mathbb{E}_k\left[\left\| \clip_{\lambda}\left(\mathcal{Q}\left(\widehat{\Delta}_i\left(x^{k+1}, x^k\right)\right)\right)  - \Delta_i\left(x^{k+1}, x^k\right) + \Delta_l\left(x^{k+1}, x^k\right) -\clip_{\lambda}\left(\mathcal{Q}\left(\widehat{\Delta}_l\left(x^{k+1}, x^k\right)\right)\right) \right\|^2\mid [3]\right]$}\\
&+ 2 \text{$\small  \mathbb{E}_k\left[\left\|  \Delta_i\left(x^{k+1}, x^k\right) - \Delta_l\left(x^{k+1}, x^k\right) \right\|^2\mid [3]\right]$}\\
& \stackrel{(\ref{eq:yung-1})}{\leq}  4 \text{$\small \mathbb{E}_k\left[\left\| \clip_{\lambda}\left(\mathcal{Q}\left(\widehat{\Delta}_i\left(x^{k+1}, x^k\right)\right)\right) -\Delta_i\left(x^{k+1}, x^k\right) \right\|^2\mid [3]\right]$}\\
&+ 4 \text{$\small \mathbb{E}_k\left[\left\| \Delta_l\left(x^{k+1}, x^k\right) - \clip_{\lambda}\left(\mathcal{Q}\left(\widehat{\Delta}_l\left(x^{k+1}, x^k\right)\right)\right) \right\|^2\mid [3]\right]$}\\
&+ 2 \text{$\small \mathbb{E}_k\left[\left\|  \Delta_l\left(x^{k+1}, x^k\right) - \Delta_i\left(x^{k+1}, x^k\right) \right\|^2\mid [3]\right]]$}\\
& \stackrel{(\ref{eq:yung-1})}{\leq}  4 \text{$\small \mathbb{E}_k\left[\left\| \clip_{\lambda}\left(\mathcal{Q}\left(\widehat{\Delta}_i\left(x^{k+1}, x^k\right)\right)\right) -\Delta_i\left(x^{k+1}, x^k\right)  \right\|^2\mid [3]\right]$}\\
&+ 4 \text{$\small \mathbb{E}_k\left[\left\| \Delta_l\left(x^{k+1}, x^k\right) - \clip_{\lambda}\left(\mathcal{Q}\left(\widehat{\Delta}_l\left(x^{k+1}, x^k\right)\right)\right) \right\|^2\mid [3]\right]$}\\
&+ 4 \text{$\small  \mathbb{E}_k\left[\left\|  \Delta_l\left(x^{k+1}, x^k\right) - \Delta\left(x^{k+1}, x^k\right) \right\|^2\mid [3]\right]$}\\
&+ 4\text{$\small \mathbb{E}_k\left[\left\|  \Delta_i\left(x^{k+1}, x^k\right) - \Delta\left(x^{k+1}, x^k\right)\right\|^2\mid [3]\right]$}.
\end{align*}
Now we can combine all parts together:
\begin{align*}
    \widehat{T}_2 & = \mathbb{E}\left[\frac{1}{G^k_C(G^k_C - 1)}  \sum_{\substack{i, l \in \mathcal{G}_C^k \\
i \neq l}} T_{2}^\prime(i,l) \right]  \\
& \leq   \mathbb{E}\left[ \frac{1}{D_2} \sum_{\substack{i, l \in \mathcal{G}_C^k \\
i \neq l}} 4  \mathbb{E}_k\left[\left\| \clip_{\lambda}\left(\mathcal{Q}\left(\widehat{\Delta}_i\left(x^{k+1}, x^k\right)\right)\right) -\Delta_i\left(x^{k+1}, x^k\right)  \right\|^2\mid [3]\right]\right]\\
&+  \mathbb{E}\left[  \frac{1}{D_2}  \sum_{\substack{i, l \in \mathcal{G}_C^k \\
i \neq l}} 4\mathbb{E}_k\left[\left\| \Delta_l\left(x^{k+1}, x^k\right) - \clip_{\lambda}\left(\mathcal{Q}\left(\widehat{\Delta}_l\left(x^{k+1}, x^k\right)\right)\right) \right\|^2\mid [3]\right]\right]\\
&+    \mathbb{E}\left[  \frac{1}{D_2}  \sum_{\substack{i, l \in \mathcal{G}_C^k \\
i \neq l}} 4  \mathbb{E}_k\left[\left\|  \Delta_l\left(x^{k+1}, x^k\right) - \Delta\left(x^{k+1}, x^k\right) \right\|^2\mid [3]\right]\right]\\
&+   \mathbb{E}\left[  \frac{1}{D_2}  \sum_{\substack{i, l \in \mathcal{G}_C^k \\
i \neq l}} 4\small \mathbb{E}_k\left[\left\|  \Delta_i\left(x^{k+1}, x^k\right) - \Delta\left(x^{k+1}, x^k\right)\right\|^2\mid [3]\right]\right].
\end{align*}
Rearranging the terms, we obtain
\begin{align*}
    \widehat{T}_2& \leq   \mathbb{E}\left[ \frac{1}{D_2} \sum_{\substack{i, l \in \mathcal{G}_C^k \\
i \neq l}} 8  \mathbb{E}_k\left[\left\| \clip_{\lambda}\left(\mathcal{Q}\left(\widehat{\Delta}_i\left(x^{k+1}, x^k\right)\right)\right) -\Delta_i\left(x^{k+1}, x^k\right)  \right\|^2\mid [3]\right]\right]\\
&+    \mathbb{E}\left[  \frac{1}{D_2}  \sum_{\substack{i, l \in \mathcal{G}_C^k \\
i \neq l}} 8 \mathbb{E}_k\left[\left\|  \Delta_i\left(x^{k+1}, x^k\right) - \Delta\left(x^{k+1}, x^k\right) \right\|^2\mid [3]\right]\right].
\end{align*}
It leads to 
\begin{align*}
    \widehat{T}_2&\leq \mathbb{E}\left[ \frac{1}{G^k_C} \sum_{i \in \mathcal{G}^k_C} 8  \mathbb{E}_k\left[\left\| \clip_{\lambda}\left(\mathcal{Q}\left(\widehat{\Delta}_i\left(x^{k+1}, x^k\right)\right)\right) -\Delta_i\left(x^{k+1}, x^k\right)  \right\|^2\mid [3]\right] \right]\\
    &+\mathbb{E}\left[ \frac{1}{G^k_C} \sum_{i \in \mathcal{G}^k_C} 8 \mathbb{E}_k\left[\left\|  \Delta_i\left(x^{k+1}, x^k\right) - \Delta\left(x^{k+1}, x^k\right) \right\|^2\mid [3]\right] \right]\\
    &\overset{\text{Lemma~\ref{lemma:clipping}}}{\leq} \mathbb{E}\left[ \frac{1}{G^k_C} \sum_{i \in \mathcal{G}^k_C} 80  \mathbb{E}_k\left[\left\| \mathcal{Q}\left(\widehat{\Delta}_i\left(x^{k+1}, x^k\right)\right) -\Delta_i\left(x^{k+1}, x^k\right)  \right\|^2\mid [3]\right] \right]\\
    &+\mathbb{E}\left[ \frac{1}{G^k_C} \sum_{i \in \mathcal{G}^k_C} 8 \mathbb{E}_k\left[\left\|  \Delta_i\left(x^{k+1}, x^k\right) - \Delta\left(x^{k+1}, x^k\right) \right\|^2\mid [3]\right] \right].
\end{align*}
Using variance decomposition we get
\begin{align*}
        \widehat{T}_2&\leq \mathbb{E}\left[ \frac{1}{G^k_C} \sum_{i \in \mathcal{G}^k_C} 80  \mathbb{E}_k\left[\left\| \mathcal{Q}\left(\widehat{\Delta}_i\left(x^{k+1}, x^k\right)\right) \right\|^2\mid [3] \right]\right]\\
        &-\mathbb{E}\left[ \frac{1}{G^k_C} \sum_{i \in \mathcal{G}^k_C} 80  \mathbb{E}_k\left[\left\| \Delta_i\left(x^{k+1}, x^k\right) \right\|^2\mid [3]\right] \right]\\
&+\mathbb{E}\left[ \frac{1}{G^k_C} \sum_{i \in \mathcal{G}^k_C} 8 \mathbb{E}_k\left[\left\|  \Delta_i\left(x^{k+1}, x^k\right) - \Delta\left(x^{k+1}, x^k\right) \right\|^2\mid [3]\right] \right].
\end{align*}    
Using properties of unbiased compressors (Definition~\ref{def:Q}) we have 
\begin{align*}
        \widehat{T}_2&\leq \mathbb{E}\left[ \frac{1}{G^k_C} \sum_{i \in \mathcal{G}^k_C} 80(1+\omega)  \mathbb{E}_k\left[\left\| \widehat{\Delta}_i\left(x^{k+1}, x^k\right) \right\|^2\mid [3] \right]\right]\\
        &-\mathbb{E}\left[ \frac{1}{G^k_C} \sum_{i \in \mathcal{G}^k_C} 80  \mathbb{E}_k\left[\left\| \Delta_i\left(x^{k+1}, x^k\right) \right\|^2\mid [3]\right] \right]\\
&+\mathbb{E}\left[ \frac{1}{G^k_C} \sum_{i \in \mathcal{G}^k_C} 8 \mathbb{E}_k\left[\left\|  \Delta_i\left(x^{k+1}, x^k\right) - \Delta\left(x^{k+1}, x^k\right) \right\|^2\mid [3]\right] \right].
\end{align*}
Also we have
\begin{align*}
\widehat{T}_2&\leq \mathbb{E}\left[ \frac{1}{G^k_C} \sum_{i \in \mathcal{G}^k_C} 80(1+\omega)  \mathbb{E}_k\left[\left\| \widehat{\Delta}_i\left(x^{k+1}, x^k\right) - \Delta_i\left(x^{k+1}, x^k\right)\right\|^2\mid [3] \right]\right]\\
        &+\mathbb{E}\left[ \frac{1}{G^k_C} \sum_{i \in \mathcal{G}^k_C} 80(1+\omega)  \mathbb{E}_k\left[\left\| \Delta_i\left(x^{k+1}, x^k\right) \right\|^2\mid [3]\right] \right]\\
        &-\mathbb{E}\left[ \frac{1}{G^k_C} \sum_{i \in \mathcal{G}^k_C} 80  \mathbb{E}_k\left[\left\| \Delta_i\left(x^{k+1}, x^k\right) \right\|^2\mid [3]\right] \right]\\
&+\mathbb{E}\left[ \frac{1}{G^k_C} \sum_{i \in \mathcal{G}^k_C} 8 \mathbb{E}_k\left[\left\|  \Delta_i\left(x^{k+1}, x^k\right) - \Delta\left(x^{k+1}, x^k\right) \right\|^2\mid [3]\right] \right].
\end{align*} 
Let us simplify the inequality:
\begin{align*}
    \widehat{T}_2 &\leq \mathbb{E}\left[ \frac{1}{G^k_C} \sum_{i \in \mathcal{G}^k_C} 80(1+\omega)  \mathbb{E}_k\left[\left\| \widehat{\Delta}_i\left(x^{k+1}, x^k\right) - \Delta_i\left(x^{k+1}, x^k\right)\right\|^2\mid [3] \right]\right]\\
        &+\mathbb{E}\left[ \frac{1}{G^k_C} \sum_{i \in \mathcal{G}^k_C} 80\omega  \mathbb{E}_k\left[\left\| \Delta_i\left(x^{k+1}, x^k\right) \right\|^2\mid [3]\right] \right]\\
&+\mathbb{E}\left[ \frac{1}{G^k_C} \sum_{i \in \mathcal{G}^k_C} 8 \mathbb{E}_k\left[\left\|  \Delta_i\left(x^{k+1}, x^k\right) - \Delta\left(x^{k+1}, x^k\right) \right\|^2\mid [3]\right] \right].
\end{align*}
Using a variance decomposition once again, we get
\begin{align*}
    \widehat{T}_2 &\leq \mathbb{E}\left[ \frac{1}{G^k_C} \sum_{i \in \mathcal{G}^k_C} 80(1+\omega)  \mathbb{E}_k\left[\left\| \widehat{\Delta}_i\left(x^{k+1}, x^k\right) - \Delta_i\left(x^{k+1}, x^k\right)\right\|^2\mid [3] \right]\right]\\
        &+\mathbb{E}\left[ \frac{1}{G^k_C} \sum_{i \in \mathcal{G}^k_C} 80\omega  \mathbb{E}_k\left[\left\| \Delta_i\left(x^{k+1}, x^k\right) - \Delta\left(x^{k+1}, x^k\right)  \right\|^2\mid [3]\right] \right]\\
&+\mathbb{E}\left[ \frac{1}{G^k_C} \sum_{i \in \mathcal{G}^k_C} 8 \mathbb{E}_k\left[\left\|  \Delta_i\left(x^{k+1}, x^k\right) - \Delta\left(x^{k+1}, x^k\right) \right\|^2\mid [3]\right] \right]\\
&+ \mathbb{E}\left[ \frac{1}{G^k_C} \sum_{i \in \mathcal{G}^k_C} 80\omega  \mathbb{E}_k\left[\left\| \Delta\left(x^{k+1}, x^k\right)  \right\|^2\mid [3]\right] \right].
\end{align*}
Using a similar argument to the one used in the previous lemma, we obtain
\begin{align*}
    \widehat{T}_2 &\leq \mathbb{E}\left[ \frac{\mathcal{P}_{\mathcal{G}^k_C}}{(1-\delta)C} \sum_{i \in \mathcal{G}} 80(1+\omega)  \mathbb{E}_k\left[\left\| \widehat{\Delta}_i\left(x^{k+1}, x^k\right) - \Delta_i\left(x^{k+1}, x^k\right)\right\|^2\mid [3] \right]\right]\\
        &+\mathbb{E}\left[ \frac{\mathcal{P}_{\mathcal{G}^k_C}}{(1-\delta)C} \sum_{i \in \mathcal{G}} 80\omega  \mathbb{E}_k\left[\left\| \Delta_i\left(x^{k+1}, x^k\right) - \Delta\left(x^{k+1}, x^k\right)  \right\|^2\mid [3]\right] \right]\\
&+\mathbb{E}\left[ \frac{\mathcal{P}_{\mathcal{G}^k_C}}{(1-\delta)C} \sum_{i \in \mathcal{G}} 8 \mathbb{E}_k\left[\left\|  \Delta_i\left(x^{k+1}, x^k\right) - \Delta\left(x^{k+1}, x^k\right) \right\|^2\mid [3]\right] \right]\\
&+ \mathbb{E}\left[ \frac{\mathcal{P}_{\mathcal{G}^k_C}}{(1-\delta)C} \sum_{i \in \mathcal{G}} 80\omega  \mathbb{E}_k\left[\left\| \Delta\left(x^{k+1}, x^k\right)  \right\|^2\mid [3]\right] \right].
\end{align*}
Using Assumptions \ref{assm:L-smoothness}, \ref{assm:global}, \ref{assm:local}: 
    \begin{align*}
    \widehat{T}_2 &\leq \mathbb{E}\left[  \frac{80(1+\omega)  G\mathcal{P}_{\mathcal{G}^k_C}\mathcal{L}_{ \pm}^2}{(1-\delta)Cb}\|x^{k+1}-x^k\|^2 \right] +\mathbb{E}\left[  \frac{8(10\omega+1)G\mathcal{P}_{\mathcal{G}^k_C}   L_{ \pm}^2}{(1-\delta)C} \|x^{k+1}-x^k\|^2\right]\\
        &+\mathbb{E}\left[  \frac{80G\mathcal{P}_{\mathcal{G}^k_C} \omega  L^2}{(1-\delta)C} \|x^{k+1}-x^k\|^2\right].
\end{align*}
Finally, we obtain
\begin{align*}
       T_2 &=   \mathbb{E}\left[ \mathbb{E}_k\left[\left\| \frac{1}{G^k_C} \sum \limits_{i \in \mathcal{G}^k_C}\clip_{\lambda}\left(\mathcal{Q}\left(\widehat{\Delta}_i\left(x^{k+1}, x^k\right)\right)\right)  -   \texttt{ARAgg}_Q^{k+1}\right\|^2\mid [3]\right]\right]\\
       &\leq \frac{8G\mathcal{P}_{\mathcal{G}^k_C}}{(1-\delta)C} \left( 10(1+\omega)\frac{\mathcal{L}_{ \pm}^2}{b} + (10\omega+1) L_{ \pm}^2 + 10\omega  L^2 \right)c\delta \mathbb{E}\left[\|x^{k+1} - x^k\|^2\right].
\end{align*}

\end{proof}

\begin{lemma}\label{lemma:bad_aggr}
Let Assumptions~\ref{assm:bounded-aggr} and \ref{assm:L-smoothness} hold. Also let us introduce the notation
$$\texttt{ARAgg}_Q^{k+1} = \texttt{ARAgg}\left(\clip_{\lambda_{k+1}}\left(\cQ\left(\widehat{\Delta}_1(x^{k+1}, x^k)\right)\right),\ldots, \clip_{\lambda_{k+1}}\left(\cQ\left(\widehat{\Delta}_C(x^{k+1}, x^k)\right)\right)\right).$$ 
Assume that $\lambda_{k+1} = \alpha_{\lambda_{k+1}} \|x^{k+1} - x^k\| $.
Then for all $k\geq 0$ the iterates produced by \algname{Byz-VR-MARINA-PP} (Algorithm~\ref{alg:byz_vr_marina}) satisfy

        \begin{align*}
        T_3 &= \mathbb{E}\left[\mathbb{E}_k\left[\left\| \nabla f(x^{k+1}) - \nabla f(x^{k}) - \texttt{ARAgg}_Q^{k+1}\right\|^2\mid [2]\right]\right]\\
       &\leq 2(L^2+ F_{\cA}^2\alpha^2_{\lambda_{k+1}})\mathbb{E}\left[\left\|  x^{k+1} - x^k\right\|^2\right]
        \end{align*}
\end{lemma}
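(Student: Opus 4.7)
The plan is to bound the two summands of $\nabla f(x^{k+1}) - \nabla f(x^k) - \texttt{ARAgg}_Q^{k+1}$ separately after applying the elementary inequality $\|a-b\|^2 \leq 2\|a\|^2 + 2\|b\|^2$ (which is \eqref{eq:yung-1} with $\alpha = 1$). The first summand is handled by the $L$-smoothness of $f$ from Assumption~\ref{assm:L-smoothness}, which gives $\|\nabla f(x^{k+1}) - \nabla f(x^k)\| \leq L\|x^{k+1} - x^k\|$. The second summand is the point where case [2] (Byzantines in the majority) is potentially dangerous, since the robust aggregator's contract from Definition~\ref{def:aragg} no longer applies.

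The key observation is that clipping saves the day: every input to $\texttt{ARAgg}_Q^{k+1}$ is a clipped vector, so each input has norm at most $\lambda_{k+1} = \alpha_{\lambda_{k+1}}\|x^{k+1} - x^k\|$. Invoking Assumption~\ref{assm:bounded-aggr}, the aggregator's output norm is bounded by $F_{\cA}$ times the maximum norm of its inputs, yielding
\[
\left\|\texttt{ARAgg}_Q^{k+1}\right\| \leq F_{\cA}\,\lambda_{k+1} = F_{\cA}\,\alpha_{\lambda_{k+1}}\|x^{k+1} - x^k\|
\]
deterministically, regardless of what the Byzantine workers send. This is the pointwise bound that replaces the usual variance-based robustness guarantee in this adversarial case.

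Combining the two bounds yields
\[
\left\|\nabla f(x^{k+1}) - \nabla f(x^k) - \texttt{ARAgg}_Q^{k+1}\right\|^2 \leq 2L^2\|x^{k+1} - x^k\|^2 + 2F_{\cA}^2\alpha_{\lambda_{k+1}}^2\|x^{k+1} - x^k\|^2,
\]
and taking the full expectation (the conditioning on event [2] is preserved throughout because all bounds are pathwise) gives the claimed inequality. There is no real obstacle here: the lemma is essentially a bookkeeping step that records why clipping, together with the mild Assumption~\ref{assm:bounded-aggr}, prevents Byzantine majorities from inflicting unbounded damage, and the factor $\|x^{k+1} - x^k\|^2$ on the right-hand side is exactly what is needed to absorb this term into the descent lemma in the subsequent global analysis.
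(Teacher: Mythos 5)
Your proposal is correct and follows essentially the same route as the paper's proof: both split the square via Young's inequality into the gradient-difference term (bounded by $L$-smoothness) and the aggregator term (bounded deterministically by $F_{\cA}\lambda_{k+1}$ because every input to the aggregator is clipped). No gaps.
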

\begin{proof}
      \begin{align*}
    T_3 &=   \mathbb{E}\left[\mathbb{E}_k\left[\left\| \nabla f(x^{k+1}) - \nabla f(x^{k}) - \texttt{ARAgg}_Q^{k+1}\right\|^2\mid [2]\right]\right]\\
    &\stackrel{(\ref{eq:yung-1})}{\leq} \mathbb{E}\left[\mathbb{E}_k\left[2\left\| \nabla f(x^{k+1}) - \nabla f(x^{k}) \right\|^2+2\left\| \texttt{ARAgg}_Q^{k+1}\right\|^2\mid [2]\right]\right]\\
            \end{align*}
            Using $L$-smoothness and Assumption~\ref{assm:bounded-aggr} we have 

          \begin{align*}
    T_3 
    &\stackrel{(\ref{eq:yung-1})}{\leq} \mathbb{E}\left[\mathbb{E}_k\left[2L^2\left\|  x^{k+1} - x^k\right\|^2+2F_{\cA}^2\lambda^2_{k+1}\mid [2]\right]\right]\\
     &\leq\mathbb{E}\left[\mathbb{E}_k\left[2L^2\left\|  x^{k+1} - x^k\right\|^2+2F_{\cA}^2\alpha^2_{\lambda_{k+1}}\|x^{k+1} - x^k\|^2\mid [2]\right]\right]\\
      &\leq 2(L^2+F_{\cA}^2\alpha^2_{\lambda_{k+1}})\mathbb{E}\left[\left\|  x^{k+1} - x^k\right\|^2\right].
            \end{align*}        
\end{proof}

\begin{lemma}
\label{lemma:final_lemma}
    Let Assumptions ~\ref{assm:bounded-aggr}, \ref{assm:L-smoothness}, \ref{assm:global}, \ref{assm:local}, \ref{assm:het} hold and Compression Operator satisfy Definition~\ref{def:Q}.  Also let us introduce the notation
$$\texttt{ARAgg}_Q^{k+1} = \texttt{ARAgg}\left(\clip_{\lambda_{k+1}}\left(\cQ\left(\widehat{\Delta}_1(x^{k+1}, x^k)\right)\right),\ldots, \clip_{\lambda_{k+1}}\left(\cQ\left(\widehat{\Delta}_C(x^{k+1}, x^k)\right)\right)\right).$$ 
Then for all $k\geq 0$ the iterates produced by \algname{Byz-VR-MARINA-PP} (Algorithm~\ref{alg:byz_vr_marina}) satisfy

    \begin{align*}
    \mathbb{E}\left[\left\|g^{k+1}-\nabla f\left(x^{k+1}\right)\right\|^2\right] &\leq  \left(1-\frac{p}{4}\right) \mathbb{E}\left[\left\|g^{k}-\nabla f\left(x^{k}\right)\right\|^2\right]\\
    & +  \widehat{B}\mathbb{E}\left[\left\|\nabla f\left(x^k\right)\right\|^2\right]  + \widehat{D}\zeta^2+\frac{pA}{4}\|x^{k+1} - x^k\|^2,
\end{align*}
where \begin{align*}
    A& = \frac{4}{p}\left( \frac{80}{p} \frac{p_G \mathcal{P}_{\mathcal{G}^k_C} n}{C} \omega + 24 \frac{ G\mathcal{P}_{\mathcal{G}^k_{\widehat{C}}} c\delta}{(1-\delta)\widehat C} B + \frac{4}{p}(1-p_G)+\frac{160}{p}p_G \frac{G\mathcal{P}_{\mathcal{G}^k_C} }{(1-\delta)C}c\delta\omega  \right) L^2\\
    &+\frac{4}{p}\left( \frac{8}{p} \frac{p_G \mathcal{P}_{\mathcal{G}^k_C} n}{C} \left( 10\omega + 1 \right) + \frac{16}{p}p_G \frac{G\mathcal{P}_{\mathcal{G}^k_C} }{(1-\delta)C}c\delta(10\omega+1) \right)L_{ \pm}^2\\
    &+\frac{4}{p}\left( \frac{160}{p} p_G \frac{G \mathcal{P}_{\mathcal{G}^k_C}}{(1-\delta) C} (1+\omega)c\delta +\frac{80}{p} p_G \mathcal{P}_{\mathcal{G}^k_C} (1+\omega) \frac{n}{C} \right)\frac{\mathcal{L}_{ \pm}^2}{b}\\
    &+\frac{4}{p}\left( \frac{4}{p}(1-p_G)F_{\cA}^2\alpha^2_{\lambda_{k+1}} \right),
\end{align*}
\begin{align*}
    \widehat{B} = \frac{2 B\delta\mathcal{P}_{\mathcal{G}^k_{\widehat{C}}} }{1-\delta}\left(\frac{12cG}{\widehat C} + p \right), \quad \widehat{D} = \frac{2 \delta\mathcal{P}_{\mathcal{G}^k_{\widehat{C}}} }{1-\delta}\left(\frac{6cG}{\widehat C} + p \right)
\end{align*}
and where $p_G = \operatorname{Prob}\left\lbrace G^k_C \geq (1-\delta)C \right\rbrace $ and $\mathcal{P}_{\mathcal{G}^k_C} =  \operatorname{Prob}\left\lbrace i \in \mathcal{G}^k_C \mid G^k_C \geq\left(1-\delta\right) C\right\rbrace$.
\end{lemma}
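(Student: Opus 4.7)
The plan is to decompose the residual through the ``ideal'' estimator $\overline{g}^{k+1}$ defined in Lemmas~\ref{lemma:premainA1} and \ref{lemma:premainA2} and then apply Young's inequality carefully so that the resulting coefficient of $\|g^k - \nabla f(x^k)\|^2$ contracts by the factor $1 - p/4$. Concretely, I write
\[
    g^{k+1} - \nabla f(x^{k+1}) = \bigl(\overline{g}^{k+1} - \nabla f(x^{k+1})\bigr) + \bigl(g^{k+1} - \overline{g}^{k+1}\bigr)
\]
and invoke \eqref{eq:yung-1} with $\alpha = p/2$ to obtain
\[
    \mathbb{E}\bigl[\|g^{k+1} - \nabla f(x^{k+1})\|^2\bigr] \leq \left(1 + \frac{p}{2}\right) A_1 + \left(1 + \frac{2}{p}\right) A_2,
\]
where $A_1 = \mathbb{E}[\|\overline{g}^{k+1} - \nabla f(x^{k+1})\|^2]$ and $A_2 = \mathbb{E}[\|g^{k+1} - \overline{g}^{k+1}\|^2]$ are exactly the quantities controlled by Lemmas~\ref{lemma:premainA1} and \ref{lemma:premainA2}. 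The choice $\alpha = p/2$ is tailored so that, when combined with the $(1-p)(1 + p/4)$ factor appearing inside the bound on $A_1$, the telescoping inequality \eqref{eq:contract-2} gives $(1 + p/2)(1-p)(1 + p/4) \leq 1 - p/4$, matching the target contraction rate.

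Next, I bound $A_1$ directly via Lemma~\ref{lemma:premainA1} and split $A_2$ into the three conditional expectations $p T_1 + (1-p)p_G T_2 + (1-p)(1-p_G) T_3$ via Lemma~\ref{lemma:premainA2}. The quantities $T_1, T_2, T_3$ are then controlled by Lemmas~\ref{lemma:full_aggr}, \ref{lemma:good_aggr}, and \ref{lemma:bad_aggr}, respectively. Terms proportional to $\|\nabla f(x^k)\|^2$ arise from two places, namely the $p \cdot B \delta \mathcal{P}_{\mathcal{G}^k_{\widehat{C}}}/(1-\delta)$ contribution inside $A_1$ and the $B$-dependent term in the bound for $T_1$ coming from Assumption~\ref{assm:het}; these combine (after multiplying by $1 + 2/p$ for the $T_1$ contribution) to produce $\widehat{B}$. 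The $\zeta^2$ terms arise from the same two sources and combine to produce $\widehat{D}$. Everything else is proportional to $\|x^{k+1} - x^k\|^2$: the compression/variance piece from Lemma~\ref{lemma:premainA1} (giving the $(80/p) p_G \mathcal{P}_{\mathcal{G}^k_C} n/C$ block in $A$ after the outer Young's factor), the $B$-dependent smoothness piece from Lemma~\ref{lemma:full_aggr}, the $c\delta$-scaled aggregator-error piece from Lemma~\ref{lemma:good_aggr} (the $(160/p) p_G \mathcal{P}_{\mathcal{G}^k_C}/((1-\delta)C) c\delta$ block), and the Byzantine-majority fallback piece from Lemma~\ref{lemma:bad_aggr} (the $(4/p)(1-p_G)$ and $(4/p)(1-p_G) F_{\cA}^2 \alpha_{\lambda_{k+1}}^2$ blocks).

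The main obstacle is the arithmetic bookkeeping required to package all four $\|x^{k+1}-x^k\|^2$ contributions into the stated form $(p/4) A \|x^{k+1}-x^k\|^2$. This requires the loose but convenient estimates $1 + 2/p \leq 4/p$, $(1+p/2)(1-p) \leq 1$, and $(1+p/2)(1-p)(1 + 4/p) \leq 4/p$ on $(0,1]$, together with careful tracking of the $(1+2/p) \cdot p$ and $(1+2/p)(1-p) p_G$ weights multiplying the three pieces coming from $A_2$. A subtle point is the $(1-p_G)$-weighted contribution from $T_3$: here Assumption~\ref{assm:bounded-aggr} combined with the choice $\lambda_{k+1} = \alpha_{\lambda_{k+1}} \|x^{k+1} - x^k\|$ is essential, since otherwise Byzantines forming a majority could emit an aggregate of unbounded magnitude and the whole analysis would collapse; this is precisely the term that is responsible for the extra $1/p^2$ inflation of $A$ in the partial-participation regime compared with full participation.
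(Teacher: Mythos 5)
Your proposal follows essentially the same route as the paper's proof: the same decomposition through the ideal estimator $\overline{g}^{k+1}$ with Young's parameter $\alpha = p/2$, the contraction via \eqref{eq:contract-2} combined with the $(1-p)(1+p/4)$ factor from Lemma~\ref{lemma:premainA1}, the split of $A_2$ into $pT_1 + (1-p)p_G T_2 + (1-p)(1-p_G)T_3$ handled by Lemmas~\ref{lemma:full_aggr}, \ref{lemma:good_aggr}, \ref{lemma:bad_aggr}, and the same bookkeeping producing $\widehat{B}$, $\widehat{D}$, and the four blocks of $A$. The plan is correct and matches the paper's argument.
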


\begin{proof}

Let us combine bounds for $A_1$ and $A_2$ together:
\begin{align*}
     A_0 &=    \mathbb{E}\left[\left\|g^{k+1}-\nabla f\left(x^{k+1}\right)\right\|^2\right]\\
     & \leq \left(1+\frac{p}{2}\right)\mathbb{E}\left[\left\|\overline{g}^{k+1}-\nabla f\left(x^{k+1}\right)\right\|^2\right]+\left(1+\frac{2}{p}\right)\mathbb{E}\left[\left\|g^{k+1}-\overline{g}^{k+1}\right\|^2\right]\\
     &\leq \left(1+\frac{p}{2}\right)A_1 + \left(1+\frac{2}{p}\right)A_2\\
       & \leq \left(1+\frac{p}{2}\right)(1-p) \left(1+ \frac{p}{4}\right)\mathbb{E}\left[\left\|g^{k}-\nabla f\left(x^{k}\right)\right\|^2\right]\\
    &+ \text{\scriptsize $\left(1+\frac{p}{2}\right)(1-p)p_G\left(1+ \frac{4}{p}\right) \frac{2\cdot\mathcal{P}_{\mathcal{G}^k_C}n}{C} \left( 10 \omega L^2+(10\omega+1) L_{ \pm}^2+\frac{10(\omega+1) \mathcal{L}_{ \pm}^2}{b} \right) \mathbb{E} \left[ \|x^{k+1} - x^k\|^2\right]$}\\
    &+ \left(1+\frac{p}{2}\right)p\left(\frac{ \delta\cdot\mathcal{P }_{\mathcal{G}^k_{\widehat{C}}} }{(1-\delta)}  \mathbb{E}\left[ B\|\nabla f(x)\|^2+\zeta^2 \right]\right)\\
    &+ \left(1+\frac{2}{p}\right)p\mathbb{E}\left[\mathbb{E}_k\left[\left\|\texttt{ARAgg}\left(\nabla f_1(x^{k+1}), \ldots, \nabla f_{\widehat{C}}(x^{k+1})\right) - \nabla f(x^{k+1})\right\|^2\right]\mid [1]\right]\\
   & \text{\small$+ \left(1+\frac{2}{p}\right)(1-p)p_G \mathbb{E}\left[ \mathbb{E}_k\left[\left\| \frac{1}{G^k_C} \sum \limits_{i \in \mathcal{G}^k_C}\clip_{\lambda}\left(\mathcal{Q}\left(\widehat{\Delta}_i\left(x^{k+1}, x^k\right)\right)\right)  -   \texttt{ARAgg}_Q^{k+1}\right\|^2\mid [3]\right]\right]$}\\
   &+ \left(1+\frac{2}{p}\right)(1-p)(1-p_G)\mathbb{E}\left[\mathbb{E}_k\left[\left\| \nabla f(x^{k+1}) - \nabla f(x^{k}) - \texttt{ARAgg}_Q^{k+1}\right\|^2\mid [2]\right]\right].
   \end{align*}
   Finally, we obtain the following bound:
   \begin{align*}
A_0  & \stackrel{(\ref{eq:yung-1})}{\leq}  \left(1-\frac{p}{4}\right) \mathbb{E}\left[\left\|g^{k}-\nabla f\left(x^{k}\right)\right\|^2\right]\\
  &+  \frac{8}{p} \frac{\mathcal{P}_{\mathcal{G}^k_C}n}{C}p_G  \left( 10 \omega L^2+(10\omega+1) L_{ \pm}^2+\frac{10(\omega+1) \mathcal{L}_{ \pm}^2}{b} \right)\mathbb{E}\left[\|x^{k+1} - x^k\|^2\right]\\
  &+2p\left(\frac{ \delta\cdot\mathcal{P }_{\mathcal{G}^k_{\widehat{C}}} }{1-\delta}  \mathbb{E}\left[ B\|\nabla f(x)\|^2+\zeta^2 \right]\right)\\
      &+ \left(p+2\right)\mathbb{E}\left[\mathbb{E}_k\left[\left\|\texttt{ARAgg}\left(\nabla f_1(x^{k+1}), \ldots, \nabla f_n(x^{k+1})\right) - \nabla f(x^{k+1})\right\|^2\right]\mid [1] \right]\\
   & + \frac{2}{p}p_G \mathbb{E}\left[ \mathbb{E}_k\left[\left\| \frac{1}{G^k_C} \sum \limits_{i \in \mathcal{G}^k_C}\clip_{\lambda}\left(\mathcal{Q}\left(\widehat{\Delta}_i\left(x^{k+1}, x^k\right)\right)\right)  -   \texttt{ARAgg}_Q^{k+1}\right\|^2\mid [3]\right]\right]\\
   &+ \frac{2}{p}(1-p_G)\mathbb{E}\left[\mathbb{E}_k\left[\left\| \nabla f(x^{k+1}) - \nabla f(x^{k}) - \texttt{ARAgg}_Q^{k+1}\right\|^2\mid [2]\right]\right]
\end{align*}
Now, we can apply Lemmas \ref{lemma:full_aggr}, \ref{lemma:good_aggr}, \ref{lemma:bad_aggr}:
   \begin{align*}
   A_0 &=  \mathbb{E}\left[\left\|g^{k+1}-\nabla f\left(x^{k+1}\right)\right\|^2\right]\\ 
   & \leq \left(1-\frac{p}{4}\right) \mathbb{E}\left[\left\|g^{k}-\nabla f\left(x^{k}\right)\right\|^2\right]\\
  &+  \frac{8}{p} \frac{\mathcal{P}_{\mathcal{G}^k_C}n}{C}p_G  \left( 10 \omega L^2+(10\omega+1) L_{ \pm}^2+\frac{10(\omega+1) \mathcal{L}_{ \pm}^2}{b} \right)\mathbb{E}\left[\|x^{k+1} - x^k\|^2\right]\\
      &+ \left(p+2\right)\left(\frac{8 G \mathcal{P}_{\mathcal{G}^k_{\widehat{C}}} c\delta}{(1-\delta)\widehat{C}}  B  \mathbb{E}\left[\left\|\nabla f\left(x^k\right)\right\|^2\right] + \frac{8 G \mathcal{P}_{\mathcal{G}^k_{\widehat{C}}} c\delta}{(1-\delta)\widehat{C}}  B  L^2 \mathbb{E}\left[\left\|x^{k+1}-x^k\right\|^2\right]\right)\\
      &+4\left(p+2\right) \frac{G \mathcal{P}_{\mathcal{G}^k_{\widehat{C}}} c\delta}{(1-\delta)\widehat{C}}  \zeta^2\\
   & + \frac{2}{p}p_G\mathbb{E}\left[  80(1+\omega) \frac{G\mathcal{P}_{\mathcal{G}^k_C} }{(1-\delta)C} \frac{\mathcal{L}_{ \pm}^2}{b}c\delta \|x^{k+1}-x^k\|^2 \right] \\
    &    + \frac{2}{p}p_G \mathbb{E}\left[  8(10\omega+1)\frac{G\mathcal{P}_{\mathcal{G}^k_C} }{(1-\delta)C} L_{ \pm}^2c\delta  \|x^{k+1}-x^k\|^2\right]\\
     &   + \frac{2}{p}p_G \mathbb{E}\left[  80 \frac{G\mathcal{P}_{\mathcal{G}^k_C} }{(1-\delta)C} \omega  L^2 c\delta  \|x^{k+1}-x^k\|^2\right]\\
   &+ \frac{2}{p}(1-p_G)2(L^2+F_{\cA}^2\alpha^2_{\lambda_{k+1}})\mathbb{E}\left[\left\|  x^{k+1} - x^k\right\|^2\right]\\
 &  +2p\frac{ \delta\mathcal{P }_{\mathcal{G}^k_{\widehat{C}}} }{1-\delta}  \mathbb{E}\left[ B\|\nabla f(x)\|^2+\zeta^2 \right].
\end{align*}
Finally, we have 
\begin{align*}
    \mathbb{E}\left[\left\|g^{k+1}-\nabla f\left(x^{k+1}\right)\right\|^2\right]   &\leq  \left(1-\frac{p}{4}\right) \mathbb{E}\left[\left\|g^{k}-\nabla f\left(x^{k}\right)\right\|^2\right]\\
    &+\widehat{B}\mathbb{E}\left[\left\|\nabla f\left(x^k\right)\right\|^2\right] + \widehat{D} \zeta^2+\frac{p A}{4} \EE\left[\|x^{k+1} - x^k\|^2\right],
\end{align*}
where 
\begin{align*}
    A &= \text{\small $\frac{32p_G }{p^2} \frac{\mathcal{P}_{\mathcal{G}^k_C}n}{C} \left( 10 \omega L^2+(10\omega+1) L_{ \pm}^2+\frac{10(\omega+1) \mathcal{L}_{ \pm}^2}{b} \right)$}\\
    &+\text{\small $\frac{8}{p^2}\frac{G\mathcal{P}_{\mathcal{G}^k_C} }{(1-\delta)C}p_G c\delta\left( 80(1+\omega)  \frac{\mathcal{L}_{ \pm}^2}{b} + 8(10\omega+1)L_{ \pm}^2  
       + 80\omega  L^2\right)$}\\
       &+\text{\small $\frac{4}{p}\cdot(p+2) \frac{8 G \mathcal{P}_{\mathcal{G}^k_{\widehat{C}}} c\delta}{(1-\delta)\widehat{C}} B  L^2 +\frac{16(1-p_G)}{p^2}(L^2+F_{\cA}^2\alpha^2_{\lambda_{k+1}})$},
\end{align*}
and
\begin{align*}
    \widehat{B} = 2 \frac{ \delta\mathcal{P}_{\mathcal{G}^k_{\widehat{C}}} }{1-\delta} B\left(\frac{12cG}{\widehat C} + p \right), \quad \widehat{D} = 2 \frac{ \delta\mathcal{P}_{\mathcal{G}^k_{\widehat{C}}} }{1-\delta}\left(\frac{6cG}{\widehat C} + p \right).
\end{align*}
Once we simplify the equation, we obtain
\begin{align*}
    A& = \frac{4}{p}\left( \frac{80}{p} \frac{p_G \mathcal{P}_{\mathcal{G}^k_C} n}{C} \omega + 24 \frac{ G\mathcal{P}_{\mathcal{G}^k_{\widehat{C}}} c\delta}{(1-\delta)\widehat C} B + \frac{4}{p}(1-p_G)+\frac{160}{p}p_G \frac{G\mathcal{P}_{\mathcal{G}^k_C} }{(1-\delta)C}c\delta\omega  \right) L^2\\
    &+\frac{4}{p}\left( \frac{8}{p} \frac{p_G \mathcal{P}_{\mathcal{G}^k_C} n}{C} \left( 10\omega + 1 \right) + \frac{16}{p}p_G \frac{G\mathcal{P}_{\mathcal{G}^k_C} }{(1-\delta)C}c\delta(10\omega+1) \right)L_{ \pm}^2\\
    &+\frac{4}{p}\left( \frac{160}{p} p_G \frac{G \mathcal{P}_{\mathcal{G}^k_C}}{(1-\delta) C} (1+\omega)c\delta +\frac{80}{p} p_G \mathcal{P}_{\mathcal{G}^k_C} (1+\omega) \frac{n}{C} \right)\frac{\mathcal{L}_{ \pm}^2}{b}\\
    &+\frac{4}{p}\left( \frac{4}{p}(1-p_G)F_{\cA}^2\alpha^2_{\lambda_{k+1}} \right).
\end{align*}
\end{proof}

\subsection{Main Results}

\begin{theorem}
\label{them:1}
 Let Assumptions \ref{assm:bounded-aggr}, \ref{assm:L-smoothness}, \ref{assm:global}, \ref{assm:local}, \ref{assm:het} hold. Setting $\lambda_{k+1} = 2\max_{i\in \mathcal{G}} L_i \left\|x^{k+1} - x^k\right\|$. Assume that
$$
0<\gamma \leq \frac{1}{L+\sqrt{A}}, \quad 4\widehat{B}<p,
$$
where 
\begin{align*}
    A& = \frac{4}{p}\left( \frac{80}{p} \frac{p_G \mathcal{P}_{\mathcal{G}^k_C} n}{C} \omega + 24 \frac{ G\mathcal{P}_{\mathcal{G}^k_{\widehat{C}}} c\delta}{(1-\delta)\widehat C} B + \frac{4}{p}(1-p_G)+\frac{160}{p}p_G \frac{G\mathcal{P}_{\mathcal{G}^k_C} }{(1-\delta)C}c\delta\omega  \right) L^2\\
    &+\frac{4}{p}\left( \frac{8}{p} \frac{p_G \mathcal{P}_{\mathcal{G}^k_C} n}{C} \left( 10\omega + 1 \right) + \frac{16}{p}p_G \frac{G\mathcal{P}_{\mathcal{G}^k_C} }{(1-\delta)C}c\delta(10\omega+1) \right)L_{ \pm}^2\\
    &+\frac{4}{p}\left( \frac{160}{p} p_G \frac{G \mathcal{P}_{\mathcal{G}^k_C}}{(1-\delta) C} (1+\omega)c\delta +\frac{80}{p} p_G \mathcal{P}_{\mathcal{G}^k_C} (1+\omega) \frac{n}{C} \right)\frac{\mathcal{L}_{ \pm}^2}{b}\\
    &+\frac{4}{p}\left( \frac{4}{p}(1-p_G)F_{\cA}^2\alpha^2_{\lambda_{k+1}} \right),
\end{align*}
\textbf{\begin{align*}
    \widehat{B} = 2 \frac{ \delta\mathcal{P}_{\mathcal{G}^k_{\widehat{C}}} }{1-\delta} B\left(\frac{12cG}{\widehat C} + p \right), \quad \widehat{D} = 2 \frac{ \delta\mathcal{P}_{\mathcal{G}^k_{\widehat{C}}} }{1-\delta} \left(\frac{6cG}{\widehat C} + p \right)
\end{align*}}
and 
\begin{align*}
 \mathcal{P}_{\mathcal{G}^k_C} &=   \frac{C}{np_G} \cdot \sum_{(1-\delta)C\leq t \leq C} \left(\left(\begin{array}{l}
G-1 \\
t-1
\end{array}\right)\left(\begin{array}{l}
n-G \\
C-t
\end{array}\right) \left(\left(\begin{array}{l}
n \\
C
\end{array}\right)\right)^{-1} \right),\\
p_G &=  \PP\left\lbrace G_C^k \geq\left(1-\delta\right) C\right\rbrace\\
&= \sum_{\lceil(1-\delta)C\rceil\leq t \leq C} \left(\left(\begin{array}{l}
G \\
t
\end{array}\right)\left(\begin{array}{l}
n-G \\
C-t
\end{array}\right) \left(\begin{array}{l}
n-1 \\
C-1
\end{array}\right)^{-1} \right).
\end{align*}
Then for all $K \geq 0$ the iterates produced by \algname{Byz-VR-MARINA} (Algorithm \ref{alg:byz_vr_marina}) satisfy
$$
\mathbb{E}\left[\left\|\nabla f\left(\widehat{x}^K\right)\right\|^2\right] \leq \frac{2 \Phi^0}{\gamma\left(1-\frac{4 \widehat{B}}{p}\right)(K+1)}+\frac{4 \widehat{D}  \zeta^2}{p-4 \widehat{B} },
$$
where $\widehat{x}^K$ is choosen uniformly at random from $x^0, x^1, \ldots, x^K$, and $\Phi^0=$ $f\left(x^0\right)-f^*+\frac{2\gamma}{p}\left\|g^0-\nabla f\left(x^0\right)\right\|^2$ . 
\end{theorem}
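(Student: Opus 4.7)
} The plan is to build a Lyapunov function of the form $\Phi^k \eqdef f(x^k) - f^* + \tfrac{2\gamma}{p}\|g^k - \nabla f(x^k)\|^2$, show that $\EE[\Phi^{k+1}] \leq \EE[\Phi^k] - c_1 \EE\|\nabla f(x^k)\|^2 + c_2\zeta^2$ for appropriate constants $c_1, c_2 > 0$ under the stated step-size restriction, and then telescope.

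First I would invoke the descent inequality from Lemma~\ref{lemma:page}, which uses only $L$-smoothness of $f$ and the update rule $x^{k+1} = x^k - \gamma g^k$, yielding
\begin{equation*}
\EE[f(x^{k+1})] \leq \EE[f(x^k)] - \tfrac{\gamma}{2}\EE\|\nabla f(x^k)\|^2 - \left(\tfrac{1}{2\gamma} - \tfrac{L}{2}\right)\EE\|x^{k+1}-x^k\|^2 + \tfrac{\gamma}{2}\EE\|g^k - \nabla f(x^k)\|^2.
\end{equation*}
Then I would apply Lemma~\ref{lemma:final_lemma} to the estimator error at step $k+1$, multiply the resulting bound by $\tfrac{2\gamma}{p}$, and add it to the descent inequality. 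The crucial observation is that the combined coefficient of $\EE\|g^k - \nabla f(x^k)\|^2$ becomes $\tfrac{\gamma}{2} + \tfrac{2\gamma}{p}(1-\tfrac{p}{4}) - \tfrac{2\gamma}{p} = 0$, so this term cancels exactly, which is precisely why the factor $\tfrac{2\gamma}{p}$ was chosen in $\Phi^k$.

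After this cancellation, the combined recursion reads
\begin{equation*}
\EE[\Phi^{k+1}] \leq \EE[\Phi^k] - \tfrac{\gamma}{2}\!\left(1 - \tfrac{4\widehat{B}}{p}\right)\!\EE\|\nabla f(x^k)\|^2 - \left(\tfrac{1}{2\gamma} - \tfrac{L}{2} - \tfrac{\gamma A}{2}\right)\!\EE\|x^{k+1}-x^k\|^2 + \tfrac{2\gamma\widehat{D}}{p}\zeta^2.
\end{equation*}
The step-size condition $\gamma \leq 1/(L+\sqrt{A})$ combined with Lemma~\ref{lemma:peter} (applied with $a=A$, $b=L$) gives $A\gamma^2 + L\gamma \leq 1$, which is equivalent to $\tfrac{1}{2\gamma} - \tfrac{L}{2} - \tfrac{\gamma A}{2} \geq 0$, so the $\|x^{k+1}-x^k\|^2$ term can be dropped. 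The assumption $4\widehat{B} < p$ guarantees the coefficient in front of $\EE\|\nabla f(x^k)\|^2$ is strictly positive.

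Finally I would telescope the resulting inequality from $k=0$ to $k=K$, use $\EE[\Phi^{K+1}] \geq 0$ (since $f(x^{K+1}) \geq f^*$), divide by $\tfrac{\gamma}{2}(1 - 4\widehat{B}/p)(K+1)$, and identify the left-hand side with $\EE\|\nabla f(\widehat{x}^K)\|^2$ via the uniform choice of $\widehat{x}^K$. This yields the claimed bound. The main obstacle, in my view, is purely bookkeeping: verifying that the coefficient of $\|g^k-\nabla f(x^k)\|^2$ indeed vanishes and that the $\widehat{B}\,\|\nabla f(x^k)\|^2$ term from Lemma~\ref{lemma:final_lemma} can be absorbed into the negative gradient term produced by the descent lemma; all remaining steps are standard telescoping arguments.
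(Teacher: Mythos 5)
Your proposal is correct and follows essentially the same route as the paper's proof: the same Lyapunov function $\Phi^k$ with weight $\nicefrac{2\gamma}{p}$, the descent inequality of Lemma~\ref{lemma:page} combined with the estimator-error recursion of Lemma~\ref{lemma:final_lemma}, the step-size condition handled via Lemma~\ref{lemma:peter}, and the final telescoping with $\Phi^{K+1}\geq 0$. Your bookkeeping (exact cancellation of the $\|g^k-\nabla f(x^k)\|^2$ coefficient, absorption of the $\widehat{B}$-term into the negative gradient term, and the resulting constants) matches the paper's computation.
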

\begin{proof}[Proof of Theorem~\ref{them:1}]
     For all $k \geq 0$ we introduce $\Phi^k=f\left(x^k\right)-f^*+\frac{2\gamma}{p}\left\|g^k-\nabla f\left(x^k\right)\right\|^2$. Using the results of Lemmas \ref{lemma:final_lemma} and \ref{lemma:page}, we derive
     \begin{align*}
         \mathbb{E}\left[\Phi^{k+1}\right] &\stackrel{(\ref{lemma:page})}{\leq} \mathbb{E}\left[f\left(x^k\right)-f^*-\left(\frac{1}{2 \gamma}-\frac{L}{2}\right)\left\|x^{k+1}-x^k\right\|^2+\frac{\gamma}{2}\left\|g^k-\nabla f\left(x^k\right)\right\|^2\right]\\
         &-\frac{\gamma}{2} \mathbb{E}\left[\left\|\nabla f\left(x^k\right)\right\|^2\right] + \frac{2\gamma}{p}   \mathbb{E}\left[\left\|g^{k+1}-\nabla f\left(x^{k+1}\right)\right\|^2\right]\\
         &\stackrel{(\ref{lemma:final_lemma})}{\leq}  \mathbb{E}\left[f\left(x^k\right)-f^*-\left(\frac{1}{2 \gamma}-\frac{L}{2}\right)\left\|x^{k+1}-x^k\right\|^2+\frac{\gamma}{2}\left\|g^k-\nabla f\left(x^k\right)\right\|^2\right]\\
         &-\frac{\gamma}{2} \mathbb{E}\left[\left\|\nabla f\left(x^k\right)\right\|^2\right] + \frac{2\gamma}{p}     \left(1-\frac{p}{4}\right) \mathbb{E}\left[\left\|g^{k}-\nabla f\left(x^{k}\right)\right\|^2\right]\\
         &+ \frac{2\gamma}{p}\left( \widehat{B}\mathbb{E}\left[\left\|\nabla f\left(x^k\right)\right\|^2\right] + \widehat{D} \zeta^2+\frac{p A}{4}\|x^{k+1} - x^k\|^2\right)\\
         &=\mathbb{E}\left[f\left(x^k\right)-f^*\right] + \frac{2\gamma}{p}\left(\left(1-\frac{p}{4}\right) + \frac{p}{4}\right) \mathbb{E}\left[\left\|g^{k}-\nabla f\left(x^{k}\right)\right\|^2\right]+\frac{2\widehat{D}\zeta^2\gamma}{p}\\
         &+\frac{1}{2\gamma}\left(1-L\gamma-A\gamma^2\right)\mathbb{E}\left[ \|x^{k+1} - x^k\|^2 \right] - \frac{\gamma}{2}\left(1-\frac{4\widehat{B}}{p}\right)\mathbb{E}\left[\left\|\nabla f\left(x^k\right)\right\|^2\right]\\
         &=\mathbb{E}\left[\Phi^{k}\right] +\frac{2 \widehat{D}\zeta^2\gamma}{p}+\frac{1}{2\gamma}\left(1-L\gamma-A\gamma^2\right)\mathbb{E}\left[ \|x^{k+1} - x^k\|^2 \right]\\
         &- \frac{\gamma}{2}\left(1-\frac{4\widehat{B}}{p}\right)\mathbb{E}\left[\left\|\nabla f\left(x^k\right)\right\|^2\right].
     \end{align*}
     Using choice of stepsize and second condition: $
0<\gamma \leq \frac{1}{L+\sqrt{A}}, 4\widehat{B}<p
$ and lemma \ref{lemma:peter} we have
\begin{align*}
       \mathbb{E}\left[\Phi^{k+1}\right]     &\leq\mathbb{E}\left[\Phi^{k}\right] +\frac{2 \widehat{D}\zeta^2\gamma}{p}- \frac{\gamma}{2}\left(1-\frac{4\widehat{B}}{p}\right)\mathbb{E}\left[\left\|\nabla f\left(x^k\right)\right\|^2\right]
     \end{align*}
      Next, we have $\frac{\gamma}{2}\left(1-\frac{4 \widehat{B} }{p}\right)>0$ and $\Phi^{k+1}\geq0$. Therefore, summing up the above inequality for $k=0,1, \ldots, K$ and rearranging the terms, we get
\begin{align*}
\frac{1}{K+1} \sum_{k=0}^K \mathbb{E}\left[\left\|\nabla f\left(x^k\right)\right\|^2\right] \leq & \frac{2}{\gamma\left(1-\frac{4 \widehat{B}}{p}\right)(K+1)} \sum_{k=0}^K\left(\mathbb{E}\left[\Phi^k\right]-\mathbb{E}\left[\Phi^{k+1}\right]\right) \\
& +\frac{4 \widehat{D}  \zeta^2}{p-4\widehat{B} } \\
= & \frac{2\left(\mathbb{E}\left[\Phi^0\right]-\mathbb{E}\left[\Phi^{k+1}\right]\right)}{\gamma\left(1-\frac{4 \widehat{B} }{p}\right)(K+1)}+\frac{4\widehat{D}  \zeta^2}{p-4 \widehat{B} } \\
\leq& \frac{2 \mathbb{E}\left[\Phi^0\right]}{\gamma\left(1-\frac{4\widehat{B} }{p}\right)(K+1)}+\frac{4 \widehat{D}  \zeta^2}{p-4\widehat{B} }.
\end{align*}

\end{proof}

\begin{theorem}
\label{them:2}
Let Assumptions  \ref{assm:bounded-aggr}, \ref{assm:L-smoothness}, \ref{assm:global}, \ref{assm:local}, \ref{assm:het}, \ref{assm:PL} hold. Set $\lambda_{k+1} = \max_{i\in \mathcal{G}} L_i \left\|x^{k+1} - x^k\right\|$. Assume that
$$
0<\gamma \leq \min \left\{\frac{1}{L+\sqrt{2 A}} \right\}, \quad 8\widehat{B}<p
$$
where 
\begin{align*}
    A& = \frac{4}{p}\left( \frac{80}{p} \frac{p_G \mathcal{P}_{\mathcal{G}^k_C} n}{C} \omega + 24 \frac{ G\mathcal{P}_{\mathcal{G}^k_{\widehat{C}}} c\delta}{(1-\delta)\widehat C} B + \frac{4}{p}(1-p_G)+\frac{160}{p}p_G \frac{G\mathcal{P}_{\mathcal{G}^k_C} }{(1-\delta)C}c\delta\omega  \right) L^2\\
    &+\frac{4}{p}\left( \frac{8}{p} \frac{p_G \mathcal{P}_{\mathcal{G}^k_C} n}{C} \left( 10\omega + 1 \right) + \frac{16}{p}p_G \frac{G\mathcal{P}_{\mathcal{G}^k_C} }{(1-\delta)C}c\delta(10\omega+1) \right)L_{ \pm}^2\\
    &+\frac{4}{p}\left( \frac{160}{p} p_G \frac{G \mathcal{P}_{\mathcal{G}^k_C}}{(1-\delta) C} (1+\omega)c\delta +\frac{80}{p} p_G \mathcal{P}_{\mathcal{G}^k_C} (1+\omega) \frac{n}{C} \right)\frac{\mathcal{L}_{ \pm}^2}{b}\\
    &+\frac{4}{p}\left( \frac{4}{p}(1-p_G)F_{\cA}^2\alpha^2_{\lambda_{k+1}} \right),
\end{align*}
\textbf{\begin{align*}
    \widehat{B} = 2 \frac{ \delta\mathcal{P}_{\mathcal{G}^k_{\widehat{C}}} }{1-\delta} B\left(\frac{12cG}{\widehat C} + p \right), \quad \widehat{D} = 2 \frac{ \delta\mathcal{P}_{\mathcal{G}^k_{\widehat{C}}} }{1-\delta} \left(\frac{6cG}{\widehat C} + p \right)
\end{align*}}
and 
\begin{align*}
 \mathcal{P}_{\mathcal{G}^k_C} &=   \frac{C}{np_G} \cdot \sum_{(1-\delta)C\leq t \leq C} \left(\left(\begin{array}{l}
G-1 \\
t-1
\end{array}\right)\left(\begin{array}{l}
n-G \\
C-t
\end{array}\right) \left(\left(\begin{array}{l}
n \\
C
\end{array}\right)\right)^{-1} \right),\\
p_G &=  \PP\left\lbrace G_C^k \geq\left(1-\delta\right) C\right\rbrace\\
&= \sum_{\lceil(1-\delta)C\rceil\leq t \leq C} \left(\left(\begin{array}{l}
G \\
t
\end{array}\right)\left(\begin{array}{l}
n-G \\
C-t
\end{array}\right) \left(\begin{array}{l}
n-1 \\
C-1
\end{array}\right)^{-1} \right).
\end{align*}
Then for all $K \geq 0$ the iterates produced by \algname{Byz-VR-MARINA} (Algorithm \ref{alg:byz_vr_marina}) satisfy
$$
\mathbb{E}\left[f\left(x^K\right)-f\left(x^*\right)\right] \leq\left(1-\rho\right)^K \Phi^0+\frac{4 \widehat{D} \gamma \zeta^2}{p\rho},
$$
where $\rho = \min\left[\gamma\mu\left(1-\frac{8\widehat{B}}{p}\right), \frac{p}{8}\right]$ and $\Phi^0=$ $f\left(x^0\right)-f^*+\frac{4\gamma}{p}\left\|g^0-\nabla f\left(x^0\right)\right\|^2$.
\end{theorem}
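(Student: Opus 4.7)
}
The plan is to mirror the proof of Theorem~\ref{them:1}, but with a Lyapunov function tuned so that the contraction of the gradient-estimator error term and the PŁ-driven contraction of the function suboptimality can be combined into a single geometric rate. Specifically, I would define
\begin{equation*}
\Phi^k \;\eqdef\; f(x^k) - f^* \;+\; \frac{4\gamma}{p}\bigl\|g^k - \nabla f(x^k)\bigr\|^2,
\end{equation*}
where the coefficient $4\gamma/p$ (instead of the $2\gamma/p$ used in Theorem~\ref{them:1}) is chosen so that the surplus $\tfrac{\gamma}{2}$ appearing from the descent lemma can be absorbed and still leave contraction factor $1-p/8$ on the variance term.

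The first main step is to combine Lemma~\ref{lemma:page} (applied at iteration $k$) with Lemma~\ref{lemma:final_lemma} (applied to bound $\|g^{k+1}-\nabla f(x^{k+1})\|^2$). This yields, after taking expectations,
\begin{align*}
\mathbb{E}[\Phi^{k+1}]
&\leq \mathbb{E}[f(x^k)-f^*] + \Bigl(\tfrac{4\gamma}{p}(1-\tfrac{p}{4}) + \tfrac{\gamma}{2}\Bigr)\mathbb{E}\|g^k-\nabla f(x^k)\|^2 \\
&\quad - \tfrac{1}{2\gamma}\bigl(1 - L\gamma - 2A\gamma^2\bigr)\mathbb{E}\|x^{k+1}-x^k\|^2
 - \Bigl(\tfrac{\gamma}{2} - \tfrac{4\gamma\widehat{B}}{p}\Bigr)\mathbb{E}\|\nabla f(x^k)\|^2
 + \tfrac{4\gamma\widehat{D}\zeta^2}{p}.
\end{align*}
The stepsize restriction $\gamma \leq 1/(L+\sqrt{2A})$ combined with Lemma~\ref{lemma:peter} ensures $1 - L\gamma - 2A\gamma^2 \geq 0$, so the $\|x^{k+1}-x^k\|^2$ term can be discarded. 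The coefficient of $\|g^k-\nabla f(x^k)\|^2$ simplifies to $\tfrac{4\gamma}{p} - \tfrac{\gamma}{2} = \tfrac{4\gamma}{p}(1-\tfrac{p}{8})$, which produces the factor $(1-p/8)$ multiplying the second part of $\Phi^k$.

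The second main step is to convert the remaining negative gradient term into contraction of $f(x^k)-f^*$ via the PŁ condition (Assumption~\ref{assm:PL}). The assumption $8\widehat{B}<p$ guarantees $\tfrac{\gamma}{2}-\tfrac{4\gamma\widehat{B}}{p}>0$, and PŁ gives
\begin{equation*}
-\Bigl(\tfrac{\gamma}{2} - \tfrac{4\gamma\widehat{B}}{p}\Bigr)\|\nabla f(x^k)\|^2 \;\leq\; -\gamma\mu\Bigl(1-\tfrac{8\widehat{B}}{p}\Bigr)(f(x^k)-f^*).
\end{equation*}
Putting both contractions together with $\rho \eqdef \min\{\gamma\mu(1-8\widehat{B}/p),\,p/8\}$ yields the one-step recursion
\begin{equation*}
\mathbb{E}[\Phi^{k+1}] \;\leq\; (1-\rho)\,\mathbb{E}[\Phi^k] \;+\; \tfrac{4\gamma\widehat{D}\zeta^2}{p}.
\end{equation*}
Unrolling this inequality over $K$ steps and summing the resulting geometric series gives the stated bound, using $f(x^K)-f^* \leq \Phi^K$ at the end.

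The main obstacle I anticipate is purely bookkeeping: choosing the Lyapunov coefficient $4\gamma/p$ so that, simultaneously, (i) the $\tfrac{\gamma}{2}\|g^k-\nabla f(x^k)\|^2$ term produced by the descent lemma is absorbed into the $(1-p/4)$-contraction from Lemma~\ref{lemma:final_lemma} leaving a clean $1-p/8$ factor, (ii) the coefficient in front of $\|x^{k+1}-x^k\|^2$ becomes $1-L\gamma-2A\gamma^2$ (hence the factor $2$ in the stepsize bound $\gamma\leq 1/(L+\sqrt{2A})$ compared with Theorem~\ref{them:1}), and (iii) the coefficient $\tfrac{\gamma}{2}-\tfrac{4\gamma\widehat{B}}{p}$ in front of $\|\nabla f(x^k)\|^2$ stays positive, forcing the condition $8\widehat{B}<p$. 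Everything else is a direct application of results already proved in the excerpt.
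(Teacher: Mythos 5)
Your proposal is correct and matches the paper's own proof essentially line for line: same Lyapunov function $\Phi^k=f(x^k)-f^*+\tfrac{4\gamma}{p}\|g^k-\nabla f(x^k)\|^2$, same combination of Lemma~\ref{lemma:page} with Lemma~\ref{lemma:final_lemma}, the same bookkeeping giving the $(1-p/8)$ factor, the factor $2$ in $\gamma\le 1/(L+\sqrt{2A})$ via Lemma~\ref{lemma:peter}, the PŁ step under $8\widehat{B}<p$, and the unrolled geometric recursion. No gaps.
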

\begin{proof}
     For all $k \geq 0$ we introduce $\Phi^k=f\left(x^k\right)-f^*+\frac{4\gamma}{p}\left\|g^k-\nabla f\left(x^k\right)\right\|^2$. Using the results of Lemmas \ref{lemma:final_lemma} and \ref{lemma:page}, we derive
     \begin{align*}
         \mathbb{E}\left[\Phi^{k+1}\right] &\stackrel{(\ref{lemma:page})}{\leq} \mathbb{E}\left[f\left(x^k\right)-f^*-\left(\frac{1}{2 \gamma}-\frac{L}{2}\right)\left\|x^{k+1}-x^k\right\|^2+\frac{\gamma}{2}\left\|g^k-\nabla f\left(x^k\right)\right\|^2\right]\\
         &-\frac{\gamma}{2} \mathbb{E}\left[\left\|\nabla f\left(x^k\right)\right\|^2\right] + \frac{4\gamma}{p}   \mathbb{E}\left[\left\|g^{k+1}-\nabla f\left(x^{k+1}\right)\right\|^2\right]\\
         &\stackrel{(\ref{lemma:final_lemma})}{\leq}  \mathbb{E}\left[f\left(x^k\right)-f^*-\left(\frac{1}{2 \gamma}-\frac{L}{2}\right)\left\|x^{k+1}-x^k\right\|^2+\frac{\gamma}{2}\left\|g^k-\nabla f\left(x^k\right)\right\|^2\right]\\
         &-\frac{\gamma}{2} \mathbb{E}\left[\left\|\nabla f\left(x^k\right)\right\|^2\right] + \frac{4\gamma}{p}     \left(1-\frac{p}{4}\right) \mathbb{E}\left[\left\|g^{k}-\nabla f\left(x^{k}\right)\right\|^2\right]\\
         &+ \frac{4\gamma}{p}\left( \widehat{B}\mathbb{E}\left[\left\|\nabla f\left(x^k\right)\right\|^2\right] + \widehat{D} \zeta^2+\frac{p A}{4}\|x^{k+1} - x^k\|^2\right)\\
         &=\mathbb{E}\left[f\left(x^k\right)-f^*\right] + \frac{4\gamma}{p}\left(\left(1-\frac{p}{4}\right) + \frac{p}{8}\right) \mathbb{E}\left[\left\|g^{k}-\nabla f\left(x^{k}\right)\right\|^2\right]+\frac{4 \widehat{D}\zeta^2\gamma}{p}\\
         &+\frac{1}{2\gamma}\left(1-L\gamma-2A\gamma^2\right)\mathbb{E}\left[ \|x^{k+1} - x^k\|^2 \right] - \frac{\gamma}{2}\left(1-\frac{8\widehat{B}}{p}\right)\mathbb{E}\left[\left\|\nabla f\left(x^k\right)\right\|^2\right].
\end{align*}
Using Assumption~\ref{assm:PL} we obtain
\begin{align*}
\mathbb{E}\left[\Phi^{k+1}\right] &\leq\mathbb{E}\left[f\left(x^k\right)-f^*\right]+\left(1-\frac{p}{8}\right) \frac{4\gamma}{p} \mathbb{E}\left[\left\|g^{k}-\nabla f\left(x^{k}\right)\right\|^2\right]+\frac{4 \widehat{D}\zeta^2\gamma}{p}\\
         &+\frac{1}{2\gamma}\left(1-L\gamma-2A\gamma^2\right)\mathbb{E}\left[ \|x^{k+1} - x^k\|^2 \right]\\ &- \gamma\mu\left(1-\frac{8\widehat{B}}{p}\right)\mathbb{E}\left[f\left(x^k\right)-f^*\right].
\end{align*}
Finally, we have 
\begin{align*}
\mathbb{E}\left[\Phi^{k+1}\right] &\leq\left(1 - \min\left[\gamma\mu\left(1-\frac{\widehat{B}}{p}\right), \frac{p}{8}\right]\right)\mathbb{E}\left[\Phi^k\right]+\frac{4 \widehat{D}\zeta^2\gamma}{p}.
\end{align*}
Unrolling the recurrence with $\rho = \min\left[\gamma\mu\left(1-\frac{8\widehat{B}}{p}\right), \frac{p}{8}\right] $, we obtain
$$
\begin{aligned}
\mathbb{E}\left[\Phi^k\right] & \leq\left(1-\rho\right)^K \mathbb{E}\left[\Phi^0\right]+\frac{4 \widehat{D} \zeta^2 \gamma}{p} \sum_{k=0}^{K-1}\left(1-\rho\right)^k \\
& \leq\left(1-\rho\right)^K \mathbb{E}\left[\Phi^0\right]+\frac{4 \widehat{D} \zeta^2 \gamma}{p} \sum_{k=0}^{\infty}\left(1-\rho\right)^k \\
& =\left(1-\rho\right)^K \mathbb{E}\left[\Phi^0\right]+\frac{4 \widehat{D} \gamma\zeta^2}{p\rho}
\end{aligned}
$$
Taking into account $\Phi^k \geq f\left(x^k\right)-f\left(x^*\right)$, we get the result.
\end{proof}

\clearpage
\section{Analysis for Bounded Compressors}

\subsection{Technical Lemmas}

\begin{lemma}
\label{lemma:premainA1_Q}
Let Assumptions~\ref{assm:L-smoothness}, \ref{assm:global}, \ref{assm:local} and \ref{assm:bounded-compressor} hold and the Compression Operator satisfy Definition~\ref{def:Q}. We set $\lambda_{k+1} = D_Q \max_{i,j} L_{i,j}$.  Let us define "ideal" estimator:
\begin{equation*}
    \overline{g}^{k+1}= \begin{cases}
        \frac{1}{G^k_{\widehat{C}}}\sum \limits_{i \in \mathcal{G}^k_{\widehat{C}}} \nabla f_i(x^{k+1}),& c_n=1,\quad\quad\quad\quad\quad\quad\quad\quad\quad~\hspace{0.01cm} [1]\\
        g^k+\nabla f\left(x^{k+1}\right)-\nabla f\left(x^k\right),& c_n=0 \text{ and } G^k_C < (1-\delta)C, \hspace{0.325cm}[2]\\
        g^k+\frac{1}{G^k_C} \sum \limits_{i \in \mathcal{G}^k_C}\clip_{\lambda}\left(\mathcal{Q}\left(\widehat{\Delta}_i\left(x^{k+1}, x^k\right)\right)\right),& c_n=0 \text{ and } G^k_C \geq (1-\delta)C. \hspace{0.375cm}   [3]
    \end{cases}
\end{equation*}
Then for all $k\geq 0$ the iterates produced by \algname{Byz-VR-MARINA-PP} (Algorithm~\ref{alg:byz_vr_marina}) satisfy
\begin{align*}
   A_1  &= \mathbb{E}\left[\left\|\overline{g}^{k+1}-\nabla f\left(x^{k+1}\right)\right\|^2\right]\\
   &\leq (1-p) \mathbb{E}\left[\left\|g^{k}-\nabla f(x^{k})\right\|^2\right] + p\frac{ \delta\mathcal{P }_{\mathcal{G}^k_{\widehat{C}}} }{(1-\delta)}  \mathbb{E}\left[ B\|\nabla f(x)\|^2+\zeta^2 \right]\\
    &+ \text{\normalsize $(1-p)p_G\frac{\mathcal{P}_{\mathcal{G}^k_C} G}{C^2(1-\delta)^2} \left(  \omega L^2+(\omega+1) L_{ \pm}^2+\frac{(\omega+1) \mathcal{L}_{ \pm}^2}{b} \right)\mathbb{E}\left[\|x^{k+1} - x^k\|^2\right]$}.
    \end{align*}
where $p_G = \operatorname{Prob}\left\lbrace G^k_C \geq (1-\delta)C \right\rbrace $ and $\mathcal{P}_{\mathcal{G}^k_C} =  \operatorname{Prob}\left\lbrace i \in \mathcal{G}^k_C \mid G^k_C \geq\left(1-\delta\right) C\right\rbrace$.
\end{lemma}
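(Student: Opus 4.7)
The plan is to mirror the three-case decomposition from Lemma~\ref{lemma:premainA1} but exploit Assumption~\ref{assm:bounded-compressor} together with the enlarged clipping threshold $\lambda_{k+1} = D_Q \max_{i,j} L_{i,j}\|x^{k+1}-x^k\|$ to deactivate clipping on every regular worker. Preserving unbiasedness of the compressed stochastic gradient differences is what removes both the factor $10$ inherited from Lemma~\ref{lemma:clipping} and the $(1+p/4)$ Young-type inflation that produced the $1/p$ blow-up in Lemma~\ref{lemma:premainA1}; it is also what lets the coefficient in front of $\|g^k-\nabla f(x^k)\|^2$ collapse from $(1-p)(1+p/4)$ down to the bare $(1-p)$ shown in the statement.

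First I would condition on the three disjoint events $[1]$, $[2]$, $[3]$ defining $\overline{g}^{k+1}$, occurring with probabilities $p$, $(1-p)(1-p_G)$ and $(1-p)p_G$ respectively. For $[1]$ I would simply reuse the derivation leading to inequality~\eqref{eq:nsjknbvsbicusd} from Lemma~\ref{lemma:premainA1} to obtain the term $p \cdot \tfrac{\delta \mathcal{P}_{\mathcal{G}^k_{\widehat{C}}}}{1-\delta}(B\|\nabla f(x)\|^2 + \zeta^2)$. Case $[2]$ is immediate: $\overline{g}^{k+1} - \nabla f(x^{k+1}) = g^k - \nabla f(x^k)$, contributing $(1-p)(1-p_G)\|g^k - \nabla f(x^k)\|^2$.

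The substantive step is case $[3]$. For every $i\in\mathcal{G}$, Assumption~\ref{assm:local_smooth_all} yields $\|\widehat{\Delta}_i(x^{k+1},x^k)\| \leq \max_{i,j}L_{i,j}\|x^{k+1}-x^k\|$, and Assumption~\ref{assm:bounded-compressor} then gives $\|\mathcal{Q}(\widehat{\Delta}_i)\| \leq D_Q\,\max_{i,j}L_{i,j}\|x^{k+1}-x^k\| = \lambda_{k+1}$, so $\clip_{\lambda_{k+1}}(\mathcal{Q}(\widehat{\Delta}_i)) = \mathcal{Q}(\widehat{\Delta}_i)$ on each good client sampled in $S_k$. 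With clipping inactive, I would split
\[
\overline{g}^{k+1} - \nabla f(x^{k+1}) = \bigl[g^k - \nabla f(x^k)\bigr] + \Bigl[\tfrac{1}{G^k_C}\sum_{i\in \mathcal{G}^k_C}\mathcal{Q}(\widehat{\Delta}_i) - \Delta(x^{k+1},x^k)\Bigr]
\]
and use the uniform-sampling symmetry $\mathbb{P}(i\in\mathcal{G}^k_C\mid[3]) = \mathcal{P}_{\mathcal{G}^k_C}$ (equivalently $G\mathcal{P}_{\mathcal{G}^k_C} = \mathbb{E}[G^k_C\mid[3]] \geq (1-\delta)C$) to show that the conditional mean of the second bracket equals zero. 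Expanding the squared norm, the cross term vanishes and we are left with $\|g^k-\nabla f(x^k)\|^2$ (which merges with case $[2]$ to give the clean $(1-p)$ coefficient) plus the expected squared norm of the second bracket.

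That remaining bracket I would handle by a conditional variance decomposition given $S_k$: by independence of $(\mathcal{Q}(\widehat{\Delta}_i))_{i\in\mathcal{G}^k_C}$ the conditional variance reduces to $(G^k_C)^{-2}\sum_{i\in\mathcal{G}^k_C}\mathrm{Var}(\mathcal{Q}(\widehat{\Delta}_i))$; using $(G^k_C)^2 \geq ((1-\delta)C)^2$, $\mathbb{E}\|\mathcal{Q}(\widehat{\Delta}_i) - \Delta_i\|^2 \leq \omega\,\mathbb{E}\|\widehat{\Delta}_i\|^2 + \mathbb{E}\|\widehat{\Delta}_i - \Delta_i\|^2$, and Assumptions~\ref{assm:L-smoothness}, \ref{assm:global}, \ref{assm:local} yields a contribution of $\tfrac{\mathcal{P}_{\mathcal{G}^k_C}G}{(1-\delta)^2 C^2}\bigl(\omega L^2 + \omega L_\pm^2 + (\omega+1)\mathcal{L}_\pm^2/b\bigr)\|x^{k+1}-x^k\|^2$. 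The residual conditional-mean term $\bigl\|\tfrac{1}{G^k_C}\sum_{i\in\mathcal{G}^k_C}\Delta_i - \Delta\bigr\|^2$ I would bound by $\tfrac{L_\pm^2}{(1-\delta)C}\|x^{k+1}-x^k\|^2$ via Jensen and Assumption~\ref{assm:global}, then absorb it into the same $\tfrac{\mathcal{P}_{\mathcal{G}^k_C}G}{(1-\delta)^2 C^2}$ prefactor using $(1-\delta)C \leq G\mathcal{P}_{\mathcal{G}^k_C}$, which promotes the compression-variance $\omega L_\pm^2$ up to the advertised $(\omega+1)L_\pm^2$. The main obstacle is keeping the three layers of randomness (client sampling $S_k$, minibatch sampling inside $\widehat{\Delta}_i$, compression $\mathcal{Q}$) straight under tower expectation so that the cross term genuinely vanishes, and verifying that the looser sampling-bias bound can indeed be absorbed into the tighter $G\mathcal{P}_{\mathcal{G}^k_C}/((1-\delta)C)^2$ prefactor displayed in the statement.
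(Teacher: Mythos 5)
Your plan follows the paper's proof almost step for step: the same three-event conditioning, the same observation that Assumption~\ref{assm:bounded-compressor} together with $L_{i,j}$-smoothness makes $\|\cQ(\widehat{\Delta}_i)\|\le D_Q\max_{i,j}L_{i,j}\|x^{k+1}-x^k\|=\lambda_{k+1}$ so the clip is the identity on every regular sampled client, the same replacement of Young's inequality by a variance decomposition (giving the bare $(1-p)$ coefficient), and the same accounting that yields $\omega L^2+(\omega+1)L_\pm^2+(\omega+1)\nicefrac{\cL_\pm^2}{b}$ with the $\nicefrac{\cP_{\cG_C^k}G}{((1-\delta)C)^2}$ prefactor. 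You are in fact slightly more explicit than the paper in separating the independent zero-mean fluctuations $\cQ(\widehat{\Delta}_i)-\Delta_i$ (which legitimately get the $(G_C^k)^{-2}$ scaling) from the client-sampling bias $\frac{1}{G_C^k}\sum_{i\in\cG_C^k}\Delta_i-\Delta$, which the paper folds into a single per-client bound.

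The one step that does not go through as written is your bound on that bias term ``by $\tfrac{L_\pm^2}{(1-\delta)C}\|x^{k+1}-x^k\|^2$ via Jensen.'' Plain Jensen gives
\begin{equation*}
\EE\Bigl[\bigl\|\tfrac{1}{G_C^k}\textstyle\sum_{i\in\cG_C^k}(\Delta_i-\Delta)\bigr\|^2\Bigr]\le \EE\Bigl[\tfrac{1}{G_C^k}\textstyle\sum_{i\in\cG_C^k}\|\Delta_i-\Delta\|^2\Bigr]\le \tfrac{\cP_{\cG_C^k}G}{(1-\delta)C}\,L_\pm^2\|x^{k+1}-x^k\|^2,
\end{equation*}
which is of order $L_\pm^2/(1-\delta)$ with no decay in $C$; absorbing it into $\nicefrac{\cP_{\cG_C^k}G}{((1-\delta)C)^2}$ would then require $(1-\delta)C\le 1$, which is false in general. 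To get the extra factor $\nicefrac{1}{C}$ you must use the without-replacement sample-mean variance computation: conditionally on $G_C^k=t$, symmetry gives $\sum_{i\ne l}\EE\langle\Delta_i-\Delta,\Delta_l-\Delta\rangle\propto\|\sum_{i\in\cG}(\Delta_i-\Delta)\|^2-\sum_{i\in\cG}\|\Delta_i-\Delta\|^2\le 0$, so the cross terms are non-positive and only the diagonal $\tfrac{1}{t^2}\sum_{i}\|\Delta_i-\Delta\|^2\le\tfrac{\cP_{\cG_C^k}G}{((1-\delta)C)^2}L_\pm^2\|x^{k+1}-x^k\|^2$ survives. Once you replace ``Jensen'' with this cancellation argument, your absorption step and the rest of the proof are fine; this is also precisely the point that the paper's own derivation of the $\nicefrac{1}{C^2(1-\delta)^2}$ prefactor passes over without comment.
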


\begin{proof}
Similarly to general analysis, we start from conditional expectations:
\begin{align}
    A_1 &= \mathbb{E}\left[\left\|\overline{g}^{k+1}-\nabla f\left(x^{k+1}\right)\right\|^2\right] \notag\\
&=\mathbb{E}\left[\mathbb{E}_k\left[\left\|\overline{g}^{k+1}-\nabla f\left(x^{k+1}\right)\right\|^2\right]\right] \notag\\
    & = \left(1-p\right)p_{G}\mathbb{E}\left[\mathbb{E}_k\left[\left\|g^k+\frac{1}{G^k_C} \sum \limits_{i \in \mathcal{G}^k_C}\clip_{\lambda}\left(\mathcal{Q}\left(\widehat{\Delta}_i\left(x^{k+1}, x^k\right)\right)\right) -\nabla f\left(x^{k+1}\right)\right\|^2\right]\mid [3]\right] \notag\\
    & + (1-p)(1-p_{G})\mathbb{E}\left[\mathbb{E}_k\left[\left\|g^k -\nabla f(x^{k})\right\|^2\right]\mid [2]\right] \notag\\
    &+ p\mathbb{E}\left[ \left\|\frac{1}{G^k_{\widehat{C}}}\sum \limits_{i \in \cG_{\widehat{C}}^k} \nabla f_i(x^{k+1}) - \nabla f(x^{k+1})\right\|^2 \right]. \label{eq:ncsjindisbciusdbhc}
\end{align}
Using (\ref{eq:yung-1}) and $\nabla f\left(x^{k}\right) - \nabla f\left(x^{k}\right) = 0$ we obtain
\begin{align*}
B_1 &= \mathbb{E}\left[\mathbb{E}_k\left[\left\|g^k+\frac{1}{G^k_C} \sum \limits_{i \in \mathcal{G}^k_C}\clip_{\lambda}\left(\mathcal{Q}\left(\widehat{\Delta}_i\left(x^{k+1}, x^k\right)\right)\right) -\nabla f\left(x^{k+1}\right)\right\|^2\right]\mid [3]\right]\\
&\text{\scriptsize$ =\mathbb{E}\left[\mathbb{E}_k\left[\left\|g^k+\frac{1}{G^k_C} \sum \limits_{i \in \mathcal{G}^k_C}\clip_{\lambda}\left(\mathcal{Q}\left(\widehat{\Delta}_i\left(x^{k+1}, x^k\right)\right)\right) -\nabla f\left(x^{k+1}\right)+\nabla f\left(x^{k}\right) - \nabla f\left(x^{k}\right)\right\|^2\right]\mid [3]\right]$}
\end{align*}
Using $\lambda_{k+1} = D_Q\max_{i,j} L_{i,j} \|x^{k+1} - x^k\|$ we can guarantee that clipping operator becomes identical since we have 
\begin{align*}
   \left \|\mathcal{Q}\left(\widehat{\Delta}_i\left(x^{k+1}, x^k\right)\right)\right\| &\leq D_Q\left\|\widehat{\Delta}_i\left(x^{k+1}, x^k\right)\right\|\\
   &\leq D_Q \left\|\frac{1}{b} \sum_{j\in m} \nabla f_{i,j}(x^{k+1})-\nabla f_{i,j}(x^k) \right\|\\
    &\leq D_Q \frac{1}{b}\sum_{j\in m}\left\|  \nabla f_{i,j}(x^{k+1})-\nabla f_{i,j}(x^k) \right\|\\
    &\leq D_Q \max_j L_{i,j}\left\|  x^{k+1}-x^k \right\|\\
    &\leq D_Q \max_{i,j} L_{i,j}\left\|  x^{k+1}-x^k \right\|.
\end{align*}
Therefore, we can continue as follows
\begin{align*}
B_1 &= \mathbb{E}\left[\mathbb{E}_k\left[\left\|g^k+\frac{1}{G^k_C} \sum \limits_{i \in \mathcal{G}^k_C}\mathcal{Q}\left(\widehat{\Delta}_i\left(x^{k+1}, x^k\right)\right) -\nabla f\left(x^{k+1}\right)\right\|^2\right]\mid [3]\right]\\
&\text{\scriptsize$ =\mathbb{E}\left[\mathbb{E}_k\left[\left\|g^k+\frac{1}{G^k_C} \sum \limits_{i \in \mathcal{G}^k_C}\mathcal{Q}\left(\widehat{\Delta}_i\left(x^{k+1}, x^k\right)\right)-\nabla f\left(x^{k+1}\right)+\nabla f\left(x^{k}\right) - \nabla f\left(x^{k}\right)\right\|^2\right]\mid [3]\right]$}.
\end{align*}
Moreover, we can avoid application of Young's inequality and use variance decomposition instead:
 \begin{align}  
     B_1 &\leq \mathbb{E}\left[\left\|g^{k} -\nabla f\left(x^{k}\right)\right\|^2\right] \notag \\
    &+ \text{\scriptsize$ \mathbb{E}\left[\mathbb{E}_k\left[\left\|\frac{1}{G^k_C} \sum \limits_{i \in \mathcal{G}^k_C}\mathcal{Q}\left(\widehat{\Delta}_i\left(x^{k+1}, x^k\right)\right)  - \left( \nabla f(x^{k+1}) - \nabla f(x^{k}) \right)\right\|^2\right]\mid [3]\right]$} \notag\\
    &\leq \mathbb{E}\left[\left\|g^{k}-\nabla f(x^{k})\right\|^2\right] \notag \\
    &+ \mathbb{E}\left[ \mathbb{E}_k\left[\left\|\frac{1}{G^k_C} \sum \limits_{i \in \mathcal{G}^k_C}\mathcal{Q}\left(\widehat{\Delta}_i\left(x^{k+1}, x^k\right)\right) - \Delta\left(x^{k+1}, x^k\right)\right\|^2\right]\mid [3]\right]. \label{eq:djbfjnskcnksbfjvbsd}
\end{align}
Let us consider the last part of the inequality. Note that $G^k_C \geq (1-\delta)C$ in this case and
\begin{align}\label{eq:indicator_1}
\notag  B^\prime_1 &= \mathbb{E}\left[ \mathbb{E}_k\left[\left\|\frac{1}{G^k_C} \sum \limits_{i \in \mathcal{G}^k_C}\mathcal{Q}\left(\widehat{\Delta}_i\left(x^{k+1}, x^k\right)\right)  - \Delta\left(x^{k+1}, x^k\right)\right\|^2\right]\mid [3] \right]\\
 \notag &= \mathbb{E}\left[\mathbb{E}_{S_k}\left[\mathbb{E}_k\left[\left\|\frac{1}{G^k_C} \sum \limits_{i \in \mathcal{G}^k_C}\mathcal{Q}\left(\widehat{\Delta}_i\left(x^{k+1}, x^k\right)\right) - \Delta\left(x^{k+1}, x^k\right)\right\|^2\right]\mid [3]\right]\right]\\
\notag   &\leq \text{\small $\frac{1}{C^2(1-\delta)^2}\mathbb{E}\left[\mathbb{E}_{S_k}\left[ \sum \limits_{i \in \mathcal{G}^k_C}\mathbb{E}_k\left[\left\|\mathcal{Q}\left(\widehat{\Delta}_i\left(x^{k+1}, x^k\right)\right) - \Delta\left(x^{k+1}, x^k\right)\right\|^2\right]\mid [3]\right]\right]$}\\
 \notag   &\leq \text{\small $\frac{1}{C^2(1-\delta)^2}\mathbb{E}\left[\sum \limits_{i \in \mathcal{G}}\mathbb{E}_{S_k}\left[\mathcal{I}_{\mathcal{G}^k_C}\right]\mathbb{E}_k\left[\left\|\mathcal{Q}\left(\widehat{\Delta}_i\left(x^{k+1}, x^k\right)\right)  - \Delta\left(x^{k+1}, x^k\right)\right\|^2\right]\mid [3] \right]$}\\
        &\text{\small$= \frac{1}{C^2(1-\delta)^2}\mathbb{E}\left[\sum \limits_{i \in \mathcal{G}}\mathcal{P}_{\mathcal{G}^k_C}\cdot\mathbb{E}_k\left[\left\|\mathcal{Q}\left(\widehat{\Delta}_i\left(x^{k+1}, x^k\right)\right) - \Delta\left(x^{k+1}, x^k\right)\right\|^2\right]\mid [3]\right]$},
\end{align}
where $\mathcal{I}_{\mathcal{G}^k_C}$ is an indicator function for the event $\left\lbrace i \in \mathcal{G}^k_C \mid G_C^k \geq\left(1-\delta\right) C\right\rbrace$ and $\mathcal{P}_{\mathcal{G}^k_C} =  \operatorname{Prob}\left\lbrace i \in \mathcal{G}^k_C \mid G_C^k \geq\left(1-\delta\right) C\right\rbrace$ is probability of such event. Note that 
$\mathbb{E}_{S_k}\left[\mathcal{I}_{\mathcal{G}^k_C}\right] = \mathcal{P}_{\mathcal{G}^k_C}$. In the case of uniform sampling of clients, we have 
\begin{align*}
    \forall i \in \mathcal{G} \quad \mathcal{P}_{\mathcal{G}^k_C} &=\operatorname{Prob}\left\lbrace i \in \mathcal{G}^k_C \mid G^k_C\geq\left(1-\delta\right) C\right\rbrace\\
    & = \frac{C}{n} \frac{1}{p_G}\cdot \sum_{(1-\delta)C\leq t \leq C} \left(\left(\begin{array}{l}
G-1 \\
t-1
\end{array}\right)\left(\begin{array}{l}
n-G \\
C-t
\end{array}\right) \left(\left(\begin{array}{l}
n-1 \\
C-1
\end{array}\right)\right)^{-1} \right). 
\end{align*}
Now, we can continue with inequalities:
\begin{align*}
B_1^\prime &\leq  \text{\small $       \frac{\mathcal{P}_{\mathcal{G}^k_C}}{C^2(1-\delta)^2} \mathbb{E}\left[ \sum \limits_{i \in \mathcal{G}}\mathbb{E}_k\left[\left\|\mathcal{Q}\left(\widehat{\Delta}_i\left(x^{k+1}, x^k\right)\right)  - \Delta\left(x^{k+1}, x^k\right)\right\|^2\right]\mid [3]\right]$}\\
 &\leq     \text{\small $      \frac{\mathcal{P}_{\mathcal{G}^k_C}}{C^2(1-\delta)^2}\mathbb{E}\left[\sum \limits_{i \in \mathcal{G}}\mathbb{E}_k\left[\mathbb{E}_{Q}\left[\left\|\mathcal{Q}\left(\widehat{\Delta}_i\left(x^{k+1}, x^k\right)\right)  - \Delta\left(x^{k+1}, x^k\right)\right\|^2\right] \right]\mid [3]  \right]$}\\
 &\leq     \text{\small $      \frac{\mathcal{P}_{\mathcal{G}^k_C}}{C^2(1-\delta)^2}\mathbb{E}\left[\sum \limits_{i \in \mathcal{G}}\mathbb{E}_k\left[\mathbb{E}_{Q}\left[\left\|\mathcal{Q}\left(\widehat{\Delta}_i\left(x^{k+1}, x^k\right)\right)  - \Delta_i\left(x^{k+1}, x^k\right)\right\|^2\right]\right]\mid [3]\right]$}\\ 
& +   \frac{\mathcal{P}_{\mathcal{G}^k_C}}{C^2(1-\delta)^2}\mathbb{E}\left[\sum \limits_{i \in \mathcal{G}}\mathbb{E}_k\left[\left\|  \Delta_i\left(x^{k+1}, x^k\right) - \Delta\left(x^{k+1}, x^k\right)\right\|^2\right]\mid [3]\right].
\end{align*}
Using variance decomposition, we have 
\begin{align*}
B_1^\prime &\leq \text{\small $\frac{\mathcal{P}_{\mathcal{G}^k_C}}{C^2(1-\delta)^2} \mathbb{E}\left[ \sum \limits_{i \in \mathcal{G}}\mathbb{E}_k\left[\mathbb{E}_{Q}\left[\left\|\mathcal{Q}\left(\widehat{\Delta}_i\left(x^{k+1}, x^k\right)\right)\right\|^2\right]\right] - \sum \limits_{i \in \mathcal{G}}\left\|\Delta_i\left(x^{k+1}, x^k\right)\right\|^2\mid [3]\right]$}\\
&+\frac{\mathcal{P}_{\mathcal{G}^k_C}}{C^2(1-\delta)^2}\mathbb{E}\left[\sum \limits_{i \in \mathcal{G}}\mathbb{E}_k\left[\left\|  \Delta_i\left(x^{k+1}, x^k\right) -  \Delta\left(x^{k+1}, x^k\right)\right\|^2\right]\mid [3]\right].
\end{align*}

Applying the definition of unbiased compressor, we get
\begin{align*}
B_1^\prime&\leq  \frac{\mathcal{P}_{\mathcal{G}^k_C}}{C^2(1-\delta)^2} \mathbb{E}\left[ \sum \limits_{i \in \mathcal{G}}(1+\omega)\mathbb{E}_k\left\|\widehat{\Delta}_i\left(x^{k+1}, x^k\right)\right\|^2 - \sum \limits_{i \in \mathcal{G}}\left\|\Delta_i\left(x^{k+1}, x^k\right)\right\|^2\mid [3]\right]\\
&+\frac{\mathcal{P}_{\mathcal{G}^k_C}}{C^2(1-\delta)^2}\mathbb{E}\left[\sum \limits_{i \in \mathcal{G}}\left\|  \Delta_i\left(x^{k+1}, x^k\right) -  \Delta\left(x^{k+1}, x^k\right)\right\|^2\mid [3]\right]\\
&\leq  \frac{\mathcal{P}_{\mathcal{G}^k_C}}{C^2(1-\delta)^2} \mathbb{E} \left[ \sum \limits_{i \in \mathcal{G}}(1+\omega)\mathbb{E}_k\left\|\widehat{\Delta}_i\left(x^{k+1}, x^k\right) - \Delta_i\left(x^{k+1}, x^k\right) \right\|^2 \right]\\
&+ \frac{\mathcal{P}_{\mathcal{G}^k_C}}{C^2(1-\delta)^2} \mathbb{E} \left[ \sum \limits_{i \in \mathcal{G}}(1+\omega)\mathbb{E}_k\left\|\Delta_i\left(x^{k+1}, x^k\right)\right\|^2- \sum \limits_{i \in \mathcal{G}}\mathbb{E}_k\left\|\Delta_i\left(x^{k+1}, x^k\right)\right\|^2\mid [3]\right]\\
&+\frac{\mathcal{P}_{\mathcal{G}^k_C}}{C^2(1-\delta)^2}\mathbb{E}\left[\sum \limits_{i \in \mathcal{G}}\left\|  \Delta_i\left(x^{k+1}, x^k\right) -  \Delta\left(x^{k+1}, x^k\right)\right\|^2\mid [3]\right].
\end{align*}
Next, we rearrange terms and derive 
\begin{align*}
    B_1^\prime &\leq  \frac{\mathcal{P}_{\mathcal{G}^k_C}}{C^2(1-\delta)^2} (1+\omega) \mathbb{E} \left[ \sum \limits_{i \in \mathcal{G}}\mathbb{E}_k\left[\left\|\widehat{\Delta}_i\left(x^{k+1}, x^k\right) - \Delta_i\left(x^{k+1}, x^k\right) \right\|^2\right] \mid [3] \right]\\
&+ \frac{\mathcal{P}_{\mathcal{G}^k_C}}{C^2(1-\delta)^2}\omega \mathbb{E} \left[ \sum \limits_{i \in \mathcal{G}}\left\|\Delta_i\left(x^{k+1}, x^k\right)\right\|^2\mid [3]\right]\\
&+\frac{\mathcal{P}_{\mathcal{G}^k_C}}{C^2(1-\delta)^2}\mathbb{E} \left[ \sum \limits_{i \in \mathcal{G}}\left\|  \Delta_i\left(x^{k+1}, x^k\right) -  \Delta\left(x^{k+1}, x^k\right)\right\|^2\mid [3]\right]\\
 &= \frac{\mathcal{P}_{\mathcal{G}^k_C}}{C^2(1-\delta)^2} (1+\omega) \mathbb{E} \left[ \sum \limits_{i \in \mathcal{G}}\mathbb{E}_k\left[\left\|\widehat{\Delta}_i\left(x^{k+1}, x^k\right) - \Delta_i\left(x^{k+1}, x^k\right) \right\|^2\right] \mid [3] \right] \\
&+ \frac{\mathcal{P}_{\mathcal{G}^k_C}}{C^2(1-\delta)^2}\omega \mathbb{E} \left[  \sum \limits_{i \in \mathcal{G}}\left\|\Delta_i\left(x^{k+1}, x^k\right) - \Delta\left(x^{k+1}, x^k\right)\right\|^2 + \|\Delta\left(x^{k+1}, x^k\right)\|^2\mid [3]\right]\\
&+\frac{\mathcal{P}_{\mathcal{G}^k_C}}{C^2(1-\delta)^2} \mathbb{E} \left[ \sum \limits_{i \in \mathcal{G}}\left\|  \Delta_i\left(x^{k+1}, x^k\right) -  \Delta\left(x^{k+1}, x^k\right)\right\|^2 \mid [3] \right].
\end{align*}
Rearranging terms leads to 
\begin{align*}
   B_1^\prime  &\leq \frac{\mathcal{P}_{\mathcal{G}^k_C}}{C^2(1-\delta)^2} (1+\omega) \mathbb{E} \left[  \sum \limits_{i \in \mathcal{G}}\mathbb{E}_k\left[\left\|\widehat{\Delta}_i\left(x^{k+1}, x^k\right) - \Delta_i\left(x^{k+1}, x^k\right) \right\|^2\right] \mid [3] \right] \\
&+ \frac{\mathcal{P}_{\mathcal{G}^k_C}}{C^2(1-\delta)^2}(\omega+1) \mathbb{E}\left[ \sum \limits_{i \in \mathcal{G}}\left\|\Delta_i\left(x^{k+1}, x^k\right) - \Delta\left(x^{k+1}, x^k\right)\right\|^2\mid[3]\right] \\
&+\frac{\mathcal{P}_{\mathcal{G}^k_C}}{C^2(1-\delta)^2} \omega \mathbb{E}\left[\sum \limits_{i \in \mathcal{G}}\left\|  \Delta\left(x^{k+1}, x^k\right)\right\|^2 \mid [3]\right].
\end{align*}
Now we apply Assumptions \ref{assm:L-smoothness}, \ref{assm:global}, \ref{assm:local}:
\begin{align*}
   B_1^\prime  &\leq \frac{\mathcal{P}_{\mathcal{G}^k_C}}{C^2(1-\delta)^2} (1+\omega) \mathbb{E} \left[ G \frac{\mathcal{L}_{ \pm}^2}{b} \|x^{k+1} - x^k\|^2\right]\\
&+ \frac{\mathcal{P}_{\mathcal{G}^k_C}}{C^2(1-\delta)^2}(\omega+1) \mathbb{E} \left[ G L_{ \pm}^2 \|x^{k+1} - x^k\|^2\right]
+\frac{\mathcal{P}_{\mathcal{G}^k_C}}{C^2(1-\delta)^2} \omega \mathbb{E} \left[ G L^2\left\|  x^{k+1} - x^k\right\|^2\right].
\end{align*}
Finally, we have
\begin{align*}
B_1^\prime&\leq \frac{\mathcal{P}_{\mathcal{G}^k_C}\cdot G}{C^2(1-\delta)^2} \left(  \omega L^2+(\omega+1) L_{ \pm}^2+\frac{(\omega+1) \mathcal{L}_{ \pm}^2}{b} \right)\mathbb{E}\left[ \|x^{k+1} - x^k\|^2\right].
\end{align*}
Let us plug the obtained results in \eqref{eq:djbfjnskcnksbfjvbsd}:
\begin{align*}
  B_1      &\leq \mathbb{E}\left[\left\|g^{k}-\nabla f(x^{k})\right\|^2\right]\\
    &+ \frac{\mathcal{P}_{\mathcal{G}^k_C}\cdot G}{C^2(1-\delta)^2} \left(  \omega L^2+(\omega+1) L_{ \pm}^2+\frac{(\omega+1) \mathcal{L}_{ \pm}^2}{b} \right)\mathbb{E}\left[\|x^{k+1} - x^k\|^2\right].
\end{align*}
Also, we have 
\begin{align*}
    A_1 &= \mathbb{E}\left[\left\|\overline{g}^{k+1}-\nabla f(x^{k+1})\right\|^2\right]\\
    &\overset{\eqref{eq:ncsjindisbciusdbhc}, \eqref{eq:nsjknbvsbicusd}}{\leq} (1-p)p_G B_1 + (1-p)(1-p_{G})\mathbb{E}\left[\left\|g^k -\nabla f(x^{k})\right\|^2\right] \\
    &+ p\frac{ \delta\cdot\mathcal{P }_{\mathcal{G}^k_{\widehat{C}}} }{(1-\delta)}  \mathbb{E}\left[ B\|\nabla f(x)\|^2+\zeta^2 \right]\\
    &\leq (1-p)p_G \mathbb{E}\left[\left\|g^{k}-\nabla f(x^{k})\right\|^2\right] + p\frac{ \delta\cdot\mathcal{P }_{\mathcal{G}^k_{\widehat{C}}} }{(1-\delta)}  \mathbb{E}\left[ B\|\nabla f(x)\|^2+\zeta^2 \right]\\
    &+ \text{\small $(1-p)p_G\frac{\mathcal{P}_{\mathcal{G}^k_C}\cdot G}{C^2(1-\delta)^2} \left(  \omega L^2+(\omega+1) L_{ \pm}^2+\frac{(\omega+1) \mathcal{L}_{ \pm}^2}{b} \right)\mathbb{E}\left[\|x^{k+1} - x^k\|^2\right]$}\\
    &+(1-p)(1-p_{G})\mathbb{E}\left[\left\|g^k -\nabla f(x^{k})\right\|^2\right].
\end{align*}
Rearranging the terms, we get
\begin{align*}
   A_1  &\leq (1-p) \mathbb{E}\left[\left\|g^{k}-\nabla f(x^{k})\right\|^2\right] + \frac{ \delta\mathcal{P }_{\mathcal{G}^k_{\widehat{C}}} }{(1-\delta)}  \mathbb{E}\left[ B\|\nabla f(x)\|^2+\zeta^2 \right]\\
    &+ \text{\normalsize $(1-p)p_G\frac{\mathcal{P}_{\mathcal{G}^k_C}G}{C^2(1-\delta)^2} \left(  \omega L^2+(\omega+1) L_{ \pm}^2+\frac{(\omega+1) \mathcal{L}_{ \pm}^2}{b} \right)\mathbb{E}\left[\|x^{k+1} - x^k\|^2\right]$}.
    \end{align*}

\end{proof}

\begin{lemma}
\label{lemma:good_aggr_new}
Let Assumptions~\ref{assm:L-smoothness}, \ref{assm:global}, \ref{assm:local}, \ref{assm:bounded-compressor} hold and the compression operator satisfy Definition~\ref{def:Q}. Also, let us introduce the notation
$$\texttt{ARAgg}_Q^{k+1} = \texttt{ARAgg}\left(\clip_{\lambda_{k+1}}\left(\cQ\left(\widehat{\Delta}_1(x^{k+1}, x^k)\right)\right),\ldots, \clip_{\lambda_{k+1}}\left(\cQ\left(\widehat{\Delta}_C(x^{k+1}, x^k)\right)\right)\right).$$ 
Then for all $k\geq 0$ the iterates produced by \algname{Byz-VR-MARINA-PP} (Algorithm~\ref{alg:byz_vr_marina}) satisfy

\begin{align*}
       T_2 &=   \mathbb{E}\left[ \mathbb{E}_k\left[\left\| \frac{1}{G^k_C} \sum \limits_{i \in \mathcal{G}^k_C}\clip_{\lambda}\left(\mathcal{Q}\left(\widehat{\Delta}_i\left(x^{k+1}, x^k\right)\right)\right)  -   \texttt{ARAgg}_Q^{k+1}\right\|^2\mid [3]\right]\right]\\
       &\leq 4\frac{G\mathcal{P}_{\mathcal{G}^k_C}}{C(1-\delta)} c\delta \left( (1+\omega)\frac{\mathcal{L}_{ \pm}^2}{b} + (\omega+1) L_{ \pm}^2 + \omega  L^2 \right) \mathbb{E}\left[\|x^{k+1} - x^k\|^2\right],
\end{align*}
where $\mathcal{P}_{\mathcal{G}^k_C} =  \operatorname{Prob}\left\lbrace i \in \mathcal{G}^k_C \mid G^k_C \geq\left(1-\delta\right) C\right\rbrace$.
\end{lemma}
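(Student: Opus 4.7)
The plan is to mirror the proof of Lemma~\ref{lemma:good_aggr}, but to exploit Assumption~\ref{assm:bounded-compressor} and the choice $\lambda_{k+1}=D_Q\max_{i,j}L_{i,j}\|x^{k+1}-x^k\|$ (equivalently $D_Q\cL\|x^{k+1}-x^k\|$ in the simplified notation) to make the clipping operator act as the identity on every regular-worker input, thereby avoiding the factor-of-$10$ loss that Lemma~\ref{lemma:clipping} introduces. Concretely, for $i\in\cG_C^k$ and any realization of the mini-batch, the computation already shown in the proof of Lemma~\ref{lemma:premainA1_Q} gives $\|\cQ(\widehat\Delta_i(x^{k+1},x^k))\|\le D_Q\max_{i,j}L_{i,j}\|x^{k+1}-x^k\|=\lambda_{k+1}$, so $\clip_{\lambda_{k+1}}\circ\cQ=\cQ$ on good workers and the clipping can be dropped inside $T_2$.

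The first step is then to apply the $(\delta,c)$-\texttt{RAgg} inequality from Definition~\ref{def:aragg} conditionally on the event $[3]$, using that $|\cG_C^k|=G_C^k\ge(1-\delta)C$ and that the inputs indexed by $\cG_C^k$ play the role of the ``good'' set for the aggregator. This bounds $T_2$ by $c\delta$ times the average pairwise variance of $\{\cQ(\widehat\Delta_i(x^{k+1},x^k))\}_{i\in\cG_C^k}$. Next I expand each pairwise difference using the chain of Young's inequalities (with the identity-clipping simplification)
\begin{align*}
\|\cQ(\widehat\Delta_i)-\cQ(\widehat\Delta_l)\|^2
&\le 2\|\cQ(\widehat\Delta_i)-\Delta_i\|^2+2\|\cQ(\widehat\Delta_l)-\Delta_l\|^2\\
&\quad+4\|\Delta_i-\Delta\|^2+4\|\Delta_l-\Delta\|^2,
\end{align*}
then collapse the double sum $\sum_{i\ne l\in\cG_C^k}$ into a single sum $\sum_{i\in\cG_C^k}$ divided by $G_C^k$, exactly as in Lemma~\ref{lemma:good_aggr}.

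The third step is the bookkeeping lemma already used in the earlier proofs: introduce the indicator $\cI_{\cG_C^k}$ for $\{i\in\cG_C^k\mid G_C^k\ge(1-\delta)C\}$, take expectation over the client sampling, use $\EE_{S_k}[\cI_{\cG_C^k}]=\cP_{\cG_C^k}$ and the lower bound $G_C^k\ge(1-\delta)C$ to convert the remaining sum into $\frac{\cP_{\cG_C^k}}{(1-\delta)C}\sum_{i\in\cG}(\cdot)$. After this conversion I apply: the unbiased-compression identity $\EE[\|\cQ(\widehat\Delta_i)-\Delta_i\|^2]=\EE[\|\cQ(\widehat\Delta_i)-\widehat\Delta_i\|^2]+\EE[\|\widehat\Delta_i-\Delta_i\|^2]\le(1+\omega)\EE[\|\widehat\Delta_i-\Delta_i\|^2]+\omega\|\Delta_i\|^2$ coming from Definition~\ref{def:Q}; Assumption~\ref{assm:local} to bound $\frac{1}{G}\sum_{i\in\cG}\EE[\|\widehat\Delta_i-\Delta_i\|^2]$ by $\frac{\cL_\pm^2}{b}\|x^{k+1}-x^k\|^2$; Assumption~\ref{assm:global} together with a further variance decomposition $\|\Delta_i\|^2\le\|\Delta_i-\Delta\|^2+2\langle\Delta,\Delta_i\rangle-\|\Delta\|^2$ summed over $i$ to bound $\frac{1}{G}\sum_{i\in\cG}\|\Delta_i-\Delta\|^2\le L_\pm^2\|x^{k+1}-x^k\|^2$; and Assumption~\ref{assm:L-smoothness} for $\|\Delta\|^2\le L^2\|x^{k+1}-x^k\|^2$.

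Collecting these pieces, the $(1+\omega)$ factor multiplies the $\cL_\pm^2/b$ term, $(\omega+1)$ multiplies $L_\pm^2$ (combining the ``$2$'' and ``$\omega$'' contributions), and $\omega$ multiplies $L^2$, with an overall prefactor $4\tfrac{G\cP_{\cG_C^k}}{(1-\delta)C}c\delta$ after absorbing the constants from the Young steps into the $4$. The main obstacle I anticipate is purely bookkeeping: making sure the numerical constants from the Young decomposition line up so that the final prefactor is $4$ (not $8$ as in Lemma~\ref{lemma:good_aggr}) and that the ``$\omega$ vs.\ $\omega+1$'' coefficients match the statement; the removal of $\clip$ via Assumption~\ref{assm:bounded-compressor} and the consequent absence of Lemma~\ref{lemma:clipping} is what enables this sharpening.
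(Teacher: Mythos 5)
Your proposal follows the paper's proof essentially step for step: identity clipping on regular workers via Assumption~\ref{assm:bounded-compressor} and the choice of $\lambda_{k+1}$, the $(\delta,c)$-robust-aggregator bound on the pairwise variance, the indicator/$\cP_{\cG_C^k}$ bookkeeping to pass from $\cG_C^k$ to $\cG$, and the final application of Definition~\ref{def:Q} together with Assumptions~\ref{assm:L-smoothness}, \ref{assm:global}, \ref{assm:local}. The one place you deviate is the very first split of the pairwise difference, and it is exactly the spot you flagged: applying Young's inequality to
$\cQ(\widehat\Delta_i)-\cQ(\widehat\Delta_l)=\bigl(\cQ(\widehat\Delta_i)-\Delta_i+\Delta_l-\cQ(\widehat\Delta_l)\bigr)+\bigl(\Delta_i-\Delta_l\bigr)$
puts a coefficient $2$ (hence, after the second Young step, $4$) on $\|\Delta_i-\Delta_l\|^2$, and tracing this through yields $4(\omega+2)L_{\pm}^2$ rather than the stated $4(\omega+1)L_{\pm}^2$. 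The fix is that this first split is an exact variance decomposition, $\mathbb{E}_k\|X\|^2=\mathbb{E}_k\|X-\mathbb{E}_k X\|^2+\|\mathbb{E}_k X\|^2$, since $\mathbb{E}_k[\cQ(\widehat\Delta_i)]=\Delta_i$ and the compressions/minibatches are computed independently across workers; this keeps coefficient $1$ on $\|\Delta_i-\Delta_l\|^2$ and coefficient $2$ on each compression-error term, after which the double sum collapses to $\frac{4}{G_C^k}\sum_i(\cdot)$ for both groups of terms and the constants match the statement. With that substitution your argument is the paper's proof.
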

\begin{proof}
By definition of the robust aggregation, we have 
\begin{align*}
   T_2 &=   \mathbb{E}\left[ \mathbb{E}_k\left[\left\| \frac{1}{G^k_C} \sum \limits_{i \in \mathcal{G}^k_C}\clip_{\lambda}\left(\mathcal{Q}\left(\widehat{\Delta}_i\left(x^{k+1}, x^k\right)\right)\right)  -   \texttt{ARAgg}_Q^{k+1}\right\|^2\mid [3]\right]\right]\\
   &\text{ \scriptsize $\leq  \mathbb{E}\left[  \frac{c \delta}{D_2} \sum_{\substack{i, l \in \mathcal{G}_C^k \\
i \neq l}}    
\mathbb{E}_k\left[\left\| \clip_{\lambda}\left(\mathcal{Q}\left(\widehat{\Delta}_i\left(x^{k+1}, x^k\right)\right)\right) - \clip_{\lambda}\left(\mathcal{Q}\left(\widehat{\Delta}_l\left(x^{k+1}, x^k\right)\right)\right) \right\|^2\mid [3]\right]\right]$},
\end{align*}
where $D_2 = G^k_C(G^k_C-1)$.

Using $\lambda_{k+1} = D_Q\max_{i,j} L_{i,j} \|x^{k+1} - x^k\|$ we can guarantee that clipping operator becomes identical since we have $\forall i\in \cG$ 
\begin{align*}
   \left \|\mathcal{Q}\left(\widehat{\Delta}_i\left(x^{k+1}, x^k\right)\right)\right\| &\leq D_Q\left\|\widehat{\Delta}_i\left(x^{k+1}, x^k\right)\right\|\\
   &\leq D_Q \left\|\frac{1}{b} \sum_{j\in m} \nabla f_{i,j}(x^{k+1})-\nabla f_{i,j}(x^k) \right\|\\
    &\leq D_Q \frac{1}{b}\sum_{j\in m}\left\|  \nabla f_{i,j}(x^{k+1})-\nabla f_{i,j}(x^k) \right\|\\
    &\leq D_Q \max_j L_{i,j}\left\|  x^{k+1}-x^k \right\|
\end{align*}

Let us consider pair-wise differences: $\forall i,l \in \cG$
\begin{align*}
T_{2}^\prime(i,l) &= \mathbb{E}_k\left[\left\| \clip_{\lambda}\left(\mathcal{Q}\left(\widehat{\Delta}_i\left(x^{k+1}, x^k\right)\right)\right) - \clip_{\lambda}\left(\mathcal{Q}\left(\widehat{\Delta}_l\left(x^{k+1}, x^k\right)\right)\right) \right\|^2\mid [3]\right]\\
&= \mathbb{E}_k\left[\left\| \mathcal{Q}\left(\widehat{\Delta}_i\left(x^{k+1}, x^k\right)\right) - \mathcal{Q}\left(\widehat{\Delta}_l\left(x^{k+1}, x^k\right)\right) \right\|^2\mid [3]\right]\\
& =   \text{\tiny $\mathbb{E}_k\left[\left\| \mathcal{Q}\left(\widehat{\Delta}_i\left(x^{k+1}, x^k\right)\right)  - \Delta_i\left(x^{k+1}, x^k\right) + \Delta_l\left(x^{k+1}, x^k\right) -\mathcal{Q}\left(\widehat{\Delta}_l\left(x^{k+1}, x^k\right)\right) \right\|^2\mid [3]\right]$}\\
&+  \text{$\small  \mathbb{E}_k\left[\left\|  \Delta_i\left(x^{k+1}, x^k\right) - \Delta_l\left(x^{k+1}, x^k\right) \right\|^2\mid [3]\right]$}\\
& \stackrel{(\ref{eq:yung-1})}{\leq}  2 \text{$\small \mathbb{E}_k\left[\left\| \mathcal{Q}\left(\widehat{\Delta}_i\left(x^{k+1}, x^k\right)\right) -\Delta_i\left(x^{k+1}, x^k\right) \right\|^2\mid [3]\right]$}\\
&+ 2 \text{$\small \mathbb{E}_k\left[\left\| \Delta_l\left(x^{k+1}, x^k\right) - \mathcal{Q}\left(\widehat{\Delta}_l\left(x^{k+1}, x^k\right)\right) \right\|^2\mid [3]\right]$}\\
&+  \text{$\small \mathbb{E}_k\left[\left\|  \Delta_l\left(x^{k+1}, x^k\right) - \Delta_i\left(x^{k+1}, x^k\right) \right\|^2\mid [3]\right]]$}\\
& \stackrel{(\ref{eq:yung-1})}{\leq}  2 \text{$\small \mathbb{E}_k\left[\left\| \mathcal{Q}\left(\widehat{\Delta}_i\left(x^{k+1}, x^k\right)\right) -\Delta_i\left(x^{k+1}, x^k\right)  \right\|^2\mid [3]\right]$}\\
&+ 2 \text{$\small \mathbb{E}_k\left[\left\| \Delta_l\left(x^{k+1}, x^k\right) - \mathcal{Q}\left(\widehat{\Delta}_l\left(x^{k+1}, x^k\right)\right) \right\|^2\mid [3]\right]$}\\
&+ 2 \text{$\small  \mathbb{E}_k\left[\left\|  \Delta_l\left(x^{k+1}, x^k\right) - \Delta\left(x^{k+1}, x^k\right) \right\|^2 + \left\|  \Delta_i\left(x^{k+1}, x^k\right) - \Delta\left(x^{k+1}, x^k\right)\right\|^2\mid [3]\right]$}.
\end{align*}
Now we can combine all the parts together:
\begin{align*}
    \widehat{T}_2 & = \mathbb{E}\left[\frac{1}{G^k_C(G^k_C - 1)}  \sum_{\substack{i, l \in \mathcal{G}_C^k \\
i \neq l}} T_{2}^\prime(i,l) \right]  \\
& \leq   \mathbb{E}\left[ \frac{1}{D_2} \sum_{\substack{i, l \in \mathcal{G}_C^k \\
i \neq l}} 2  \mathbb{E}_k\left[\left\| \mathcal{Q}\left(\widehat{\Delta}_i\left(x^{k+1}, x^k\right)\right) -\Delta_i\left(x^{k+1}, x^k\right)  \right\|^2\mid [3]\right]\right]\\
&+  \mathbb{E}\left[  \frac{1}{D_2}  \sum_{\substack{i, l \in \mathcal{G}_C^k \\
i \neq l}} 2 \mathbb{E}_k\left[\left\| \Delta_l\left(x^{k+1}, x^k\right) - \mathcal{Q}\left(\widehat{\Delta}_l\left(x^{k+1}, x^k\right)\right) \right\|^2\mid [3]\right]\right]\\
&+    \mathbb{E}\left[  \frac{1}{D_2}  \sum_{\substack{i, l \in \mathcal{G}_C^k \\
i \neq l}} 2  \mathbb{E}_k\left[\left\|  \Delta_l\left(x^{k+1}, x^k\right) - \Delta\left(x^{k+1}, x^k\right) \right\|^2\mid [3]\right]\right]\\
&+   \mathbb{E}\left[  \frac{1}{D_2}  \sum_{\substack{i, l \in \mathcal{G}_C^k \\
i \neq l}} 2\small \mathbb{E}_k\left[\left\|  \Delta_i\left(x^{k+1}, x^k\right) - \Delta\left(x^{k+1}, x^k\right)\right\|^2\mid [3]\right]\right].
\end{align*}
Rearranging the terms, we get
\begin{align*}
    \widehat{T}_2& \leq   \mathbb{E}\left[ \frac{4}{G_C^k} \sum_{i \in \mathcal{G}_C^k}   \mathbb{E}_k\left[\left\| \mathcal{Q}\left(\widehat{\Delta}_i\left(x^{k+1}, x^k\right)\right) -\Delta_i\left(x^{k+1}, x^k\right)  \right\|^2\mid [3]\right]\right]\\
&+    \mathbb{E}\left[  \frac{4}{G_C^k} \sum_{i \in \mathcal{G}_C^k} \mathbb{E}_k\left[\left\|  \Delta_i\left(x^{k+1}, x^k\right) - \Delta\left(x^{k+1}, x^k\right) \right\|^2\mid [3]\right]\right].
\end{align*}
Using variance decomposition, we get
\begin{align*}
        \widehat{T}_2&\leq \mathbb{E}\left[ \frac{1}{G^k_C} \sum_{i \in \mathcal{G}^k_C} 4  \mathbb{E}_k\left[\left\| \mathcal{Q}\left(\widehat{\Delta}_i\left(x^{k+1}, x^k\right)\right) \right\|^2\mid [3] \right]\right]\\
        &-\mathbb{E}\left[ \frac{1}{G^k_C} \sum_{i \in \mathcal{G}^k_C} 4  \mathbb{E}_k\left[\left\| \Delta_i\left(x^{k+1}, x^k\right) \right\|^2\mid [3]\right] \right]\\
&+\mathbb{E}\left[ \frac{1}{G^k_C} \sum_{i \in \mathcal{G}^k_C} 4 \mathbb{E}_k\left[\left\|  \Delta_i\left(x^{k+1}, x^k\right) - \Delta\left(x^{k+1}, x^k\right) \right\|^2\mid [3]\right] \right].
\end{align*}    
Using the properties of unbiased compressors, we obtain 
\begin{align*}
        \widehat{T}_2&\leq \mathbb{E}\left[ \frac{1}{G^k_C} \sum_{i \in \mathcal{G}^k_C} 4(1+\omega)  \mathbb{E}_k\left[\left\| \widehat{\Delta}_i\left(x^{k+1}, x^k\right) \right\|^2\mid [3] \right]\right]\\
        &-\mathbb{E}\left[ \frac{1}{G^k_C} \sum_{i \in \mathcal{G}^k_C} 4  \mathbb{E}_k\left[\left\| \Delta_i\left(x^{k+1}, x^k\right) \right\|^2\mid [3]\right] \right]\\
&+\mathbb{E}\left[ \frac{1}{G^k_C} \sum_{i \in \mathcal{G}^k_C} 4 \mathbb{E}_k\left[\left\|  \Delta_i\left(x^{k+1}, x^k\right) - \Delta\left(x^{k+1}, x^k\right) \right\|^2\mid [3]\right] \right]\\
&\leq \mathbb{E}\left[ \frac{1}{G^k_C} \sum_{i \in \mathcal{G}^k_C} 4(1+\omega)  \mathbb{E}_k\left[\left\| \widehat{\Delta}_i\left(x^{k+1}, x^k\right) - \Delta_i\left(x^{k+1}, x^k\right)\right\|^2\mid [3] \right]\right]\\
        &+\mathbb{E}\left[ \frac{1}{G^k_C} \sum_{i \in \mathcal{G}^k_C} 4(1+\omega)  \mathbb{E}_k\left[\left\| \Delta_i\left(x^{k+1}, x^k\right) \right\|^2\mid [3]\right] \right]\\
        &-\mathbb{E}\left[ \frac{1}{G^k_C} \sum_{i \in \mathcal{G}^k_C} 4  \mathbb{E}_k\left[\left\| \Delta_i\left(x^{k+1}, x^k\right) \right\|^2\mid [3]\right] \right]\\
&+\mathbb{E}\left[ \frac{1}{G^k_C} \sum_{i \in \mathcal{G}^k_C} 4 \mathbb{E}_k\left[\left\|  \Delta_i\left(x^{k+1}, x^k\right) - \Delta\left(x^{k+1}, x^k\right) \right\|^2\mid [3]\right] \right].
\end{align*} 
Let us simplify the inequality:
\begin{align*}
    \widehat{T}_2 &\leq \mathbb{E}\left[ \frac{1}{G^k_C} \sum_{i \in \mathcal{G}^k_C} 4(1+\omega)  \mathbb{E}_k\left[\left\| \widehat{\Delta}_i\left(x^{k+1}, x^k\right) - \Delta_i\left(x^{k+1}, x^k\right)\right\|^2\mid [3] \right]\right]\\
        &+\mathbb{E}\left[ \frac{1}{G^k_C} \sum_{i \in \mathcal{G}^k_C} 4\omega  \mathbb{E}_k\left[\left\| \Delta_i\left(x^{k+1}, x^k\right) \right\|^2\mid [3]\right] \right]\\
&+\mathbb{E}\left[ \frac{1}{G^k_C} \sum_{i \in \mathcal{G}^k_C} 4 \mathbb{E}_k\left[\left\|  \Delta_i\left(x^{k+1}, x^k\right) - \Delta\left(x^{k+1}, x^k\right) \right\|^2\mid [3]\right] \right].
\end{align*}
Using variance decomposition once again, we get
\begin{align*}
    \widehat{T}_2 &\leq \mathbb{E}\left[ \frac{1}{G^k_C} \sum_{i \in \mathcal{G}^k_C} 4(1+\omega)  \mathbb{E}_k\left[\left\| \widehat{\Delta}_i\left(x^{k+1}, x^k\right) - \Delta_i\left(x^{k+1}, x^k\right)\right\|^2\mid [3] \right]\right]\\
        &+\mathbb{E}\left[ \frac{1}{G^k_C} \sum_{i \in \mathcal{G}^k_C} 4\omega  \mathbb{E}_k\left[\left\| \Delta_i\left(x^{k+1}, x^k\right) - \Delta\left(x^{k+1}, x^k\right)  \right\|^2\mid [3]\right] \right]\\
&+\mathbb{E}\left[ \frac{1}{G^k_C} \sum_{i \in \mathcal{G}^k_C} 4 \mathbb{E}_k\left[\left\|  \Delta_i\left(x^{k+1}, x^k\right) - \Delta\left(x^{k+1}, x^k\right) \right\|^2\mid [3]\right] \right]\\
&+ \mathbb{E}\left[ \frac{1}{G^k_C} \sum_{i \in \mathcal{G}^k_C} 4\omega  \mathbb{E}_k\left[\left\| \Delta\left(x^{k+1}, x^k\right)  \right\|^2\mid [3]\right] \right].
\end{align*}
Then, we apply similar arguments to the ones used in deriving \eqref{eq:indicator_1}:
\begin{align*}
    \widehat{T}_2 &\leq \mathbb{E}\left[ \frac{\mathcal{P}_{\mathcal{G}^k_C}}{C(1-\delta)} \sum_{i \in \mathcal{G}} 4(1+\omega)  \mathbb{E}_k\left[\left\| \widehat{\Delta}_i\left(x^{k+1}, x^k\right) - \Delta_i\left(x^{k+1}, x^k\right)\right\|^2\mid [3] \right]\right]\\
        &+\mathbb{E}\left[ \frac{\mathcal{P}_{\mathcal{G}^k_C}}{C(1-\delta)} \sum_{i \in \mathcal{G}} 4\omega  \mathbb{E}_k\left[\left\| \Delta_i\left(x^{k+1}, x^k\right) - \Delta\left(x^{k+1}, x^k\right)  \right\|^2\mid [3]\right] \right]\\
&+\mathbb{E}\left[ \frac{\mathcal{P}_{\mathcal{G}^k_C}}{C(1-\delta)} \sum_{i \in \mathcal{G}} 4 \mathbb{E}_k\left[\left\|  \Delta_i\left(x^{k+1}, x^k\right) - \Delta\left(x^{k+1}, x^k\right) \right\|^2\mid [3]\right] \right]\\
&+ \mathbb{E}\left[ \frac{\mathcal{P}_{\mathcal{G}^k_C}}{C(1-\delta)} \sum_{i \in \mathcal{G}} 4\omega  \mathbb{E}_k\left[\left\| \Delta\left(x^{k+1}, x^k\right)  \right\|^2\mid [3]\right] \right].
\end{align*}
Using Assumptions \ref{assm:L-smoothness}, \ref{assm:global}, \ref{assm:local}: 
    \begin{align*}
    \widehat{T}_2 &\leq \mathbb{E}\left[  4(1+\omega)  \frac{G\mathcal{P}_{\mathcal{G}^k_C}}{C(1-\delta)} \frac{\mathcal{L}_{ \pm}^2}{b}\|x^{k+1}-x^k\|^2 \right] +\mathbb{E}\left[  4(\omega+1)\frac{G\mathcal{P}_{\mathcal{G}^k_C}}{C(1-\delta)} \omega  L_{ \pm}^2 \|x^{k+1}-x^k\|^2\right]\\
        &+\mathbb{E}\left[  4\frac{G\mathcal{P}_{\mathcal{G}^k_C}}{C(1-\delta)} \omega  L^2 \|x^{k+1}-x^k\|^2\right].
\end{align*}
Finally, we obtain
\begin{align*}
       T_2 &=   \mathbb{E}\left[ \mathbb{E}_k\left[\left\| \frac{1}{G^k_C} \sum \limits_{i \in \mathcal{G}^k_C}\clip_{\lambda}\left(\mathcal{Q}\left(\widehat{\Delta}_i\left(x^{k+1}, x^k\right)\right)\right)  -   \texttt{ARAgg}_Q^{k+1}\right\|^2\mid [3]\right]\right]\\
       &\leq 4\frac{G\mathcal{P}_{\mathcal{G}^k_C}}{C(1-\delta)} c\delta \left( (1+\omega)\frac{\mathcal{L}_{ \pm}^2}{b} + (\omega+1) L_{ \pm}^2 + \omega  L^2 \right) \mathbb{E}\left[\|x^{k+1} - x^k\|^2\right].
\end{align*}
\end{proof}

\begin{lemma}
\label{lemma:final_lemma_Q}
    Let Assumptions ~\ref{assm:bounded-aggr}, \ref{assm:L-smoothness}, \ref{assm:global}, \ref{assm:local}, \ref{assm:het}, \ref{assm:bounded-compressor} hold and the compression operator satisfy Definition~\ref{def:Q}. We set $\lambda_{k+1} = D_Q \max_{i,j} L_{i,j}$. Also let us introduce the notation
$$\texttt{ARAgg}_Q^{k+1} = \texttt{ARAgg}\left(\clip_{\lambda_{k+1}}\left(\cQ\left(\widehat{\Delta}_1(x^{k+1}, x^k)\right)\right),\ldots, \clip_{\lambda_{k+1}}\left(\cQ\left(\widehat{\Delta}_C(x^{k+1}, x^k)\right)\right)\right).$$ 
Then for all $k\geq 0$ the iterates produced by \algname{Byz-VR-MARINA-PP} (Algorithm~\ref{alg:byz_vr_marina}) satisfy

    \begin{align*}
    \mathbb{E}\left[\left\|g^{k+1}-\nabla f\left(x^{k+1}\right)\right\|^2\right] &\leq  \left(1-\frac{p}{2}\right) \mathbb{E}\left[\left\|g^{k}-\nabla f\left(x^{k}\right)\right\|^2\right]\\
    &+\widehat{B}\mathbb{E}\left[\left\|\nabla f\left(x^k\right)\right\|^2\right] + \widehat{D} \zeta^2+\frac{pA}{4}\|x^{k+1} - x^k\|^2,
\end{align*}
with
\begin{align*}
    A& = \frac{4}{p}\left(  \frac{p_G \mathcal{P}_{\mathcal{G}^k_C} G}{C^2(1-\delta)^2} \omega + \frac{8G\cP_{\cG_{\widehat{C}}^k}c\delta}{(1-\delta)\widehat{C}} B + \frac{4}{p}(1-p_G) + \frac{8}{p}p_G \frac{G\mathcal{P}_{\mathcal{G}^k_C}}{C(1-\delta)} c \delta \omega  \right) L^2\\
    &+\frac{4}{p}\left( \frac{p_G \mathcal{P}_{\mathcal{G}^k_C} G}{C^2(1-\delta)^2} \left( \omega + 1 \right) + \frac{8}{p}p_G \frac{G\mathcal{P}_{\mathcal{G}^k_C}}{C(1-\delta)} c \delta (\omega+1) \right)\left(L_{ \pm}^2 + \frac{\mathcal{L}_{ \pm}^2}{b}\right)\\
    &+\frac{16}{p^2}(1-p_G)F_{\cA}^2 \left(D_Q \max_{i,j} L_{i,j} \right)^2
\end{align*}
\textbf{\begin{align*}
    \widehat{B} = 2 \frac{ \delta\mathcal{P}_{\mathcal{G}^k_{\widehat{C}}} }{1-\delta} B\left(\frac{12cG}{\widehat C} + p \right), \quad \widehat{D} = 2 \frac{ \delta\mathcal{P}_{\mathcal{G}^k_{\widehat{C}}} }{1-\delta} \left(\frac{6cG}{\widehat C} + p \right),
\end{align*}}
where $p_G = \operatorname{Prob}\left\lbrace G^k_C \geq (1-\delta_{\max})C \right\rbrace $ and $\mathcal{P}_{\mathcal{G}^k_C} =  \operatorname{Prob}\left\lbrace i \in \mathcal{G}^k_C \mid G^k_C \geq\left(1-\delta_{\max}\right) C\right\rbrace$.
\end{lemma}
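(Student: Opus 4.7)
The plan is to mirror the structure of Lemma~\ref{lemma:final_lemma}, but exploit Assumption~\ref{assm:bounded-compressor} to sharpen two places where the general analysis lost a factor of $\nicefrac{1}{p}$. I would begin with the same splitting used there, i.e., write $g^{k+1}-\nabla f(x^{k+1})=\bigl(\overline g^{k+1}-\nabla f(x^{k+1})\bigr)+\bigl(g^{k+1}-\overline g^{k+1}\bigr)$, where $\overline g^{k+1}$ is the ``ideal'' estimator from Lemma~\ref{lemma:premainA1_Q}. Applying Young's inequality \eqref{eq:yung-1} with parameter $\nicefrac{p}{2}$ gives
\begin{equation*}
\mathbb{E}\!\left[\|g^{k+1}-\nabla f(x^{k+1})\|^2\right]\leq \left(1+\tfrac{p}{2}\right)A_1+\left(1+\tfrac{2}{p}\right)A_2,
\end{equation*}
with $A_1,A_2$ as in Lemmas~\ref{lemma:premainA1_Q} and \ref{lemma:premainA2}. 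The key point is that because $\lambda_{k+1}=D_Q\cL\|x^{k+1}-x^k\|$ is chosen large enough so that $\clip_\lambda$ is the identity on $\mathcal{Q}(\widehat\Delta_i)$ for every $i\in\mathcal{G}$, $\overline g^{k+1}$ on event $[3]$ equals $g^k$ plus an unbiased estimator of $\nabla f(x^{k+1})-\nabla f(x^k)$. Consequently Lemma~\ref{lemma:premainA1_Q} gives a clean $(1-p)$ contraction of $\mathbb{E}[\|g^k-\nabla f(x^k)\|^2]$ instead of the $(1-p)(1+p/4)$ that appeared in Lemma~\ref{lemma:final_lemma}, and combined with the $(1+p/2)$ factor this yields exactly the promised $(1-p/2)$ using \eqref{eq:contract}.

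Next, I would bound $A_2$ term-by-term according to the three events in Lemma~\ref{lemma:premainA2}. For case $[1]$ (full-gradient step) the aggregation error is controlled by Lemma~\ref{lemma:full_aggr}, which produces the $\widehat B\|\nabla f(x^k)\|^2+(\text{part of }\widehat D)\zeta^2$ contribution and a $\|x^{k+1}-x^k\|^2$ term with coefficient proportional to $\frac{G\mathcal{P}_{\mathcal{G}_{\widehat C}^k}c\delta B L^2}{(1-\delta)\widehat C}$. For case $[3]$ (good majority) I would invoke the new Lemma~\ref{lemma:good_aggr_new}, whose improved constants (no factor of $10$ from clipping) give coefficients of order $\frac{G\mathcal{P}_{\mathcal{G}_C^k}c\delta}{(1-\delta)C}\bigl((1+\omega)\mathcal{L}_\pm^2/b+(\omega+1)L_\pm^2+\omega L^2\bigr)$. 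For case $[2]$ (Byzantine majority) Lemma~\ref{lemma:bad_aggr} with $\alpha_{\lambda_{k+1}}=D_Q\cL$ contributes the $\nicefrac{16}{p^2}(1-p_G)F_{\cA}^2(D_Q\max_{i,j}L_{i,j})^2$ term. Adding these up with the $(1+\nicefrac{2}{p})$ prefactor and the $A_1$ contribution, grouping coefficients according to the powers of $L$, $L_\pm$, and $\mathcal{L}_\pm/\sqrt b$, and absorbing the $p$-independent pieces into the $p$-dependent ones via $1+\nicefrac{2}{p}\leq \nicefrac{4}{p}$ reproduces the stated expression for $A$.

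The final bookkeeping step is to collect the bias contributions from $A_1$ (which contributes $p\tfrac{\delta\mathcal{P}_{\mathcal{G}_{\widehat C}^k}}{1-\delta}(B\|\nabla f\|^2+\zeta^2)$) and from case $[1]$ in $A_2$ (which contributes $(1+\nicefrac{2}{p})p$ times Lemma~\ref{lemma:full_aggr}'s output), and verify that they sum to at most $\widehat B\|\nabla f(x^k)\|^2+\widehat D\zeta^2$ with the constants stated in the lemma; the $L^2\|x^{k+1}-x^k\|^2$ piece arising from \eqref{eq:yung-1} applied inside Lemma~\ref{lemma:full_aggr} is absorbed into the $\frac{pA}{4}\|x^{k+1}-x^k\|^2$ term.

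The main obstacle is purely bookkeeping: carefully tracking which terms gain a $\nicefrac{1}{p}$ and which do not, and confirming that $D_Q$ appears only through the Byzantine-majority branch (i.e., only inside the $(1-p_G)$ term of $A$). The simplification relative to Lemma~\ref{lemma:final_lemma} hinges on the observation that once clipping is inactive on good workers, neither Young's inequality inside $A_1$ nor the factor-of-$10$ loss from Lemma~\ref{lemma:clipping} is needed, so the extra $\nicefrac{1}{p}$ blow-up in the first two lines of $A$ in \eqref{eq:A_unbounded_compr} disappears — exactly matching the structure of $A$ in \eqref{eq:A_bounded_compr}.
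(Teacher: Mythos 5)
Your proposal is correct and follows essentially the same route as the paper: the same $(1+\nicefrac{p}{2})A_1+(1+\nicefrac{2}{p})A_2$ split, the same observation that the choice of $\lambda_{k+1}$ under Assumption~\ref{assm:bounded-compressor} deactivates clipping on good workers (so Lemma~\ref{lemma:premainA1_Q} yields a clean $(1-p)$ contraction and hence $1-\nicefrac{p}{2}$ overall), and the same assembly of Lemmas~\ref{lemma:full_aggr}, \ref{lemma:good_aggr_new}, and \ref{lemma:bad_aggr} for the three branches of $A_2$, with $D_Q$ entering only through the $(1-p_G)$ term. The only minor imprecision is that case $[2]$ via Lemma~\ref{lemma:bad_aggr} also contributes the $\nicefrac{16}{p^2}(1-p_G)L^2$ piece of $A$, not just the $F_{\cA}^2D_Q^2$ piece, but your grouping step accounts for it.
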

\begin{proof}
    Let us combine bounds for $A_1$ and $A_2$ together:
\begin{align*}
     A_0 &=    \mathbb{E}\left[\left\|g^{k+1}-\nabla f\left(x^{k+1}\right)\right\|^2\right]\\
     & \leq \left(1+\frac{p}{2}\right)\mathbb{E}\left[\left\|\overline{g}^{k+1}-\nabla f\left(x^{k+1}\right)\right\|^2\right]+\left(1+\frac{2}{p}\right)\mathbb{E}\left[\left\|g^{k+1}-\overline{g}^{k+1}\right\|^2\right]\\
     &\leq \left(1+\frac{p}{2}\right)A_1 + \left(1+\frac{2}{p}\right)A_2\\
       & \leq \left(1+\frac{p}{2}\right)(1-p) \mathbb{E}\left[\left\|g^{k}-\nabla f(x^{k})\right\|^2\right] + \left(1+\frac{p}{2}\right)p\frac{ \delta\mathcal{P }_{\mathcal{G}^k_{\widehat{C}}} }{(1-\delta)}  \mathbb{E}\left[ B\|\nabla f(x)\|^2+\zeta^2 \right]\\
        &+ \text{\scriptsize $\left(1+\frac{p}{2}\right)(1-p)p_G\frac{\mathcal{P}_{\mathcal{G}^k_C} G}{C^2(1-\delta)^2} \left(  \omega L^2+(\omega+1) L_{ \pm}^2+\frac{(\omega+1) \mathcal{L}_{ \pm}^2}{b} \right)\mathbb{E}\left[\|x^{k+1} - x^k\|^2\right]$}\\
    &+ \left(1+\frac{2}{p}\right)p\mathbb{E}\left[\mathbb{E}_k\left[\left\|\texttt{ARAgg}\left(\nabla f_1(x^{k+1}), \ldots, \nabla f_n(x^{k+1})\right) - \nabla f(x^{k+1})\right\|^2\right]\mid [1]\right]\\
   & \text{\small$+ \left(1+\frac{2}{p}\right)(1-p)p_G \mathbb{E}\left[ \mathbb{E}_k\left[\left\| \frac{1}{G^k_C} \sum \limits_{i \in \mathcal{G}^k_C}\clip_{\lambda}\left(\mathcal{Q}\left(\widehat{\Delta}_i\left(x^{k+1}, x^k\right)\right)\right)  -   \texttt{ARAgg}_Q^{k+1}\right\|^2\mid [3]\right]\right]$}\\
   &+ \left(1+\frac{2}{p}\right)(1-p)(1-p_G)\mathbb{E}\left[\mathbb{E}_k\left[\left\| \nabla f(x^{k+1}) - \nabla f(x^{k}) - \texttt{ARAgg}_Q^{k+1}\right\|^2\mid [2]\right]\right].
   \end{align*}
   Using Lemma \ref{lemma:good_aggr_new} and lemmas from General Analysis (Lemmas~\ref{lemma:full_aggr} and \ref{lemma:bad_aggr}) we have 
     \begin{align*}
   A_0 &=  \mathbb{E}\left[\left\|g^{k+1}-\nabla f\left(x^{k+1}\right)\right\|^2\right]\\ 
   & \leq \left(1-\frac{p}{2}\right) \mathbb{E}\left[\left\|g^{k}-\nabla f\left(x^{k}\right)\right\|^2\right] + 2p\frac{ \delta\mathcal{P }_{\mathcal{G}^k_{\widehat{C}}} }{(1-\delta)}  \mathbb{E}\left[ B\|\nabla f(x)\|^2+\zeta^2 \right]\\
            &+ \text{\normalsize $\left(1-\frac{p}{2}\right)p_G\frac{\mathcal{P}_{\mathcal{G}^k_C} G}{C^2(1-\delta)^2} \left(  \omega L^2+(\omega+1) L_{ \pm}^2+\frac{(\omega+1) \mathcal{L}_{ \pm}^2}{b} \right)\mathbb{E}\left[\|x^{k+1} - x^k\|^2\right]$}\\
      &+ \left(p+2\right)\left(\frac{8 G \mathcal{P}_{\mathcal{G}^k_{\widehat{C}}} c\delta}{(1-\delta)\widehat{C}}  B  \mathbb{E}\left[\left\|\nabla f\left(x^k\right)\right\|^2\right] + \frac{8 G \mathcal{P}_{\mathcal{G}^k_{\widehat{C}}} c\delta}{(1-\delta)\widehat{C}}  B  L^2 \mathbb{E}\left[\left\|x^{k+1}-x^k\right\|^2\right] + \frac{4 G \mathcal{P}_{\mathcal{G}^k_{\widehat{C}}} c\delta}{(1-\delta)\widehat{C}}  \zeta^2\right)\\
   & + \frac{2}{p}p_G\mathbb{E}\left[  4(1+\omega)  \frac{G\mathcal{P}_{\mathcal{G}^k_C}}{C(1-\delta)} c\delta \frac{\mathcal{L}_{ \pm}^2}{b} \|x^{k+1}-x^k\|^2 \right] \\
    &    + \frac{2}{p}p_G \mathbb{E}\left[  4(\omega+1)\frac{G\mathcal{P}_{\mathcal{G}^k_C}}{C(1-\delta)} c\delta L_{ \pm}^2  \|x^{k+1}-x^k\|^2\right]\\
     &   + \frac{2}{p}p_G \mathbb{E}\left[  4\omega\frac{G\mathcal{P}_{\mathcal{G}^k_C}}{C(1-\delta)} c\delta   L^2   \|x^{k+1}-x^k\|^2\right]+ \frac{2}{p}(1-p_G)2(L^2+F_{\cA}^2\alpha^2_{\lambda_{k+1}})\mathbb{E}\left[\left\|  x^{k+1} - x^k\right\|^2\right].
\end{align*}
Finally, we have 
    \begin{align*}
    \mathbb{E}\left[\left\|g^{k+1}-\nabla f\left(x^{k+1}\right)\right\|^2\right] &\leq  \left(1-\frac{p}{2}\right) \mathbb{E}\left[\left\|g^{k}-\nabla f\left(x^{k}\right)\right\|^2\right]\\
    &+\widehat{B}\mathbb{E}\left[\left\|\nabla f\left(x^k\right)\right\|^2\right] + \widehat{D}\zeta^2+\frac{pA}{4}\|x^{k+1} - x^k\|^2,
\end{align*}
where 
\begin{align*}
    A& = \frac{4}{p}\left(  \frac{p_G \mathcal{P}_{\mathcal{G}^k_C} G}{C^2(1-\delta)^2} \omega + \frac{8G\cP_{\cG_{\widehat{C}}^k}c\delta}{(1-\delta)\widehat{C}} B + \frac{4}{p}(1-p_G) + \frac{8}{p}p_G \frac{G\mathcal{P}_{\mathcal{G}^k_C}}{C(1-\delta)} c \delta \omega  \right) L^2\\
    &+\frac{4}{p}\left( \frac{p_G \mathcal{P}_{\mathcal{G}^k_C} G}{C^2(1-\delta)^2} \left( \omega + 1 \right) + \frac{8}{p}p_G \frac{G\mathcal{P}_{\mathcal{G}^k_C}}{C(1-\delta)} c \delta (\omega+1) \right)\left(L_{ \pm}^2 + \frac{\mathcal{L}_{ \pm}^2}{b}\right)\\
    &+\frac{16}{p^2}(1-p_G)F_{\cA}^2 \left(D_Q \max_{i,j} L_{i,j} \right)^2
\end{align*}
and
\textbf{\begin{align*}
    \widehat{B} = 2 \frac{ \delta\mathcal{P}_{\mathcal{G}^k_{\widehat{C}}} }{1-\delta} B\left(\frac{12cG}{\widehat C} + p \right), \quad \widehat{D} = 2 \frac{ \delta\mathcal{P}_{\mathcal{G}^k_{\widehat{C}}} }{1-\delta} \left(\frac{6cG}{\widehat C} + p \right).
\end{align*}}
\end{proof}

\subsection{Main Results}

\begin{theorem}
 Let Assumptions \ref{assm:bounded-aggr}, \ref{assm:L-smoothness}, \ref{assm:global}, \ref{assm:local}, \ref{assm:het}, \ref{assm:bounded-compressor} hold. Setting $\lambda_{k+1} = \max_{i,j} L_{i,j} \left\|x^{k+1} - x^k\right\|$. Assume that
$$
0<\gamma \leq \frac{1}{L+\sqrt{A}}, \quad 4\widehat{B}<p,
$$ 
where 
\begin{align*}
    A& = \frac{4}{p}\left(  \frac{p_G \mathcal{P}_{\mathcal{G}^k_C} G}{C^2(1-\delta)^2} \omega + \frac{8G\cP_{\cG_{\widehat{C}}^k}c\delta}{(1-\delta)\widehat{C}} B + \frac{4}{p}(1-p_G) + \frac{8}{p}p_G \frac{G\mathcal{P}_{\mathcal{G}^k_C}}{C(1-\delta)} c \delta \omega  \right) L^2\\
    &+\frac{4}{p}\left( \frac{p_G \mathcal{P}_{\mathcal{G}^k_C} G}{C^2(1-\delta)^2} \left( \omega + 1 \right) + \frac{8}{p}p_G \frac{G\mathcal{P}_{\mathcal{G}^k_C}}{C(1-\delta)} c \delta (\omega+1) \right)\left(L_{ \pm}^2 + \frac{\mathcal{L}_{ \pm}^2}{b}\right)\\
    &+\frac{16}{p^2}(1-p_G)F_{\cA}^2 \left(D_Q \max_{i,j} L_{i,j} \right)^2,
\end{align*}
\textbf{\begin{align*}
    \widehat{B} = 2 \frac{ \delta\mathcal{P}_{\mathcal{G}^k_{\widehat{C}}} }{1-\delta} B\left(\frac{12cG}{\widehat C} + p \right), \quad \widehat{D} = 2 \frac{ \delta\mathcal{P}_{\mathcal{G}^k_{\widehat{C}}} }{1-\delta} \left(\frac{6cG}{\widehat C} + p \right),
\end{align*}}
and 
\begin{align*}
 \mathcal{P}_{\mathcal{G}^k_C} &=   \frac{C}{np_G} \cdot \sum_{(1-\delta)C\leq t \leq C} \left(\left(\begin{array}{l}
G-1 \\
t-1
\end{array}\right)\left(\begin{array}{l}
n-G \\
C-t
\end{array}\right) \left(\left(\begin{array}{l}
n \\
C
\end{array}\right)\right)^{-1} \right),\\
p_G &=  \PP\left\lbrace G_C^k \geq\left(1-\delta\right) C\right\rbrace\\
&= \sum_{\lceil(1-\delta)C\rceil\leq t \leq C} \left(\left(\begin{array}{l}
G \\
t
\end{array}\right)\left(\begin{array}{l}
n-G \\
C-t
\end{array}\right) \left(\begin{array}{l}
n \\
C
\end{array}\right)^{-1} \right).
\end{align*}
Then for all $K \geq 0$ the iterates produced by \algname{Byz-VR-MARINA} (Algorithm \ref{alg:byz_vr_marina}) satisfy
$$
\mathbb{E}\left[\left\|\nabla f\left(\widehat{x}^K\right)\right\|^2\right] \leq \frac{2 \Phi^0}{\gamma\left(1-\frac{4\widehat{B}}{p}\right)(K+1)}+\frac{2 \widehat{D} \zeta^2}{p-4 \widehat{B}},
$$
where $\widehat{x}^K$ is choosen uniformly at random from $x^0, x^1, \ldots, x^K$, and $\Phi^0=$ $f\left(x^0\right)-f^*+\frac{\gamma}{p}\left\|g^0-\nabla f\left(x^0\right)\right\|^2$ . 
\end{theorem}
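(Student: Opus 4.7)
The plan is to closely mirror the structure of Theorem~\ref{them:1}, with the key improvement coming from Lemma~\ref{lemma:final_lemma_Q}: under Assumption~\ref{assm:bounded-compressor} and with the clipping level $\lambda_{k+1} = D_Q\max_{i,j}L_{i,j}\|x^{k+1}-x^k\|$, the clipping is inactive on regular workers, so the estimator error recursion contracts as $(1-p/2)$ rather than $(1-p/4)$, which lets us choose a smaller potential coefficient.

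First I would introduce the Lyapunov function
\begin{equation*}
\Phi^k \;=\; f(x^k)-f^* + \frac{\gamma}{p}\,\bigl\|g^k-\nabla f(x^k)\bigr\|^2,
\end{equation*}
and apply the PAGE-style descent lemma (Lemma~\ref{lemma:page}) to bound $f(x^{k+1})-f^*$ in terms of $f(x^k)-f^*$, a negative $-\tfrac{\gamma}{2}\|\nabla f(x^k)\|^2$ term, a quadratic $-\bigl(\tfrac{1}{2\gamma}-\tfrac{L}{2}\bigr)\|x^{k+1}-x^k\|^2$ term, and the error $\tfrac{\gamma}{2}\|g^k-\nabla f(x^k)\|^2$. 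Then I would plug in Lemma~\ref{lemma:final_lemma_Q} for $\mathbb{E}\bigl[\|g^{k+1}-\nabla f(x^{k+1})\|^2\bigr]$. The crucial algebraic observation is that $\tfrac{\gamma}{p}(1-\tfrac{p}{2})+\tfrac{\gamma}{2}=\tfrac{\gamma}{p}$, so the potential's variance coefficient is preserved exactly; the remaining contributions collect into three terms: a gradient-norm term with coefficient $-\tfrac{\gamma}{2}+\tfrac{\gamma\widehat{B}}{p}$, an $\|x^{k+1}-x^k\|^2$ term with coefficient $-\tfrac{1}{2\gamma}+\tfrac{L}{2}+\tfrac{\gamma A}{4}$, and the additive noise $\tfrac{\widehat{D}\gamma\zeta^2}{p}$.

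Next, I would use Lemma~\ref{lemma:peter} to cancel the $\|x^{k+1}-x^k\|^2$ coefficient: the inequality $A\gamma^2/2+L\gamma\le 1$, hence certainly $A\gamma^2+L\gamma\le 1$, holds under the stated choice $\gamma\le 1/(L+\sqrt{A})$, guaranteeing $-\tfrac{1}{2\gamma}+\tfrac{L}{2}+\tfrac{\gamma A}{4}\le 0$. Under the hypothesis $4\widehat{B}<p$ (equivalently, a strictly positive coefficient $1-\tfrac{2c\widehat{B}}{1}$ for the appropriate normalization of the potential), the gradient term remains strictly negative. This yields the one-step inequality
\begin{equation*}
\mathbb{E}[\Phi^{k+1}] \;\le\; \mathbb{E}[\Phi^k] - \tfrac{\gamma}{2}\Bigl(1-\tfrac{4\widehat{B}}{p}\Bigr)\mathbb{E}\bigl[\|\nabla f(x^k)\|^2\bigr] + \tfrac{\widehat{D}\gamma\zeta^2}{p}.
\end{equation*}
Summing from $k=0$ to $K$, using $\Phi^{K+1}\ge 0$, dividing by $K+1$, and recognizing that a random index $\widehat{x}^K$ drawn uniformly yields $\tfrac{1}{K+1}\sum_{k=0}^K\mathbb{E}[\|\nabla f(x^k)\|^2]$, produces the claimed bound.

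The main obstacle I anticipate is not conceptual but bookkeeping: one must verify that the constant $A$ in Lemma~\ref{lemma:final_lemma_Q} matches the one in the theorem statement and that the cancellation $\tfrac{\gamma}{p}(1-\tfrac{p}{2})+\tfrac{\gamma}{2}=\tfrac{\gamma}{p}$ is exactly what motivates the choice of potential coefficient $\tfrac{\gamma}{p}$ here versus $\tfrac{2\gamma}{p}$ in Theorem~\ref{them:1}. A secondary care point is confirming that the hypothesis $4\widehat{B}<p$ in the statement is consistent with the $-\tfrac{\gamma}{2}(1-\tfrac{4\widehat{B}}{p})$ coefficient that appears after combining the descent lemma and the variance recursion; this requires matching the exact scaling used in the definition of $\Phi^k$ with the factor appearing in front of $\widehat{B}\|\nabla f(x^k)\|^2$ in Lemma~\ref{lemma:final_lemma_Q}.
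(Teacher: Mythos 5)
Your proposal is correct and follows essentially the same route as the paper, which proves this theorem by repeating the argument of Theorem~\ref{them:1} with the potential $\Phi^k=f(x^k)-f^*+\frac{\gamma}{p}\|g^k-\nabla f(x^k)\|^2$, Lemma~\ref{lemma:page}, the improved $(1-\nicefrac{p}{2})$ recursion of Lemma~\ref{lemma:final_lemma_Q}, Lemma~\ref{lemma:peter}, and telescoping. The only points to tidy are bookkeeping ones you already flagged: the combined gradient coefficient is actually $-\frac{\gamma}{2}(1-\nicefrac{2\widehat{B}}{p})$, which is stronger than and hence implies the stated $-\frac{\gamma}{2}(1-\nicefrac{4\widehat{B}}{p})$, and the implication between $A\gamma^2/2+L\gamma\le 1$ and $A\gamma^2+L\gamma\le 1$ goes in the opposite direction from the one you wrote (Lemma~\ref{lemma:peter} gives the stronger inequality directly, which suffices).
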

\begin{proof}
    The proof is analogous to the proof of Theorem~\ref{them:1}.
\end{proof}

\begin{theorem}
Let Assumptions  \ref{assm:bounded-aggr}, \ref{assm:bounded-compressor}, \ref{assm:L-smoothness}, \ref{assm:global}, \ref{assm:local}, \ref{assm:het}, \ref{assm:PL} hold. Setting $\lambda_{k+1} = \max_{i,j} L_{i,j} \left\|x^{k+1} - x^k\right\|$. Assume that
$$
0<\gamma \leq \min \left\{\frac{1}{L+\sqrt{2 A}} \right\}, \quad 8\widehat{B}<p,
$$
where
\begin{align*}
    A& = \frac{4}{p}\left(  \frac{p_G \mathcal{P}_{\mathcal{G}^k_C} G}{C^2(1-\delta)^2} \omega + \frac{8G\cP_{\cG_{\widehat{C}}^k}c\delta}{(1-\delta)\widehat{C}} B + \frac{4}{p}(1-p_G) + \frac{8}{p}p_G \frac{G\mathcal{P}_{\mathcal{G}^k_C}}{C(1-\delta)} c \delta \omega  \right) L^2\\
    &+\frac{4}{p}\left( \frac{p_G \mathcal{P}_{\mathcal{G}^k_C} G}{C^2(1-\delta)^2} \left( \omega + 1 \right) + \frac{8}{p}p_G \frac{G\mathcal{P}_{\mathcal{G}^k_C}}{C(1-\delta)} c \delta (\omega+1) \right)\left(L_{ \pm}^2 + \frac{\mathcal{L}_{ \pm}^2}{b}\right)\\
    &+\frac{16}{p^2}(1-p_G)F_{\cA}^2 \left(D_Q \max_{i,j} L_{i,j} \right)^2,
\end{align*}
\begin{align*}
    \widehat{B} = 2 \frac{ \delta\mathcal{P}_{\mathcal{G}^k_{\widehat{C}}} }{1-\delta} B\left(\frac{12cG}{\widehat C} + p \right), \quad \widehat{D} = 2 \frac{ \delta\mathcal{P}_{\mathcal{G}^k_{\widehat{C}}} }{1-\delta} \left(\frac{6cG}{\widehat C} + p \right),
\end{align*}
and where $p_G = \operatorname{Prob}\left\lbrace G^k_C \geq (1-\delta)C \right\rbrace $ and $\mathcal{P}_{\mathcal{G}^k_C} =  \operatorname{Prob}\left\lbrace i \in \mathcal{G}^k_C \mid G^k_C \geq\left(1-\delta\right) C\right\rbrace$. Then for all $K \geq 0$ the iterates produced by \algname{Byz-VR-MARINA} (Algorithm \ref{alg:byz_vr_marina}) satisfy
$$
\mathbb{E}\left[f\left(x^K\right)-f\left(x^*\right)\right] \leq\left(1-\rho\right)^K \Phi^0+\frac{2\widehat{D}\zeta^2}{p\rho},
$$
where $\rho = \min\left[\gamma\mu\left(1-\frac{8\widehat{B}}{p}\right), \frac{p}{4}\right]$ and $\Phi^0=$ $f\left(x^0\right)-f^*+\frac{2\gamma}{p}\left\|g^0-\nabla f\left(x^0\right)\right\|^2$.
\end{theorem}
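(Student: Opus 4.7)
My plan is to follow the same Lyapunov-function strategy used to prove Theorem~\ref{them:2}, but replacing the general descent-lemma input with Lemma~\ref{lemma:final_lemma_Q}, which is tailored to the bounded-compressor setting and gives a stronger $(1-\nicefrac{p}{2})$ contraction (instead of $(1-\nicefrac{p}{4})$). Accordingly, I choose the Lyapunov function to match the stated initial condition, namely
$$
\Phi^k \;\eqdef\; f(x^k)-f^* + \tfrac{2\gamma}{p}\,\|g^k-\nabla f(x^k)\|^2,
$$
so that its coefficient $\tfrac{2\gamma}{p}$ is exactly half of the $\tfrac{4\gamma}{p}$ used in Theorem~\ref{them:2} (the halving compensates for the stronger contraction in Lemma~\ref{lemma:final_lemma_Q}).

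First, I would invoke Lemma~\ref{lemma:page} to obtain the usual descent inequality on $f(x^{k+1})$, and then take expectation and add $\tfrac{2\gamma}{p}\,\mathbb{E}[\|g^{k+1}-\nabla f(x^{k+1})\|^2]$ to both sides, bounding the right-hand side via Lemma~\ref{lemma:final_lemma_Q}. Collecting terms gives an inequality of the form
$$
\mathbb{E}[\Phi^{k+1}] \;\leq\; \mathbb{E}[f(x^k)-f^*] + \tfrac{2\gamma}{p}\bigl(1-\tfrac{p}{4}\bigr)\,\mathbb{E}\|g^k-\nabla f(x^k)\|^2 + \tfrac{2\widehat{D}\zeta^2\gamma}{p} + R_x + R_\nabla,
$$
where $R_x \eqdef \tfrac{1}{2\gamma}(1-L\gamma-2A\gamma^2)\mathbb{E}\|x^{k+1}-x^k\|^2$ comes from combining the descent lemma's negative $\|x^{k+1}-x^k\|^2$ term with the $\tfrac{pA}{4}$ term from Lemma~\ref{lemma:final_lemma_Q}, and $R_\nabla \eqdef -\tfrac{\gamma}{2}(1-\tfrac{8\widehat{B}}{p})\mathbb{E}\|\nabla f(x^k)\|^2$ collects the $\widehat{B}$ terms. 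The key algebraic check is that $\tfrac{\gamma}{2}+\tfrac{2\gamma}{p}(1-\tfrac{p}{2}) = \tfrac{2\gamma}{p}(1-\tfrac{p}{4})$, which is exactly what yields the $(1-\tfrac{p}{4})$ contraction and explains why $\rho$ is capped at $\tfrac{p}{4}$.

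Next, I would use the stepsize assumption $\gamma\leq 1/(L+\sqrt{2A})$ together with Lemma~\ref{lemma:peter} (applied with $a=2A$, $b=L$) to conclude $1-L\gamma-2A\gamma^2\geq 0$, hence $R_x\leq 0$, and combine with the assumption $8\widehat{B}<p$ to ensure the coefficient of $\mathbb{E}\|\nabla f(x^k)\|^2$ is negative. At this point, applying Assumption~\ref{assm:PL} converts $-\tfrac{\gamma}{2}(1-\tfrac{8\widehat{B}}{p})\|\nabla f(x^k)\|^2$ into $-\gamma\mu(1-\tfrac{8\widehat{B}}{p})(f(x^k)-f^*)$, yielding the one-step recursion
$$
\mathbb{E}[\Phi^{k+1}] \;\leq\; (1-\rho)\,\mathbb{E}[\Phi^k] + \tfrac{2\widehat{D}\zeta^2\gamma}{p}
$$
with $\rho=\min\{\gamma\mu(1-\tfrac{8\widehat{B}}{p}),\,\tfrac{p}{4}\}$.

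Finally, unrolling this geometric recurrence and summing the resulting geometric series $\sum_{k=0}^{\infty}(1-\rho)^k=\tfrac{1}{\rho}$, then using $\Phi^k \geq f(x^k)-f^*$, gives the claimed bound. The only genuinely delicate step is the coefficient bookkeeping in the combination of Lemmas~\ref{lemma:page} and~\ref{lemma:final_lemma_Q}, specifically verifying that the halved Lyapunov coefficient $\tfrac{2\gamma}{p}$ (vs.\ $\tfrac{4\gamma}{p}$ in the general case) correctly interacts with the halved residual $1-\tfrac{p}{2}$ (vs.\ $1-\tfrac{p}{4}$) to produce the matching $\tfrac{p}{4}$ contraction and the stated $\gamma\leq 1/(L+\sqrt{2A})$ stepsize ceiling; beyond this, the argument is a direct transcription of the proof of Theorem~\ref{them:2}.
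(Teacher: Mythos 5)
Your proposal is correct and is essentially the paper's own proof: the paper simply states that this result is "analogous to the proof of Theorem~\ref{them:2}," and your argument — Lemma~\ref{lemma:page} plus Lemma~\ref{lemma:final_lemma_Q} fed into the Lyapunov function $\Phi^k=f(x^k)-f^*+\frac{2\gamma}{p}\|g^k-\nabla f(x^k)\|^2$, with the check $\frac{\gamma}{2}+\frac{2\gamma}{p}(1-\frac{p}{2})=\frac{2\gamma}{p}(1-\frac{p}{4})$ giving the $\frac{p}{4}$ cap — is exactly that adaptation. Your coefficient bookkeeping is if anything slightly conservative (the $\|x^{k+1}-x^k\|^2$ term actually carries $A\gamma^2$ rather than $2A\gamma^2$, and the gradient-norm coefficient is $-\frac{\gamma}{2}(1-\frac{4\widehat B}{p})\le-\frac{\gamma}{2}(1-\frac{8\widehat B}{p})$), which is consistent with the theorem's stated stepsize and $\rho$.
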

\begin{proof}
    The proof is analogous to the proof of Theorem~\ref{them:2}.
\end{proof}

\subsection{On the Technical Non-Triviality of the Analysis}\label{appendix:technical_challenges}

As we explain in the main part of the paper, the main reason why we propose to use clipping is to handle the situations when Byzantine workers form a majority during some communication rounds since the existing approaches are vulnerable to such scenarios. However, the introduction of the clipping does not come for free: if the clipping level is too small, clipping can create a noticeable bias to the updates. Because of this issue, existing works such as \citep{zhang2020adaptive, gorbunov2020stochastic} use non-trivial policies for the choice of the clipping level, and the analysis in these works differs significantly from the existing analysis for the methods without clipping. The analysis of \algname{Byz-VR-MARINA} is based on the unbiasedness of vectors $\mathcal{Q}(\hat \Delta_i(x^{k+1}, x^k))$, i.e., on the following identity: $\mathbb{E}[\mathcal{Q}(\hat \Delta_i(x^{k+1}, x^k)) \mid x^{k+1}, x^k] = \Delta_i(x^{k+1}, x^k) = \nabla f_i(x^{k+1}) - \nabla f_i(x^k)$. Since $\mathbb{E}[\clip_{\lambda_{k+1}}(\mathcal{Q}(\hat \Delta_i(x^{k+1}, x^k))) \mid x^{k+1}, x^k] \neq \nabla f_i(x^{k+1}) - \nabla f_i(x^k)$ in general, to analyze \algname{Byz-VR-MARINA-PP} we also use a special choice of the clipping level: $\lambda_{k+1} = \alpha_{k+1} \|x^{k+1} - x^k\|$. To illustrate the main reasons for that, let us consider the case of uncompressed communication ($\mathcal{Q}(x) \equiv x$). In this setup, for large enough $\alpha_{k+1}$ we have $\clip_{\lambda_{k+1}}\hat \Delta_i(x^{k+1}, x^k) = \hat \Delta_i(x^{k+1}, x^k)$ for all $i\in \mathcal{G}$ (due to Assumption~\ref{assm:smoothness_simplified}), which allows us using a similar proof to the one for Byz-VR-MARINA when good workers form a majority in a round. Moreover, when Byzantine workers form a majority, our choice of the clipping level allows us to bound the second moment of the shift from the Byzantine workers as $\sim \| x^{k+1} - x^k \|^2$ (see Lemmas~\ref{lemma:premainA2} and \ref{lemma:bad_aggr}), i.e., the second moment of the shift is of the same scale as the variance of $\lbrace g_i \rbrace_{i\in \mathcal{G}}$, which goes to zero. Next, to properly analyze these two situations, we overcame another technical challenge related to the estimation of the conditional expectations and probabilities of corresponding events (see Lemmas \ref{lemma:premainA2} and \ref{lemma:full_aggr} and formulas for $p_G$ and $\mathcal{P}_{\mathcal{G}_C^k}$ at the beginning of Section~\ref{section:convergence_results}). In particular, the derivation of formula \eqref{eq:bvdjbjdfbvjdf} is quite non-standard for stochastic optimization literature: there are two sources of stochasticity – one comes from the sampling of clients, and the other one comes from the sampling of stochastic gradients and compression. This leads to the estimation of variance of the average of the random number of random vectors, which is novel on its own. In addition, when the compression operator is used, the analysis becomes even more involved since one cannot directly apply the main property of unbiased compression (Definition~\ref{def:Q}), and we use Lemma~\ref{lemma:clipping} in the proof to address this issue. It is also worth mentioning that in contrast to \algname{Byz-VR-MARINA}, our method does not require full participation even with a small probability $p$. Instead, it is sufficient for \algname{Byz-VR-MARINA-PP} to sample a large enough cohort of $\widehat{C}$ clients with probability $p$ to ensure that Byzantine workers form a minority in such rounds.

\clearpage

\section{Experimental Details and Extra Experiments}
\label{app:experiments}

\subsection{Experimental Details}

For each experiment, we tune the step size using the following set of candidates $\{0.1, 0.01, 0.001\}$. The step size is fixed. We do not use learning rate warmup or decay. We use batches of size $32$ for all methods. For partial participation, in each round, we sample $20 \%$ of clients uniformly at random.  For $\lambda_k = \lambda \|x^k - x^{k-1}\|$ used for clipping, we select $\lambda$ from $\{0.1, 1., 10.\}$. Each experiment is run with three varying random seeds, and we report the mean optimality gap with one standard error. The optimal value is obtained by running gradient descent (\algname{GD}) on the complete dataset for 1000 epochs. Our implementation of attacks and robust aggregation schemes is based on the public implementation from \citep{gorbunov2023variance}. %

\subsection{Extra Experiments}

Below we provide the missing neural network experiments from the main paper. We consider the MNIST dataset~\citep{mnist} with heterogeneous splits (as in \citep{karimireddy2021learning}) with 20 clients, 5 of which are malicious. For the attacks, we consider A Little is Enough (ALIE)~\citep{baruch2019little}, Bit Flipping (BF), Label Flipping (LF), and aforementioned Shift-Back (SHB). For the aggregations, we consider coordinate median (CM)~\citep{chen2017distributed} and robust federated averaging (RFA)~\citep{pillutla2022robust} with bucketing. 

\begin{figure}[H]
\centering
\includegraphics[width=0.245\textwidth]{figures/MNIST_non_iid_comp=none_agg=cm_attack_BF_clip_sensitivity.pdf}
\includegraphics[width=0.245\textwidth]{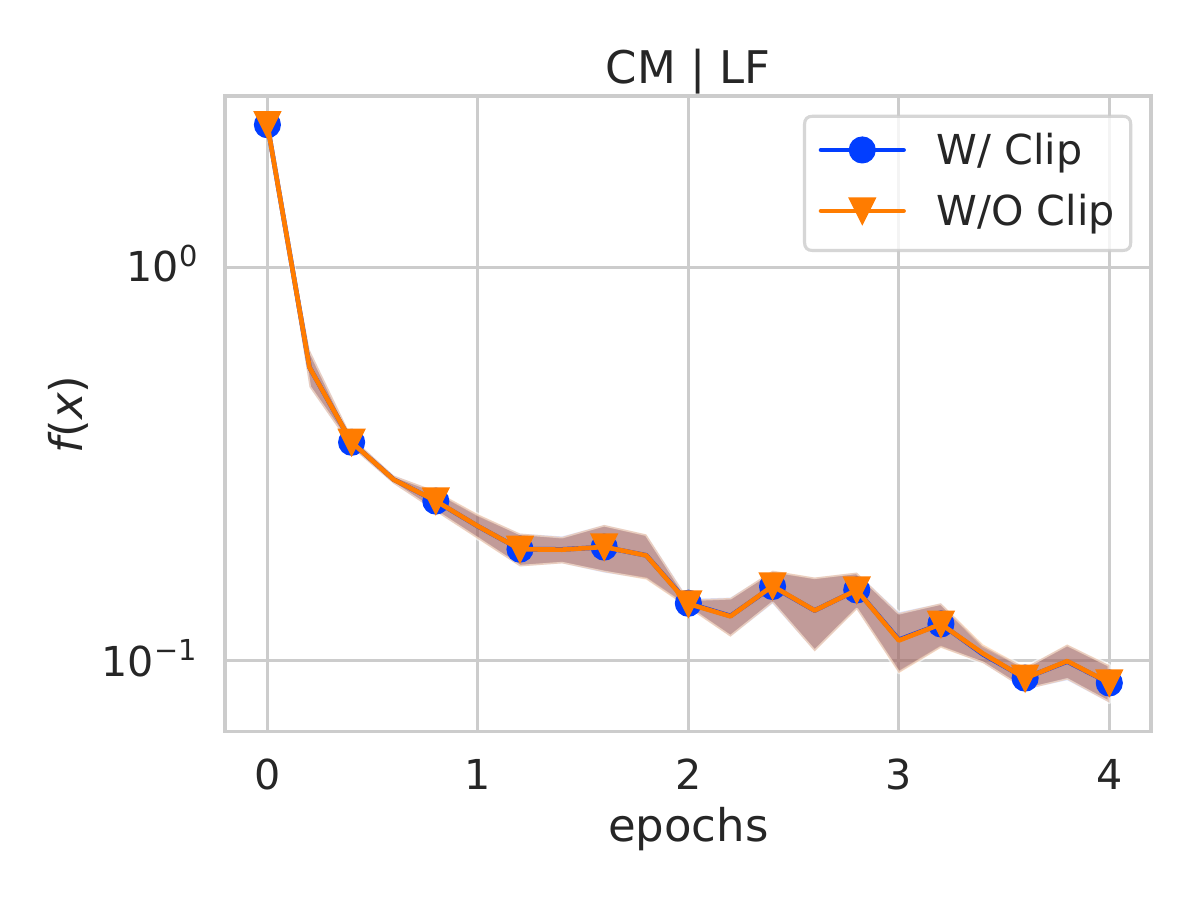}
\includegraphics[width=0.245\textwidth]{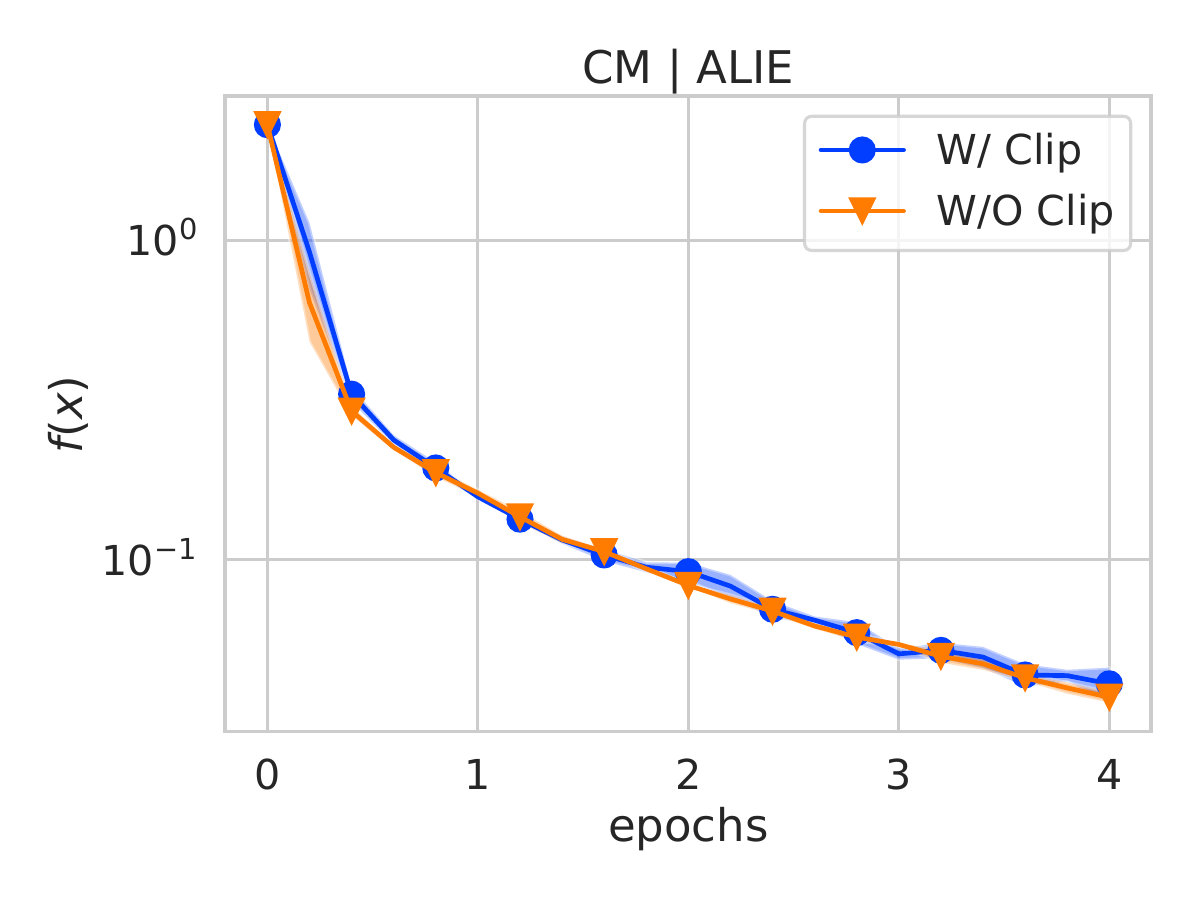}
\includegraphics[width=0.245\textwidth]{figures/MNIST_non_iid_comp=none_agg=cm_attack_SHB_clip_sensitivity.pdf}
\hfill
\includegraphics[width=0.245\textwidth]{figures/MNIST_non_iid_comp=none_agg=rfa_attack_BF_clip_sensitivity.pdf}
\includegraphics[width=0.245\textwidth]{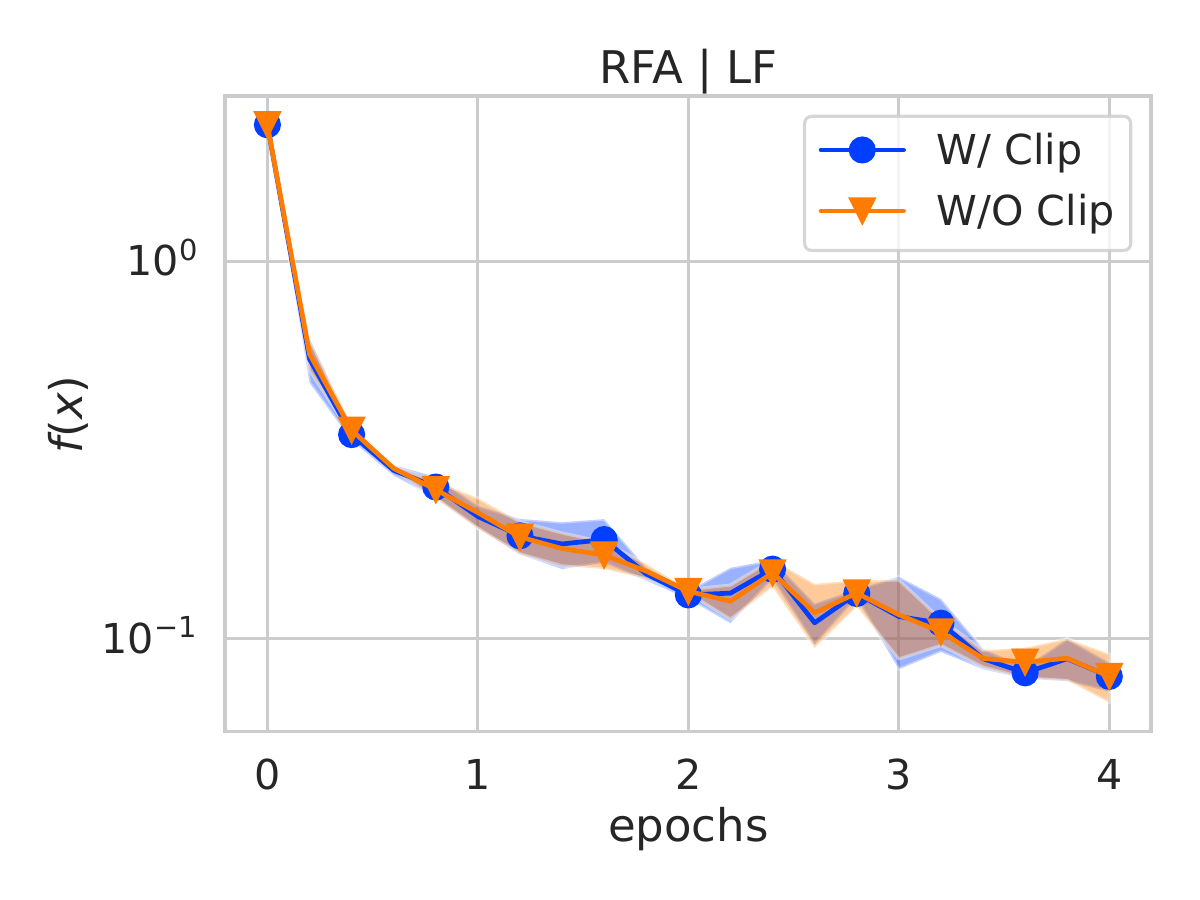}
\includegraphics[width=0.245\textwidth]{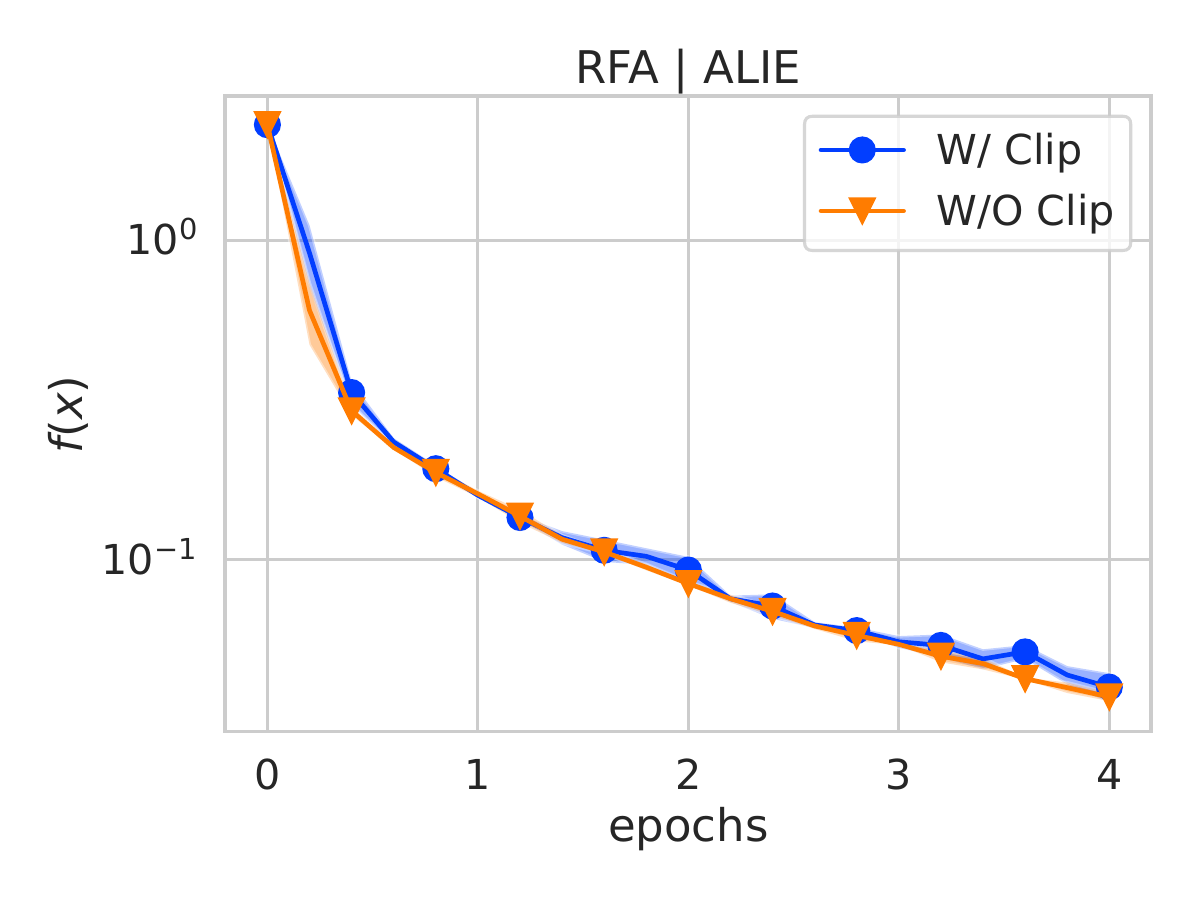}
\includegraphics[width=0.245\textwidth]{figures/MNIST_non_iid_comp=none_agg=rfa_attack_SHB_clip_sensitivity.pdf}
\caption{
Training loss of 2 aggregation rules (CM, RFA) under 4 attacks (BF, LF, ALIE, SHB) on the MNIST dataset under heterogeneous data split with 20 clients,  5 of which are malicious. } 
\label{fig:nn_app}
\vspace{-1.5em}
\end{figure}

\begin{figure}[H]
\centering
\includegraphics[width=0.245\textwidth]{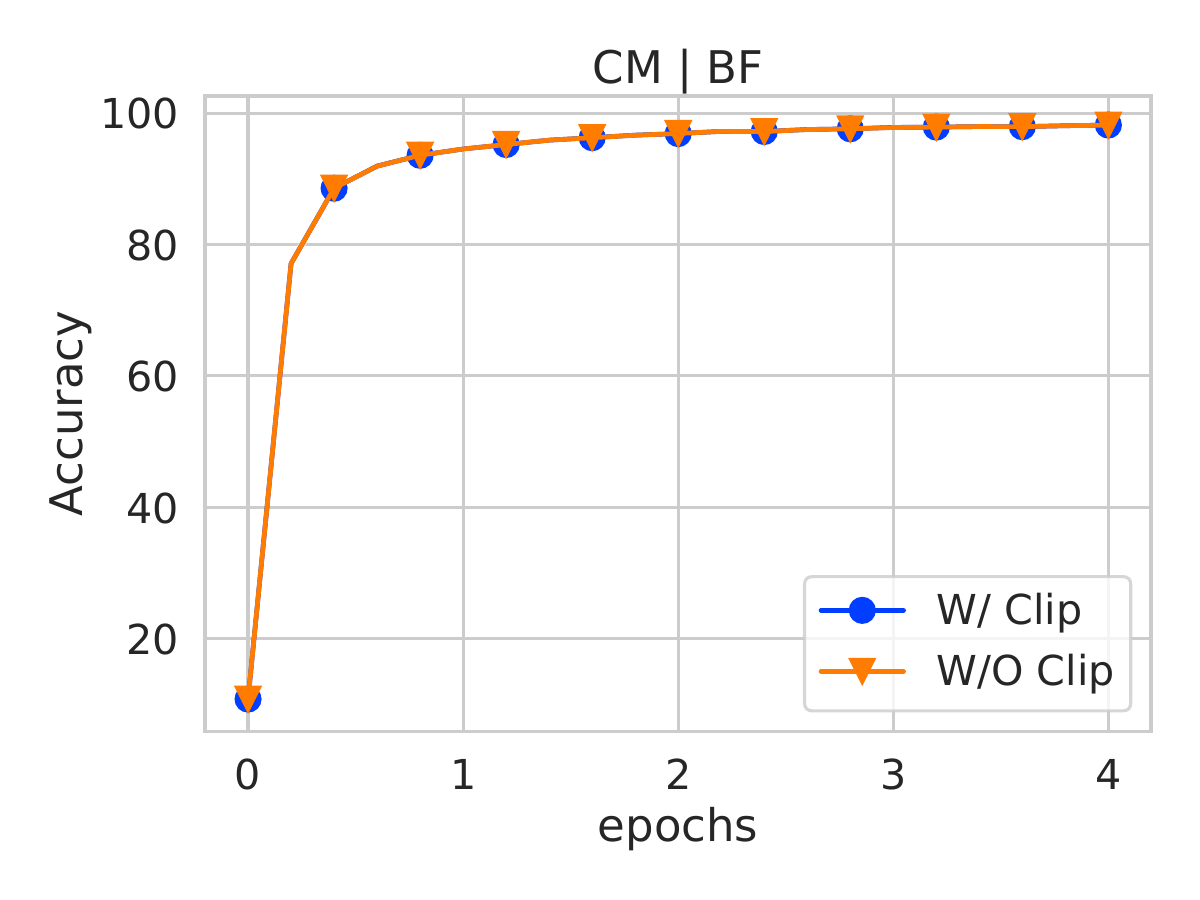}
\includegraphics[width=0.245\textwidth]{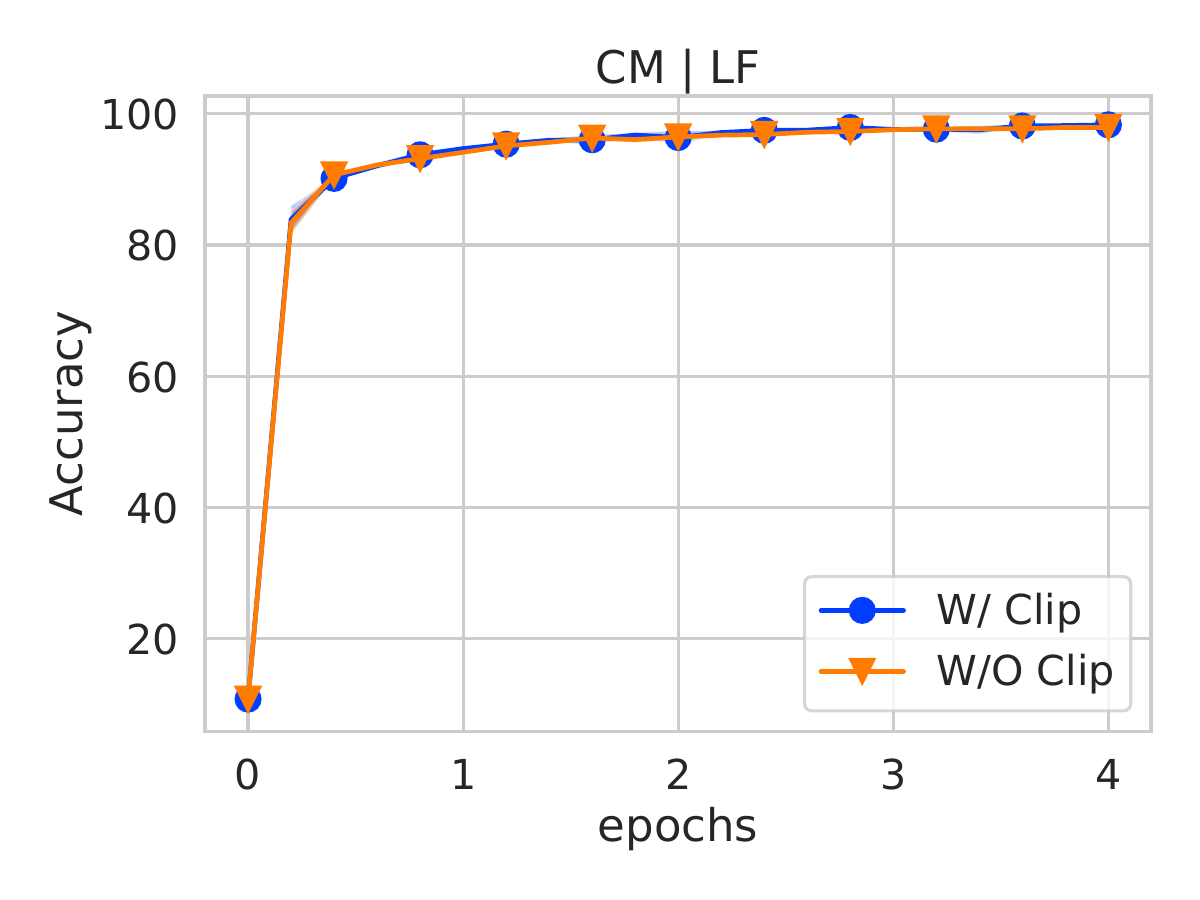}
\includegraphics[width=0.245\textwidth]{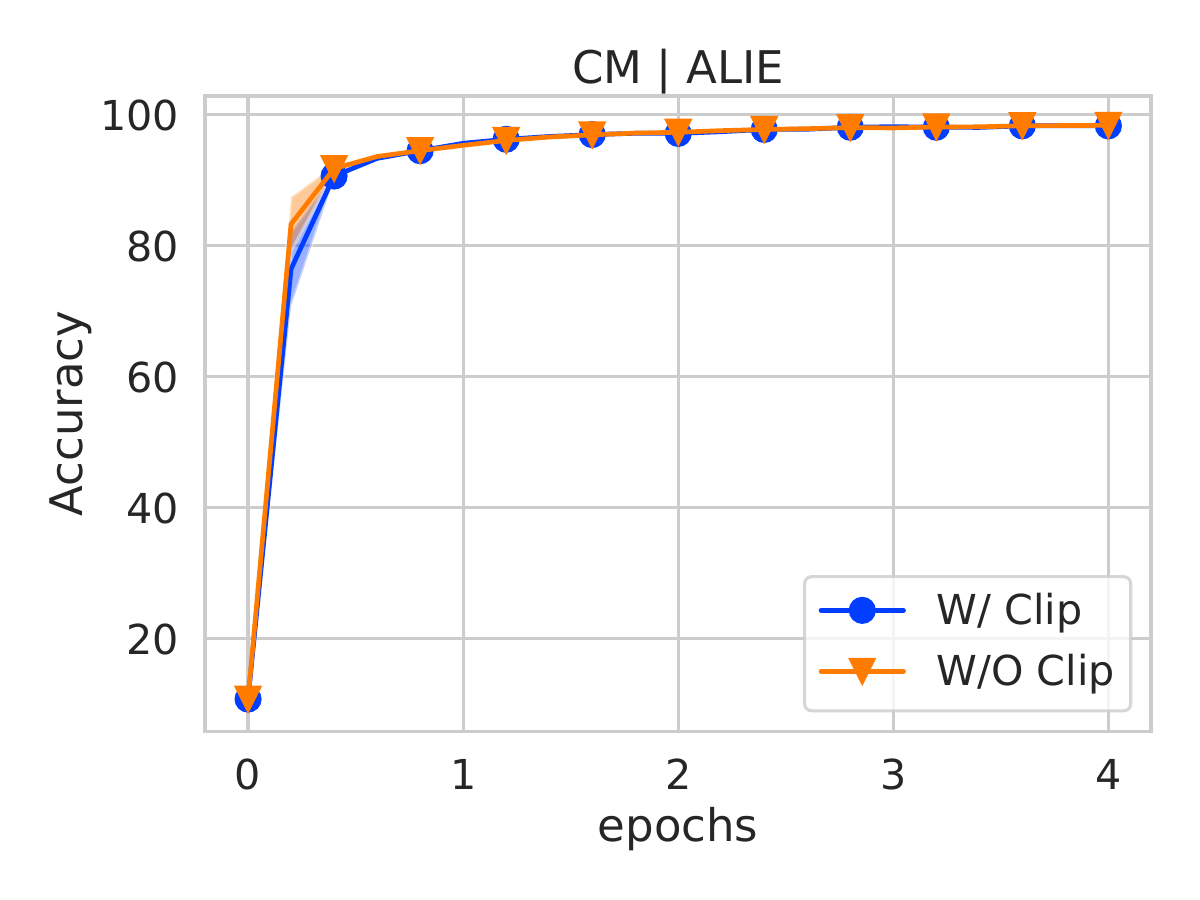}
\includegraphics[width=0.245\textwidth]{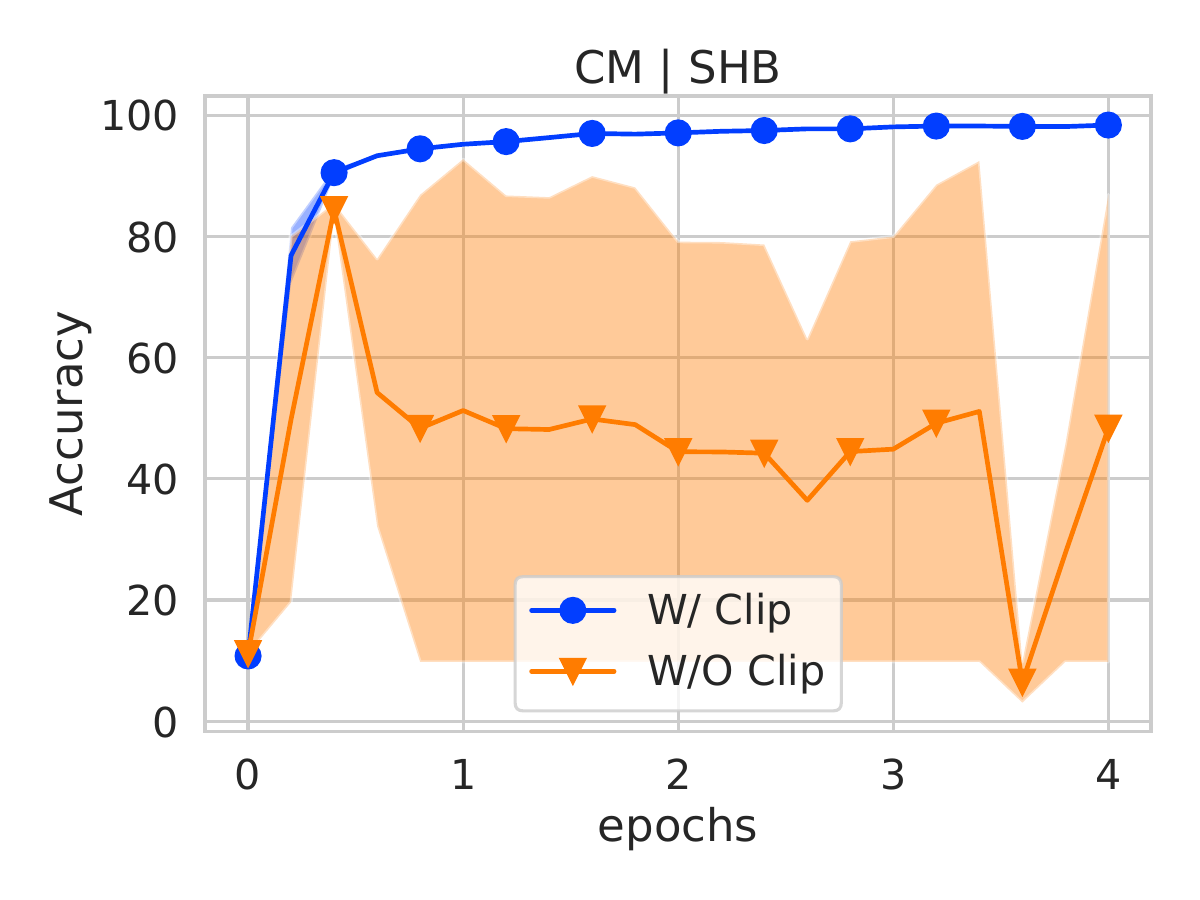}
\hfill
\includegraphics[width=0.245\textwidth]{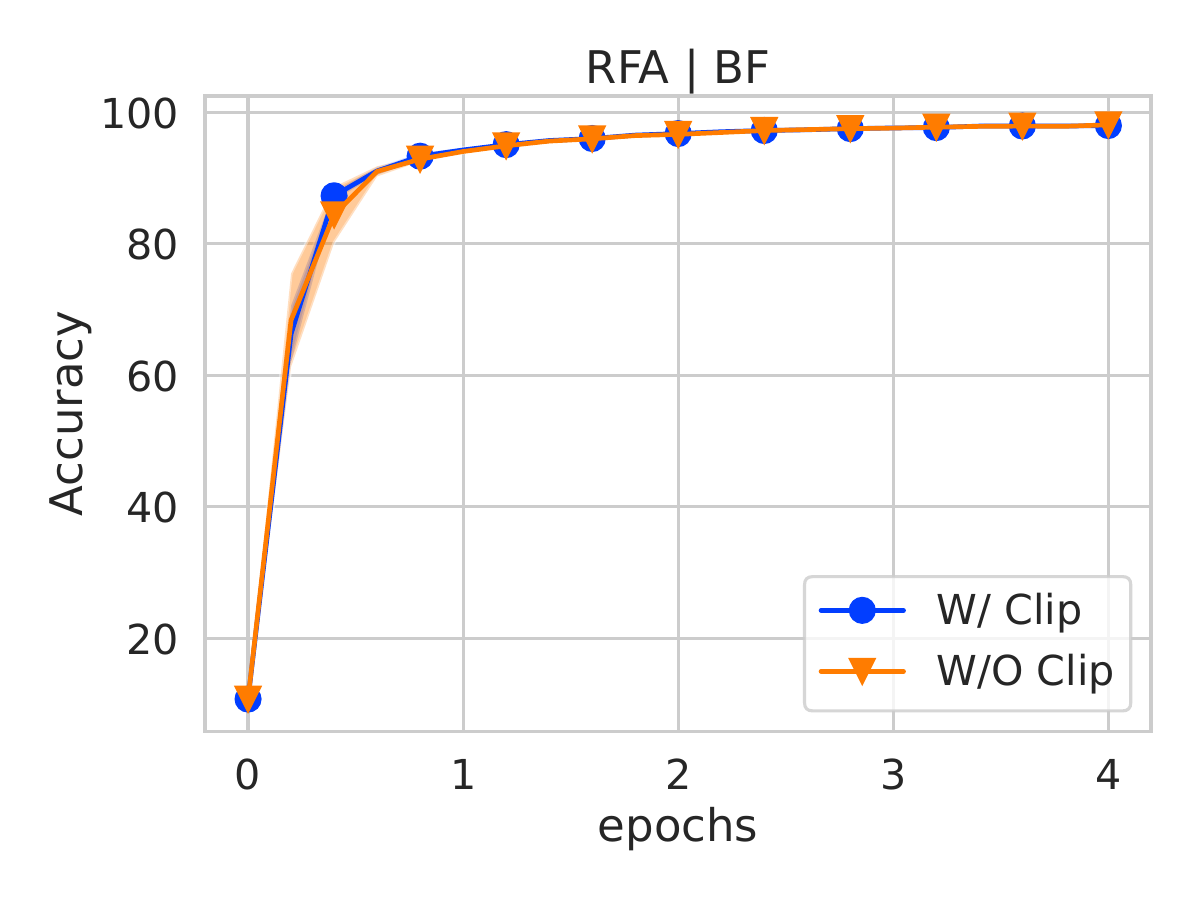}
\includegraphics[width=0.245\textwidth]{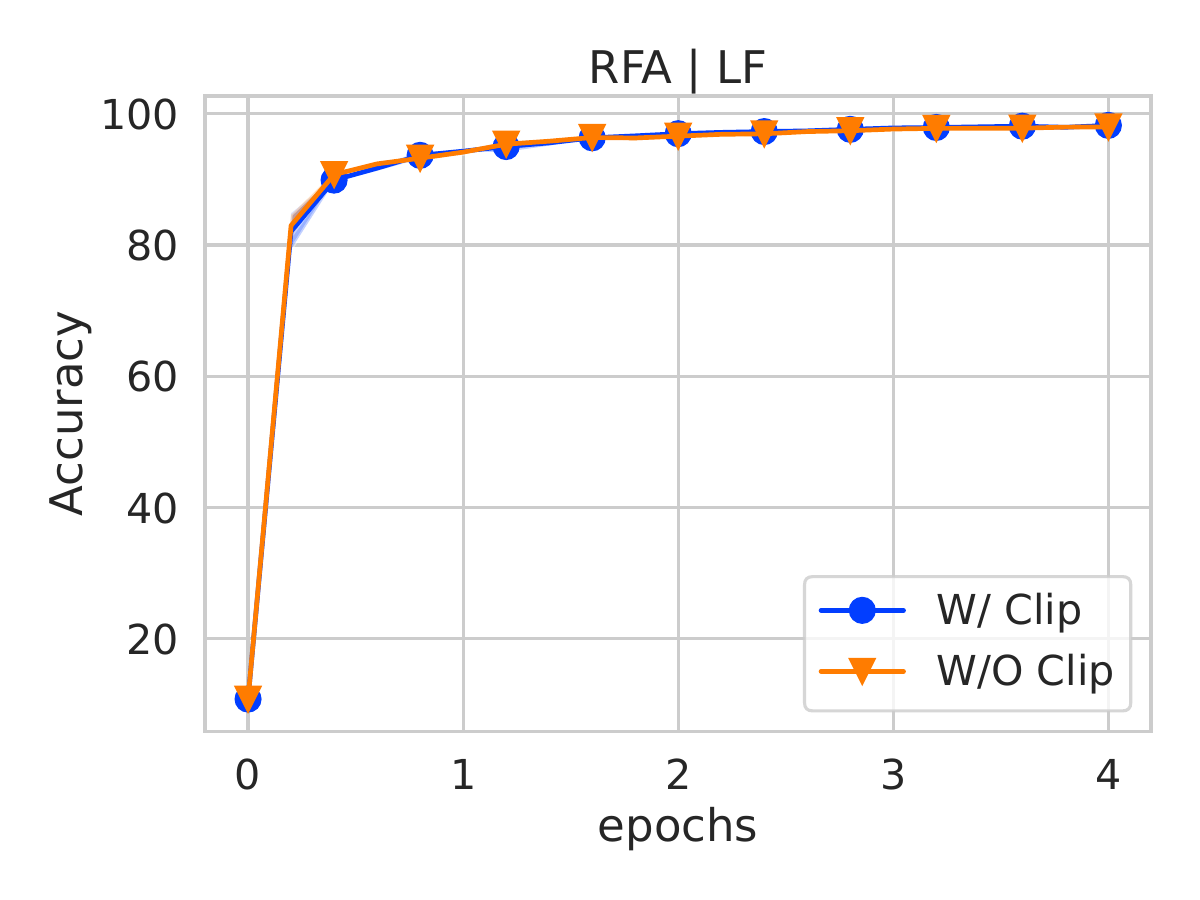}
\includegraphics[width=0.245\textwidth]{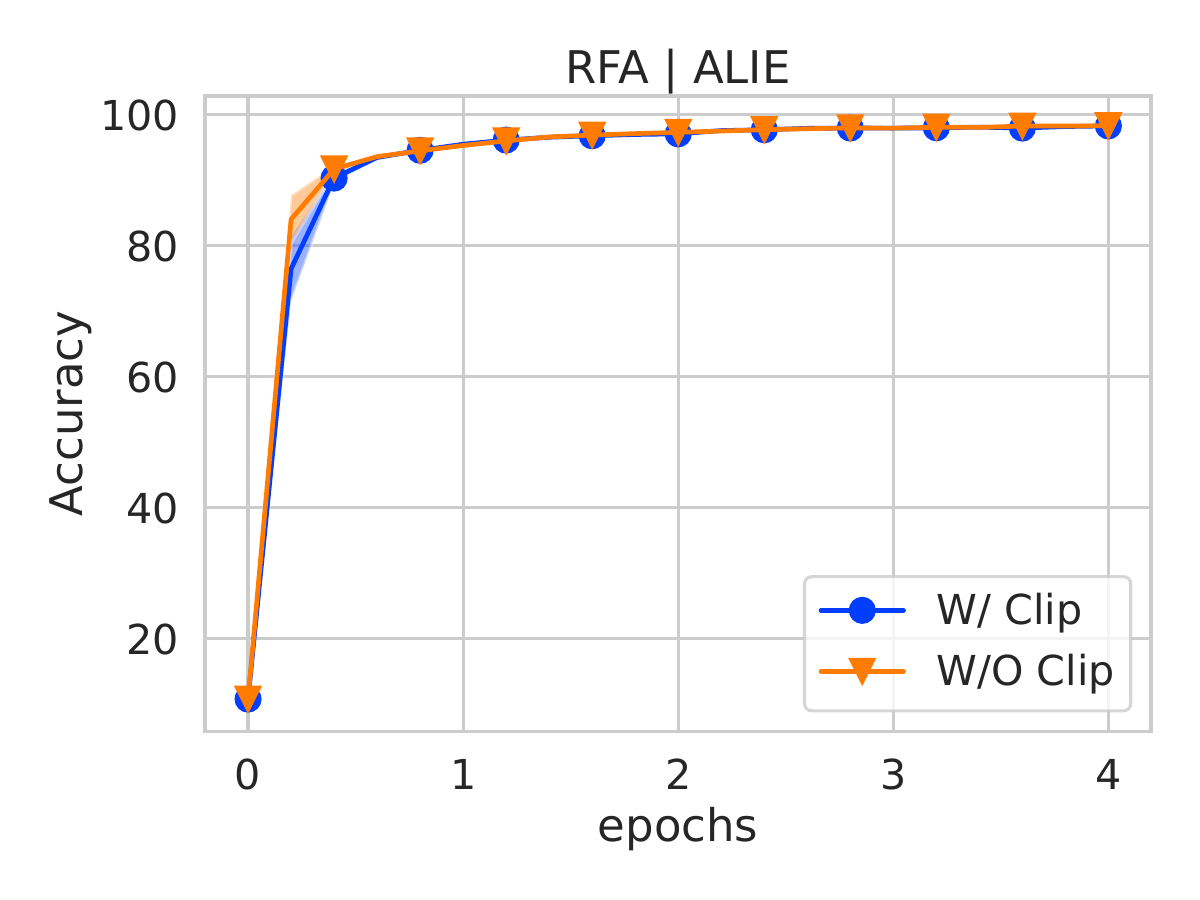}
\includegraphics[width=0.245\textwidth]{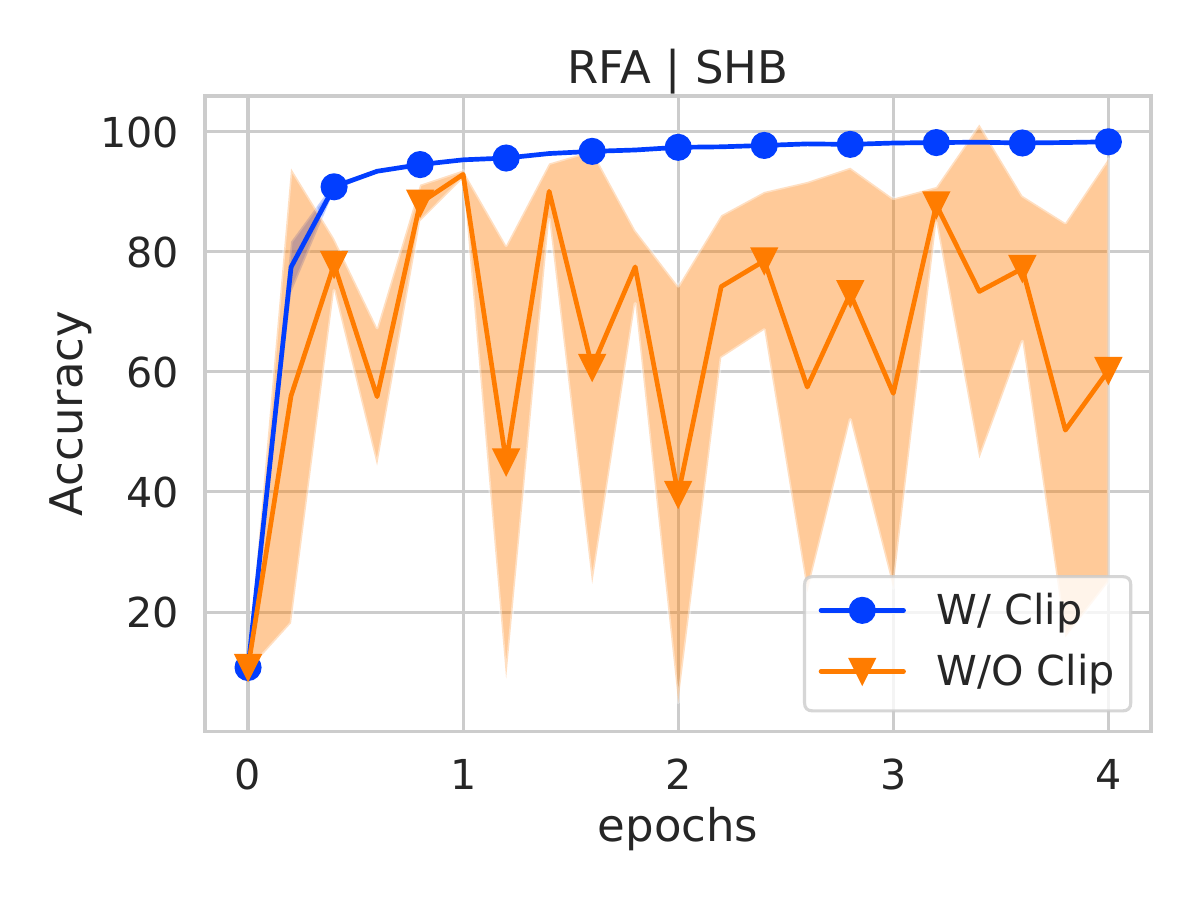}
\caption{
Testing accuracy of 2 aggregation rules (CM, RFA) under 4 attacks (BF, LF, ALIE, SHB) on the MNIST dataset under heterogeneous data split with 20 clients,  5 of which are malicious. } 
\label{fig:nn_app_ta}
\vspace{-1.5em}
\end{figure}

\end{document}